\newtheorem{theorem}{Theorem}
\newtheorem{proposition}{Proposition}
\newtheorem{lemma}{Lemma}
\newtheorem{remark}{Remark}
\newtheorem{assumption}{Assumption}
\newcommand{\p}[1]{\left( #1 \right)}
\newcommand{\pcc}[1]{\left[ #1 \right]}
\newcommand{\set}[1]{\left\lbrace#1\right\rbrace}
\newcommand{\abs}[1]{\left| #1 \right|}
\newcommand{\reals}{\mathbb{R}}
\newcommand{\E}{\mathbb{E}}
\newcommand{\bx}{\mathbf{x}}
\newcommand{\Ocal}{\mathcal{O}}
\newcommand{\Xcal}{\mathcal{X}}
\newcommand{\norm}[1]{\|#1\|}
\newcommand{\secref}[1]{Sec.~\ref{#1}}
\renewcommand{\eqref}[1]{Eq.~(\ref{#1})}
\newcommand{\lemref}[1]{Lemma~\ref{#1}}
\newcommand{\thmref}[1]{Thm.~\ref{#1}}
\newcommand{\propref}[1]{Proposition~\ref{#1}}
\title{How Good is SGD with Random Shuffling?}
\author{Itay Safran and Ohad Shamir\\
	Weizmann Institute of Science\\
		\texttt{\{itay.safran,ohad.shamir\}@weizmann.ac.il}}
\date{}
\begin{document}

\maketitle 

\begin{abstract}
We study the performance of stochastic gradient descent (SGD) on smooth and 
strongly-convex finite-sum optimization problems. In contrast to the majority 
of existing theoretical works, which assume that individual functions are 
sampled with replacement, we focus here on popular but poorly-understood 
heuristics, which involve going over random permutations of the individual 
functions. This setting has been investigated in several recent works, but the 
optimal error rates remain unclear. In this paper, we provide lower bounds on 
the expected optimization error with these heuristics (using SGD with any 
constant step size), which elucidate their advantages and disadvantages. In 
particular, we prove that after $k$ passes over $n$ individual functions, if 
the functions are re-shuffled after every pass, the best possible optimization 
error for SGD is at least $\Omega\left(1/(nk)^2+1/nk^3\right)$, which partially 
corresponds to recently derived upper bounds. Moreover, if the functions are only shuffled once, then the lower bound 
increases to $\Omega(1/nk^2)$. Since there are strictly smaller upper bounds 
for repeated reshuffling, this proves an inherent performance gap between SGD 
with single shuffling and repeated shuffling. As a more minor contribution, we 
also provide a non-asymptotic $\Omega(1/k^2)$ lower bound (independent of $n$) 
for the incremental gradient method, when no random shuffling takes place. Finally, we provide an indication that our lower bounds are tight, by proving matching upper bounds for univariate quadratic functions.
\end{abstract}

\section{Introduction}

We consider variants of stochastic gradient descent (SGD) for solving unconstrained finite-sum 
problems of the form
\begin{equation}\label{eq:problem}
\min_{\bx\in\Xcal} F(\bx)~=~\frac{1}{n}\sum_{i=1}^{n}f_i(\bx)~,
\end{equation}
where $\Xcal$ is some Euclidean space $\reals^d$ (or more generally some real 
Hilbert space), $F$ is a strongly convex function, and each individual function 
$f_i$ is smooth (with Lipschitz gradients) and Lipschitz on a bounded domain. 
Such problems are extremely common in machine learning applications, which 
often boil down to minimizing the average loss over $n$ data points with 
respect to a class of predictors parameterized by a vector $\bx$. When $n$ is 
large, perhaps the most common approach to solve such problems is via 
stochastic gradient descent, which initializes at some point in $\Xcal$ and 
involves 
iterations of the form $\bx':=\bx-\eta \nabla f_{i}(\bx)$, where $\eta$ is a step size parameter and
$i\in \{1,\ldots,n\}$. The majority of existing theoretical works assume 
that each $i$ is sampled independently across iterations (also known as 
with-replacement sampling). For example, if it is 
chosen independently and uniformly at random from $\{1,\ldots,n\}$, then 
$\E_i[\nabla f_{i}(\bx)|\bx]=\nabla F(\bx)$, so the algorithm can be seen 
as a noisy version of exact gradient descent on $F$ (with iterations of the 
form $\bx':=\bx-\eta\nabla F(\bx)$), which greatly facilitates its analysis. 

However, this straightforward sampling approach suffers from practical 
drawbacks, such 
as requiring truly random data access and hence longer runtime. In practice, it 
is quite common to use \emph{without-replacement} sampling heuristics, which 
utilize the individual functions in some random or even 
deterministic order (see for example 
\cite{bottou2009curiously,bottou2012stochastic,nedic2001convergence,recht2012beneath,
shalev2013stochastic,bertsekas2015convex,feng2012towards}). Moreover, to get 
sufficiently high 
accuracy, it 
is common to perform several passes over the data, where each pass either uses 
the same order as the previous one, or some new random order. The different 
algorithmic variants we study in this paper are presented as Algorithms 
\ref{alg:SGDRR} to 
\ref{alg:SGD} below. We assume that all algorithms take as input the functions $f_1,\ldots,f_n$, a step size parameter $\eta>0$ (which remains constant throughout the iterations), and an initialization point $\bx_0$. The algorithms then perform $k$ passes (which we will also refer to as epochs) over the individual functions, but differ in their sampling strategies:
\begin{itemize}
	\item  Algorithm \ref{alg:SGDRR} (SGD with random reshuffling) chooses a 
	new permutation of the functions at the beginning of every epoch, and 
	processes the 
	individual functions in that order.
	\item Algorithm \ref{alg:SGDRS} (SGD with single shuffling) uses the same 
	random permutation for all $k$ epochs.
	\item Algorithm \ref{alg:SGDC} (usually referred to as the incremental gradient method, see \cite{bertsekas2015convex}) performs $k$ passes over the individual functions, each in the same fixed order (which we will assume without loss of generality to be the canonical order $f_1,\ldots,f_n$)
\end{itemize}
In contrast, Algorithm \ref{alg:SGD} presents SGD using with-replacement 
sampling, where each iteration an individual function is chosen uniformly and 
independently. To facilitate our analysis, we let $\bx_t$ in the pseudocode denote the iterate at the end of epoch $t$.

\begin{minipage}{0.45\textwidth}
	\begin{algorithm}[H]
		\caption{SGD with Random Reshuffling}
		\begin{algorithmic}
			\vskip 0.1cm
			\STATE{$\bx:=\bx_0$}
			\FOR{$t=1,\ldots,k$}
			\STATE{Sample a permutation $\sigma(1),\ldots,\sigma(n)$ of 
			$\{1,\ldots,n\}$ 
				uniformly at random}
			\FOR{$j=1,\ldots,n$}
			\STATE{$\bx:=\bx-\eta\nabla f_{\sigma(j)}(\bx)$}
			\ENDFOR
			\STATE{$\bx_t:=\bx$}
			\ENDFOR
		\end{algorithmic}
		\label{alg:SGDRR}
	\end{algorithm}
\end{minipage}%
\hfill
\begin{minipage}{0.45\textwidth}
	\begin{algorithm}[H]
		\caption{SGD with Single Shuffling}
		\begin{algorithmic}
			\vskip 0.1cm
			\STATE{$\bx:=\bx_0$}
			\STATE{Sample a permutation $\sigma(1),\ldots,\sigma(n)$ of 
			$\{1,\ldots,n\}$ 
				uniformly at random}
			\FOR{$t=1,\ldots,k$}
			\FOR{$j=1,\ldots,n$}
			\STATE{$\bx:=\bx-\eta\nabla f_{\sigma(j)}(\bx)$}
			\ENDFOR
			\STATE{$\bx_t:=\bx$}
			\ENDFOR
		\end{algorithmic}
		\label{alg:SGDRS}
	\end{algorithm}
\end{minipage}

\begin{minipage}{0.45\textwidth}
	\begin{algorithm}[H]
		\caption{Incremental Gradient Method}
		\begin{algorithmic}
			\vskip 0.1cm
			\STATE{$\bx:=\bx_0$}
			\FOR{$t=1,\ldots,k$}
			\FOR{$j=1,\ldots,n$}
			\STATE{$\bx:=\bx-\eta\nabla f_{j}(\bx)$}
			\ENDFOR
			\STATE{$\bx_t:=\bx$}
			\ENDFOR
			\STATE{}	
		\end{algorithmic}
		\label{alg:SGDC}
	\end{algorithm}
\end{minipage}%
\hfill
\begin{minipage}{0.45\textwidth}
	\begin{algorithm}[H]
		\caption{SGD with Replacement}
		\begin{algorithmic}
			\vskip 0.1cm
			\STATE{$\bx:=\bx_0$}
			\FOR{$t=1,\ldots,k$}
			\FOR{$j=1,\ldots,n$}
			\STATE{Sample $i\in \{1, \ldots, n\}$ uniformly}
			\STATE{$\bx:=\bx-\eta\nabla f_i(\bx)$}
			\ENDFOR
			\STATE{$\bx_t:=\bx$}
			\ENDFOR
		\end{algorithmic}
		\label{alg:SGD}
	\end{algorithm}
\end{minipage}
\vskip 0.5cm

\begin{table}
	\begin{center}
		\bgroup
		\def\arraystretch{1.5}
		\begin{tabular}{|c||c|c|c|c|}
			\hline
			&Random Reshuffling & Single Shuffling & Incremental & With 
			Replacement\\\hline\hline
			Upper & $1/k^2$~~\footnotesize \cite{gurbuzbalaban2015random}
			&  $1/k^2$~~\footnotesize \cite{gurbuzbalaban2015convergence} & 
			$1/k^2$
			~~\footnotesize	\cite{gurbuzbalaban2015convergence}& $1/nk$\\
			Bound & $1/n$~~\footnotesize (for $k=1$)~~ 
			\cite{shamir2016without} &
			$1/n$~~\footnotesize (for $k=1$)~~ 
			\cite{shamir2016without}
			&&\\
			& $1/(nk)^2+1/k^3$~~\footnotesize \cite{haochen2018random} & $ \boldsymbol{1/nk^2}$~~\footnotesize (for $1d$ quad.)
			& 
			&\\
			& $1/nk^2$~~\footnotesize \cite{jain2019sgd} & 
			& 
			&\\
			&$ \boldsymbol{1/(nk)^2+1/nk^3}$ ~~\footnotesize (for $1d$ quad.) 
			&
			&
			&\\\hline
			Lower & $1/n$~~\footnotesize (for $k=1$)~~ \cite{haochen2018random} 
			& 
			$\boldsymbol{1/nk^2}$  & 
			$1/k^2$ ~~\footnotesize (\cite{gurbuzbalaban2015convergence}, 
			asymptotic)& 
			$1/nk$\\
			Bound 
			&$\boldsymbol{1/(nk)^2+1/nk^3}$&&$\boldsymbol{1/k^2}$~~\footnotesize(non-asymptotic)&\\\hline
		\end{tabular}
		\egroup
	\end{center}
	
	\caption{Upper and lower bounds on the expected optimization error 
		$\E[F(\bx_k)-\inf_{\bx}F(\bx)]$ for constant-step-size SGD with various 
		sampling 
		strategies, after 
		$k$ passes over $n$ individual functions, in terms of $n,k$. Boldface 
		letters
		refer to new results in this paper. 
		We note that the upper bound of 
		\cite{haochen2018random} additionally requires that the Hessian of each 
		$f_i$ 
		is Lipschitz, and the upper bounds of \cite{haochen2018random} and
		\cite{jain2019sgd} require $k$ to be larger than a problem-dependent 
		parameter 
		(depending for example on the condition number). Also, the upper bound 
		of 
		\cite{shamir2016without} requires functions which are generalized 
		linear 
		functions. Our lower bounds apply under all such assumptions. As to our upper bounds, note that they apply only to univariate quadratic functions. Finally, we note that the upper bound of \cite{jain2019sgd} 
		is 
		actually not on the optimization error for $\bx_k$, but rather on a 
		certain 
		averaging of several iterates -- see Remark \ref{remark:applicability} 
		for a 
		further discussion.}
	\label{table:results}
\end{table}

These without-replacement sampling heuristics are often easier and faster to 
implement in practice. In addition, when using random permutations, they often 
exhibit faster error decay than with-replacement SGD \cite{bottou2009curiously}. A common intuitive 
explanation for this phenomenon is that random permutations force the algorithm 
to touch each individual function exactly once during each epoch, whereas 
with-replacement makes the algorithm touch each function once only in 
expectation. However, theoretically analyzing these sampling heuristics 
has proven to be very challenging, since the individual iterations are no 
longer statistically independent.

In the past few years, some progress has been made in this front, and we 
summarize the known results on the expected optimization error (or at least 
what these results imply\footnote{For example, some of these papers focus on 
bounding 
$\E[\norm{\bx_k-\bx^*}^2]$ where $\bx^*$ is the minimum of $F(\cdot)$, rather 
than the expected optimization error $\E[F(\bx_k)-F(\bx^*)]$. However, for 
strongly convex and smooth 
functions, $\norm{\bx_k-\bx^*}^2$ and $F(\bx_k)-F(\bx^*)$ are the same up to 
the 
strong convexity and smoothness parameters, see for example 
\cite{nesterov2018lectures}.}), as well as our new results, in Table 
\ref{table:results}. First, we note that for SGD with replacement, classical
results imply an optimization error of $\Ocal(1/nk)$ after $nk$ stochastic 
iterations, and this is known to be tight (see for example 
\cite{nemirovski2009robust}). 
For SGD with random reshuffling, better bounds have been shown in recent years, 
generally implying that when the number of epochs $k$ is sufficiently large, 
such sampling schemes are better than with-replacement sampling, with 
optimization error decaying as $1/k^2$ rather than $1/k$. However, 
the optimal dependencies on $n,k$ and other problem-dependent parameters remain 
unclear (HaoChen and Sra \cite{haochen2018random} show that for $k=1$, one 
cannot hope to achieve worst-case error smaller than $\Omega(1/n)$, but for 
$k>1$ not much is known). Some other recent theoretical works on SGD 
with random reshuffling (but under somewhat different settings) include 
\cite{recht2012beneath,ying2018stochastic}. For the incremental gradient method, an $\Ocal(1/k^2)$ 
upper bound was shown in \cite{gurbuzbalaban2015convergence}, as 
well as a matching asymptotic lower bound in terms of $k$. For SGD with single 
shuffling, we are actually not aware of a rigorous theoretical analysis. Thus, 
we only have the $\Ocal(1/k^2)$ upper bound trivially implied by the analysis for the incremental gradient method, 
and for $k=1$, the $\Ocal(1/n)$ upper bound implied by the analysis for random reshuffling 
(since in that case there is no distinction between single shuffling and 
random reshuffling). Indeed, for single shuffling, even 
different epochs are not statistically independent, which makes the analysis 
particularly challenging. 

In this paper, we focus on providing bounds on the expected optimization error of 
SGD with these sampling heuristics, which complement the existing upper bounds 
and provide further insights on the advantages and disadvantages of each. We 
focus on constant-step size SGD, as it simplifies our analysis, and existing 
upper bounds in the literature are derived in the same setting. Our 
contributions are as follows:
\begin{itemize}
	\item For SGD with random reshuffling, we provide in \secref{sec:SGDrandom} a lower bound of $\Omega(1/(nk)^2+1/nk^3)$. Interestingly, it seems to combine the ``best'' behaviors of previous upper bounds: It behaves as 
	$1/n$ for a small constant number $k$ of passes (which is optimal as 
	discussed above), interpolating to $\Ocal(1/(nk)^2)$ when $k$ is large 
	enough, and contains a term decaying cubically with $k$. Moreover, the proof construction applies already for univariate quadratics.
	\item For SGD with a single shuffling, we provide in \secref{sec:single_lbound} a lower bound of $\Omega(1/nk^2)$. Although we are not aware of a previous upper bound to compare 
	to, this lower bound already proves an inherent performance gap compared to 
	random reshuffling: Indeed, in the latter case there is an upper bound of 
	$\Ocal(1/(nk)^2+1/k^3)$, which is smaller than the $\Omega(1/nk^2)$ lower 
	bound for single shuffling when $k$ is sufficiently large. This implies 
	that the added computational effort of repeatedly reshuffling the functions 
	can provably pay off in terms of the optimization error.
	\item For the incremental gradient method, we provide in \secref{sec:cyclic} an $\Omega(1/k^2)$ lower 
	bound. 	We note that a similar bound (at 
	least asymptotically and for a certain $n$) is already implied by  
	\cite[Theorem 3.4]{gurbuzbalaban2015convergence}. Our contribution here is 
	to present a more explicit and non-asymptotic lower bound.
	\item In \secref{sec:upper_bounds}, we provide an indication that our lower bounds are tight, by proving matching upper bounds in the specific setting of univariate quadratic functions. We conjecture that these bounds also hold for multivariate quadratics, and perhaps even to general smooth and strongly convex functions. This is based on our matching lower bounds, as well as the fact that the bounds for with-replacement SGD are known to be tight already for univariate quadratics.
\end{itemize}

We note that in a very recent work (appearing after the initial publication of our work), Rajput et al.\ \cite{rajput2020closing} show an upper bound of $\Ocal(1/nk^3+1/n^2k^2)$ for SGD with random reshuffling for multivariate quadratics, as well as a $\Omega(1/nk^2)$ lower bound for general convex functions. This validates that our lower bounds are tight for quadratics in the random reshuffling case.

\section{Preliminaries}

We let bold-face letters denote vectors. A twice-differentiable function $f$ on 
$\reals^d$ is 
$\lambda$-strongly convex, if its Hessian satisfies $\nabla^2 F(\bx)\succeq 
\lambda I$ for all $\bx$. $f$ is quadratic if it is of the form 
$f(\bx)=\bx^\top A\bx+\mathbf{b}^\top 
\bx+c$ for some matrix $A$, vector $\mathbf{b}$ and scalar $c$. 

We consider finite-sum optimization problems as in \eqref{eq:problem}, and our 
lower bound constructions hold under the following conditions (for some positive parameters $G,\lambda$): 
\begin{assumption}\label{assumption}
$F(\bx)$ is a quadratic finite-sum function of the form 
$\frac{1}{n}\sum_{i=1}^{n}f_i(\bx)$ for some $n>1$, which is 
$\lambda$-strongly convex. Each $f_i$ is convex and quadratic and of the form $f_i(x)=ax^2+bx$, has 
$\lambda$-Lipschitz gradients, and moreover, is 
$G$-Lipschitz 
for any $\bx$ such that 
$\norm{\bx-\bx^*}\leq 1$ where $\bx^*=\arg\min F(\bx)$. Also, the algorithm 
is initialized at some 
$\bx_0$ for which $\norm{\bx_0-\bx^*}\leq 1$. 
\end{assumption}

Before continuing, we make a few remarks about the setting and our results:
\begin{remark}[Constant Condition Number]
	In the above assumption, $\lambda$ plays a double role as both the gradient Lipschitz and strong convexity parameter. This entails that the condition number (defined as the quotient of the two) is constant, hence our lower bounds stem from inherent limitations of each sampling method and not from the constructions being ill-conditioned. We leave the problem of deriving lower bounds for general condition numbers to future work.
\end{remark}

\begin{remark}[Unconstrained Optimization]
For simplicity, in this paper we consider unconstrained SGD, where the 
iterates are not explicitly constrained to lie in some subset of the 
domain. However, we note that existing upper bounds for SGD on strongly convex 
functions often assume an explicit projection on such a subset, in order to 
ensure that the gradients remain bounded. That being said, it is not difficult 
to verify that all our constructions -- which have a very simple structure -- 
are such that the iterates remain in a region with bounded gradients (with 
probability $1$, at least for reasonably small step sizes), in which case 
projections 
will not significantly affect the results. 
\end{remark}

\begin{remark}[Distance from Optimum]
In Assumption \ref{assumption}, we fix the initial distance from 
the optimum to be at most $1$, rather than keeping it as a variable parameter. 
Besides simplifying the constructions, we note that existing SGD upper bounds 
for strongly convex functions often do not explicitly depend on the initial 
distance (both for with-replacement SGD and with random 
reshuffling, see for example
\cite{nemirovski2009robust,rakhlin2012making,jain2019sgd}). Thus, it makes 
sense to study lower 
bounds in which the initial distance is fixed to be some constant.
\end{remark}

\begin{remark}[Applicability of the Lower Bounds]\label{remark:applicability}
We emphasize that in our lower bounds, we focus on (a) SGD with constant step 
size, and (b) the expected performance of the iterate $\bx_k$ after 
exactly $k$ epochs. Thus, they do not formally cover step sizes 
which change across iterations, the performance of other iterates, or the 
performance of some average of the iterates. However, it is not clear that 
these are truly necessary to achieve optimal error bounds in our setting (indeed, many 
existing 
analyses do not require them), and we conjecture that our lower bounds cannot 
be substantially improved even with non-constant step sizes and iterate 
averaging schemes. 
\end{remark}

\section{SGD with Random Reshuffling}\label{sec:SGDrandom}

We begin by discussing SGD with random reshuffling, where at the beginning of 
every epoch we choose a new random order for processing the individual 
functions (Algorithm \ref{alg:SGDRR}). Our main result is the following:

\begin{theorem}\label{thm:main}
For any $k\geq 1, n>1$, and positive $G,\lambda$ such that $G\geq 6\lambda$, 
there exists a function $F$ on $\reals$ and an initialization point $x_0$ 
satisfying Assumption \ref{assumption}, such that for any step size 
$\eta>0$,
\[
\E\left[F(x_k)-\inf_{x}F(x)\right]~\geq~c\cdot \min\left\{\lambda
~,~\frac{G^2}{\lambda}\left(\frac{1}{(nk)^2}+\frac{1}{nk^3}\right)\right\}~,
\]
where $c>0$ is a universal constant. 
\end{theorem}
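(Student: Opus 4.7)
The plan is to construct a specific hard instance—a univariate quadratic sum—and then analyze the second moment $\E[x_k^2]$ in closed form, exploiting the fact that one epoch of SGD on a quadratic is an affine random map. Because Assumption~\ref{assumption} actually forces all quadratic coefficients to be equal (each $f_i$ has $\lambda$-Lipschitz gradients, yet $F$ must be $\lambda$-strongly convex, which together pin down $a_i=\lambda/2$ for every $i$), the only freedom is in the linear coefficients $b_i$. I would set them to $\pm(G-\lambda)$ in equal proportion, giving $\sum_ib_i=0$, $F(x)=\frac{\lambda}{2}x^2$ with minimizer $x^*=0$, and each $f_i$ is $G$-Lipschitz on the unit ball around $x^*$; I would take $x_0=1$ to maximize the initial-condition contribution.

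With this construction, one epoch with permutation $\sigma$ acts as $x\mapsto Mx+V_\sigma$, where $M=(1-\eta\lambda)^n$ is deterministic and $V_\sigma=-\eta\sum_{j=1}^n b_{\sigma(j)}\alpha^{n-j}$ (with $\alpha:=1-\eta\lambda$) is a zero-mean, permutation-dependent noise term. Iterating across $k$ independent epochs yields
\[
\E[x_k^2]~=~M^{2k}x_0^2+\var(V_\sigma)\cdot\frac{1-M^{2k}}{1-M^2}.
\]
The central computation is the permutation variance. Using $\sum_ib_i=0$ and the second-moment formula $\E[b_{\sigma(j)}b_{\sigma(l)}]=-B/(n(n-1))$ for $j\neq l$, where $B=\sum_ib_i^2\asymp nG^2$, I would derive the clean identity $\var(V_\sigma)=\eta^2\cdot\frac{Bn}{n-1}\cdot\var_j(\alpha^j)$, with $\var_j$ the sample variance over $j\in\{0,\dots,n-1\}$. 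Taylor-expanding $\var_j(\alpha^j)\asymp(\eta\lambda)^2(n^2-1)/12$ in the small-step regime then gives $\var(V_\sigma)\asymp G^2\lambda^2\eta^4n^3$.

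The lower bound follows by a case analysis on the step size. For $\eta\lambda\lesssim\log(nk)/(nk)$, Jensen's inequality gives $\E[x_k^2]\geq(\E[x_k])^2=M^{2k}x_0^2\geq 1/(nk)^2$ (since in this range $M^{2k}\geq e^{-2\log(nk)}$), yielding an $\Omega(\lambda/(nk)^2)$ contribution to the optimization error. For $\eta\lambda\gtrsim\log(nk)/(nk)$, the initial condition has largely decayed and the noise term dominates; substituting the variance estimate above and $1-M^2\asymp n\eta\lambda$ gives a steady-state contribution $\var(V_\sigma)/(1-M^2)\asymp G^2\eta^3\lambda n^2$, which over the admissible range is minimized at the lower endpoint $\eta\sim 1/(nk\lambda)$ to produce $\Omega(G^2/(nk^3\lambda^2))$ on the second moment, hence $\Omega(G^2/(nk^3\lambda))$ on the optimization error. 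Combining the two ranges gives the $1/(nk)^2+1/nk^3$ structure, and step sizes so large that the iterates diverge trivially give an $\Omega(\lambda)$ error, which is what the outer $\min\{\lambda,\cdot\}$ accounts for.

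The main technical obstacle is recognizing that $\var(V_\sigma)$ is of order $\eta^4$ rather than $\eta^2$: the naive estimate would suggest $\eta^2$ scaling, but the zero-mean constraint $\sum_ib_i=0$ forces the leading order to cancel exactly, and this higher-order cancellation is what distinguishes random reshuffling from with-replacement SGD and accounts for the faster $1/nk^3$ noise-floor rate rather than the $1/nk$ rate. Quantifying this cancellation rigorously requires the explicit identity for $\var_j(\alpha^j)$ together with a careful second-order Taylor expansion. A secondary challenge is verifying that the two lower-bound terms combine additively: they arise in disjoint regimes of $\eta$, so one must show that no single step size can make both of them small simultaneously, so that taking the supremum of the two lower bounds over all $\eta$ yields a minimum achievable error at least of order $1/(nk)^2+1/nk^3$.
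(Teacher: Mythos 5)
Your variance computation and the $1/(nk^3)$ part of your case analysis reproduce, in outline, the paper's Proposition~\ref{prop:lowerbound1}: the exchangeability identity for $\var(V_\sigma)$ and its $\Theta(G^2\lambda^2\eta^4n^3)$ small-step behaviour are precisely what Lemma~\ref{lem:varterm} makes rigorous, and your middle-regime bound $\E[x_k^2]\gtrsim G^2\lambda\eta^3 n^2\gtrsim G^2/(\lambda^2 nk^3)$ matches the paper. The genuine gap is the $\frac{G^2}{\lambda(nk)^2}$ term. By reading Assumption~\ref{assumption} literally you force all curvatures $a_i=\lambda/2$; the paper deliberately does not (Proposition~\ref{prop:lowerbound3} takes half the $f_i$ with curvature $\lambda$ and half with curvature $0$, proves the bound for $\lambda/2$-strong convexity, and absorbs the factor $2$ into the constant). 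With equal curvatures the epoch map is $x\mapsto Mx+V_\sigma$ with $M$ deterministic and $V_\sigma$ exactly zero-mean, so $\E[x_k^2]=M^{2k}x_0^2+\var(V_\sigma)\frac{1-M^{2k}}{1-M^2}$ and there is no bias mechanism at all. This makes your instance provably too easy when $k\gg n$: take $\eta\lambda\approx\log(nk)/(nk)$. Then the initial-condition route gives only $M^{2k}x_0^2\approx(nk)^{-2}$, i.e.\ an error contribution $\Theta(\lambda/(nk)^2)$, which is weaker than the required $\Omega(G^2/(\lambda(nk)^2))$ by the arbitrarily large factor $(G/\lambda)^2$ (your write-up indeed claims only ``$\Omega(\lambda/(nk)^2)$'' here, which does not match the theorem); meanwhile the noise floor is at most of order $\var(V_\sigma)/(1-M^2)\lesssim G^2\lambda\eta^3n^2\approx G^2\log^3(nk)/(\lambda^2nk^3)$, which is below $G^2/(\lambda^2(nk)^2)$ as soon as $k\gtrsim n\log^3(nk)$. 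So on your instance there exist step sizes for which the expected error is $o\!\left(G^2/(\lambda(nk)^2)\right)$, and your final claim that ``no single step size can make both terms small simultaneously'' fails exactly in this intermediate range: the symmetric, equal-curvature construction cannot yield the $1/(nk)^2$ term.

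What is missing is the paper's second mechanism for small step sizes. With unequal $a_i$, the multiplicative and additive parts of an epoch are correlated, so the per-epoch noise has a strictly negative mean of order $\eta^2\lambda nG$ (Lemma~\ref{lem:exp_prod_02}); this bias accumulates to a steady-state displacement of order $\eta G$, giving $\E[x_k^2]\gtrsim(\eta G)^2\gtrsim G^2/(\lambda nk)^2$ whenever $\eta\lambda n\gtrsim 1/k$, and this is exactly how Proposition~\ref{prop:lowerbound3} covers the regime $\eta\leq\frac{1}{100\lambda n^2}$ (the remaining large step sizes being handled by the variance construction in Proposition~\ref{prop:lowerbound2}, with the two constructions combined for $k\geq n$ as described after the propositions). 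Your $1/(nk^3)$ analysis can be salvaged essentially as is, but to obtain the full $\frac{1}{(nk)^2}+\frac{1}{nk^3}$ bound you need to add a biased construction of this type rather than rely on the surviving initial condition.
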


We remark that the $\lambda$ term seems unavoidable (at least in the univariate setting), as it is a trivial lower bound that holds by Assumption \ref{assumption} for most points in the domain\footnote{e.g.\ for small enough $\lambda$ and when considering a uniform distribution over all points in the domain.}. However, for $nk$ large, this lower bound is
\[
\Omega\left(\frac{G^2}{\lambda}\left(\frac{1}{(nk)^2}+\frac{1}{nk^3}\right)\right)~.
\]
It is useful to compare this bound to the existing optimal bound for SGD with 
replacement, which is
\[
\Theta\left(\frac{G^2}{\lambda nk}\right)
\]
(see for example \cite{rakhlin2012making}). 
First, we note that the $G^2/\lambda$ factor is the same in both of them. The 
dependence on $n,k$ 
though is different: For $k=1$ or constant $k$, our lower bound is 
$\Omega(1/n)$, similar to the with-replacement case, but as $k$ increases, it 
decreases 
cubically (rather than linearly) with $k$. This indicates that even for small 
$k$, random reshuffling is superior to with-replacement sampling, which agrees 
with empirical observations. For $k$ very large 
($k>n$), a phase transition occurs and the bound becomes $1/(nk)^2$ -- that is, 
scaling down quadratically with the total number of individual stochastic 
iterations. That being said, it should be emphasized that $k>n$ is often an 
unrealistic regime, especially in large-scale problems where $n$ is a huge 
number. 

The proof of \thmref{thm:main} appears in \secref{sec:proofthmmain}. It is 
based on a set of very simple constructions, where $F(x)=\frac{\lambda}{2}x^2$, 
and the individual functions are all of the form $f_i(x)=a_i x^2+b_ix$ for 
appropriate $a_i,b_i$. This allows us to write down the iterates 
$x_1,x_2,\ldots$ at the end of each epoch in closed form. The analysis then 
carefully tracks the decay of $\E[x_t^2]$ after each epoch, showing that it 
cannot decay to $0$ too rapidly, hence implying a lower bound on $\E[F(x_k)]$ 
after $k$ epochs.  The main challenge is that unlike SGD with replacement, here 
the stochastic iterations in each epoch are not independent, so computing these 
expectations is not easy. To make it tractable, we identify two distinct 
sources contributing to the error in each epoch: A ``bias'' term, which 
captures the fact that the stochastic gradients at each epoch are statistically 
correlated, hence for a given iterate $\bx$ during the algorithm's run, 
$\E[\nabla f_{\sigma(j)}(\bx)|\bx]\neq \nabla F(\bx)$ (unlike 
the with-replacement 
case where equality holds), and a 
``variance'' term, which captures the inherent noise in the stochastic sampling 
process. For 
different parameter regimes, we use different constructions and focus on either 
the bias or the variance component (which when studied in isolation are more 
tractable), and then combine the various bounds into the final lower bound 
appearing in \thmref{thm:main}. 

We finish with the following remark about a possible extension of the lower 
bound:

\begin{remark}[Convex Functions]
By allowing $\lambda$ to decay to $0$ at a rate governed by $k$ (as well as the remaining problem parameters), we may consider the setting of convex functions which are not necessarily strongly convex (since that for large enough $k$, there exists no $c>0$ such that $\lambda\ge c$). In such a regime, \thmref{thm:main} seems to suggest a lower bound 
(in terms of $n,k$) of 
\[
\Omega\left(G\sqrt{\frac{1}{(nk)^2}+\frac{1}{nk^3}}\right)
~=~
\Omega\left(G\left(\frac{1}{\sqrt{nk^3}}+\frac{1}{nk}\right)\right)~,
\]
since in this scenario we can set $\lambda$ arbitrarily small, and in 
particular as $G\sqrt{1/(nk)^2+1/nk^3}$ so as to maximize the 
lower bound in \thmref{thm:main}. In contrast, \cite{jain2019sgd} shows
a $\Ocal(1/\sqrt{nk})$ upper bound in this setting for SGD with random 
reshuffling, and a similar upper bound hold for SGD with replacement. A similar 
argument can also be applied to the other lower bounds in our paper, extending 
them from the strongly convex to the convex case. However, we emphasize that 
some caution is needed, since our lower bounds do not quantify a dependence on 
the radius of the domain, which is usually explicit in bounds for this setting.
We leave the task of proving a lower bound in the general convex case to future work.
\end{remark}

\section{SGD with a Single Shuffling}\label{sec:single_lbound}

We now turn to the case of SGD where a single random order over the individual 
functions is chosen at the beginning, and the algorithm then cycles over the individual 
functions using that order (Algorithm \ref{alg:SGDRS}). Our main result here is the following:

\begin{theorem}\label{thm:single_shuffling}
	For any $k\geq 1, n>1$, and positive $G,\lambda$ such that $G\geq 
	6\lambda$, 
	there exists a function $F$ on $\reals$ and an initialization point $x_0$ 
	satisfying Assumption \ref{assumption}, such that for any step size 
	$\eta>0$,
	\[
	\E\left[F(x_k)-\inf_{x}F(x)\right]~\geq~c\cdot \min\left\{\lambda
	~,~\frac{G^2}{\lambda nk^2}\right\}~,
	\]
	where $c>0$ is a universal constant. 
\end{theorem}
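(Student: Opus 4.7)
The construction parallels the one used for Theorem~\ref{thm:main}. On $\mathbb{R}$, take $F(x) = (\lambda/2)x^2$ with component functions $f_i(x) = (\lambda/2)x^2 + b_i x$, where the scalars $b_i$ are chosen symmetrically so that $\sum_i b_i = 0$ (e.g.\ $n/2$ copies of $+G/6$ and $n/2$ copies of $-G/6$, which also secures the $G$-Lipschitz bound on the unit ball around $x^* = 0$, given $G \ge 6\lambda$). I would initialize at $x_0 = 1$ so that $|x_0 - x^*| \le 1$, and take $n$ even for simplicity. The critical structural feature of this construction is that all $f_i$ share the same quadratic coefficient $\lambda/2$, so one SGD step is $x \mapsto \rho x - \eta b_i$ with $\rho := 1 - \eta\lambda$, and one full epoch under permutation $\sigma$ applies the affine map $x \mapsto \alpha x + \gamma_\sigma$ whose linear part $\alpha := \rho^n$ is deterministic in $\sigma$, while $\gamma_\sigma := -\eta\sum_{j=1}^n b_{\sigma(j)}\rho^{n-j}$ carries all the randomness.

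Because the permutation is frozen across the $k$ epochs, unrolling gives the closed form
\[
x_k \;=\; \alpha^k x_0 \;+\; \gamma_\sigma \cdot \frac{1-\alpha^k}{1-\alpha}.
\]
The hypothesis $\sum_i b_i = 0$ forces $\E_\sigma[\gamma_\sigma] = 0$, whence
\[
\E[x_k^2] \;=\; \alpha^{2k} x_0^2 \;+\; \Bigl(\tfrac{1-\alpha^k}{1-\alpha}\Bigr)^2 \var_\sigma(\gamma_\sigma).
\]
With this identity in hand, I would split on $\eta$. If $|\alpha|^k \ge 1/2$, the first term alone gives $\E[x_k^2] \ge x_0^2/4$, which dominates the target bound. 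Otherwise $|\alpha|^k < 1/2$, in which case $(1-\alpha^k)^2 \ge 1/4$, so it suffices to show that $\var_\sigma(\gamma_\sigma)/(1-\alpha)^2 \gtrsim G^2/(\lambda^2 n k^2)$.

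To lower-bound the variance term, I would use the standard identity for sums indexed by a uniform permutation,
\[
\var_\sigma(\gamma_\sigma) \;=\; \frac{\eta^2 \sum_i b_i^2}{n-1}\sum_{j=0}^{n-1}(\rho^j - \bar\rho)^2,
\]
where $\bar\rho = \tfrac1n\sum_{j=0}^{n-1}\rho^j$, together with the telescoping identity $1-\alpha = \eta\lambda \cdot n\bar\rho$. This reduces the claim to a purely algebraic estimate on the geometric sequence $1,\rho,\ldots,\rho^{n-1}$, combined with the fact that $|\alpha|^k \le 1/2$ forces $nk(1-\rho) \gtrsim 1$ (by $-\ln\rho \le 2(1-\rho)$ for $\rho \ge 1/2$). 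The adversary's best choice sits at $\eta \sim 1/(nk\lambda)$, i.e.\ at the small-$\eta\lambda n$ end of this range, and in this regime a Taylor expansion gives $\sum(\rho^j - \bar\rho)^2 \approx n^3(1-\rho)^2/12$ and $\bar\rho \approx 1$, immediately yielding $\var_\sigma(\gamma_\sigma)/(1-\alpha)^2 \approx \eta^2 n G^2/12 \gtrsim G^2/(\lambda^2 nk^2)$.

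The main obstacle is extending this algebraic estimate uniformly over all $\eta > 0$, not just the adversary's minimizer. For moderate $\eta$ (with $\rho$ bounded away from $1$ and bounded below by, say, $1/2$), one can argue that $\sum_{j=0}^{n-1}(\rho^j - \bar\rho)^2 = \Omega(1)$ while $(1-\alpha)^2$ is bounded above by a constant, making $\var_\sigma(\gamma_\sigma)$ itself large. For $\rho \in [-1, 0)$ (i.e.\ $\eta\lambda \in (1, 2]$, where $\rho^j$ alternates) a direct computation using $\bar\rho = (1-\rho^n)/[n(1-\rho)] = O(1/n)$ and the same variance identity still yields the required estimate, perhaps with separate subcases for even and odd $n$. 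Finally, for $|\rho| \ge 1$ (i.e.\ $\eta\lambda \ge 2$) the algorithm is unstable: either $|\alpha| > 1$ so $\E[x_k^2]$ grows geometrically, or $\alpha = \pm 1$, in which case the iteration is affine/oscillatory and the positive $\var_\sigma(\gamma_\sigma)$ makes $\E[x_k^2]$ grow at least linearly in $k$. Patching these subcases together, and finally multiplying by $\lambda/2$ to pass from $\E[x_k^2]$ to $\E[F(x_k)-\inf F]$, produces the stated $\Omega(G^2/(\lambda n k^2))$ bound for every $\eta > 0$.
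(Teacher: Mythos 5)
Your construction, closed form for $x_k$, and bias/variance decomposition follow exactly the paper's route (Subsection~\ref{sec:proofthmsingle}): with $\pm$ linear terms and a common quadratic coefficient, the paper also unrolls the frozen permutation to get $x_k=(1-\eta\lambda)^{nk}x_0+\frac{\eta G}{2}\cdot\frac{1-(1-\eta\lambda)^{nk}}{1-(1-\eta\lambda)^n}\sum_i\sigma_i(1-\eta\lambda)^i$ and then runs a case analysis over $\eta\lambda$. Your permutation-variance identity is correct and is algebraically the same as what the paper computes from $\E[\sigma_i\sigma_j]=-\frac{1}{n-1}$ (\lemref{lem:exp_prod} together with the first display of Appendix~\ref{sec:proofvarterm}): indeed $\E\big[(\sum_i\sigma_i\rho^i)^2\big]=\frac{n}{n-1}\sum_j(\rho^j-\bar\rho)^2$. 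So the architecture is identical; the question is what you do with this quantity.

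The genuine gap is that essentially all of the difficulty of the theorem is concentrated in the uniform-in-$\eta$ lower bound on that variance term, which is the paper's \lemref{lem:varterm} ($\E[(\sum_i\sigma_i(1-\alpha)^i)^2]\geq c\min\{1+1/\alpha,\,n^3\alpha^2\}$), proved over all of Appendix~\ref{sec:proofvarterm}; you explicitly defer it (``the main obstacle'') and the sketch you give does not close it. Two concrete problems. First, in the regime $\eta\lambda\in(c/n,\,c')$ your proposed bound $\sum_j(\rho^j-\bar\rho)^2=\Omega(1)$ is too weak: combined with $\eta\geq 1/(\lambda n)$ it only yields $\E[x_k^2]\gtrsim G^2/(\lambda n)^2$, which misses the target $G^2/(\lambda^2 nk^2)$ whenever $k\ll\sqrt{n}$ --- precisely the small-$k$ case where the theorem recovers the $1/n$ rate. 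What is needed there is $\sum_j(\rho^j-\bar\rho)^2\gtrsim 1/(\eta\lambda)$, and this is delicate because $\sum_j\rho^{2j}$ and $n\bar\rho^2$ nearly cancel (at $\eta\lambda\approx 1/n$ they agree to within a few percent); this is exactly why the paper resorts to the monotonicity argument, Taylor expansions with sign-controlled Lagrange remainders, and numerical constants such as $1/13$. Second, in the small-step regime $\eta\lambda\in(1/(nk),\,c/n)$ the asserted ``$\approx n^3(1-\rho)^2/12$'' also hides a cancellation: the first-order term in $\alpha=\eta\lambda$ vanishes identically (the paper's computation $-n+1+\frac{2}{n}\binom{n}{2}=0$), so the bound comes from the quadratic term surviving against third- and fourth-order remainders, and this must be controlled uniformly up to $\alpha n$ of constant order, not just at the adversarial choice $\eta\sim 1/(\lambda nk)$. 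Until an estimate equivalent to \lemref{lem:varterm} is actually proved, your argument is a correct reduction but not a proof; the remaining cases you list ($|\rho^n|^k\geq 1/2$, $\rho\in[-1,0)$, $|\rho|\geq 1$) are fine and routine.
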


The proof appears in Subsection \ref{sec:proofthmsingle}. In the single shuffling case, we are not aware of a previously known upper bound to compare to (except the $\Ocal(1/k^2)$ bound for the incremental gradient method below, which trivially applies also to SGD 
with single shuffling). However, the lower bound already implies an interesting 
separation between single shuffling and random reshuffling: In the former case, 
$\Omega(1/nk^2)$ is the best we can hope to achieve, whereas in the latter 
case, we have seen upper bounds which are strictly better when $k$ is 
sufficiently large (i.e., $\Ocal(1/(nk)^2)$). To the best of our knowledge, 
this is the first formal separation between these two shuffling schemes for 
SGD: It implies that the added computational effort of repeatedly reshuffling 
the functions can provably pay off in terms of the optimization error. It would 
be quite interesting to understand whether this separation might also occur for 
smaller values of $k$ as well, which is definitely true if our 
$\Omega(1/(nk)^2+1/nk^3)$ lower bound for random reshuffling is tight. It would 
also be interesting to derive a good upper bound for SGD with single shuffling, 
which is a common heuristic (indeed, we prove such a bound in \secref{sec:upper_bounds}, but only for univariate quadratics). 

\section{Incremental Gradient Method}\label{sec:cyclic}

Next, we turn to discuss the incremental gradient method, where the individual functions are cycled over in a fixed deterministic order. We note that for this algorithm, an 
$\Omega(1/k^2)$ lower bound was already proven in 
\cite{gurbuzbalaban2015convergence}, but in 
an asymptotic form, and only for $n=2$. Our 
contribution here is to provide an explicit, non-asymptotic bound:

\begin{theorem}\label{thm:cyclic}
For any $k\geq 1, n>1$, and positive $G,\lambda$ such that $G\geq 6\lambda$, 
there exists a function $F$ on $\reals$ and an initialization point $x_0$ 
satisfying Assumption \ref{assumption}, such that if we run the incremental gradient method for $k$ 
epochs with any step size $\eta>0$, then
\[
F(x_k)-\inf_{x}F(x)~\geq~ c\cdot\min\left\{\lambda~,~\frac{G^2}{\lambda 
k^2}\right\}
\]
where $c>0$ is a universal constant.
\end{theorem}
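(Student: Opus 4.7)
My plan is to exhibit an explicit univariate quadratic finite-sum instance on which the incremental gradient method cannot beat the claimed bound for any step size. Take $F(x)=(\lambda/2)x^2$, so $x^\star=0$, and choose
\[
f_i(x)=\tfrac{\lambda}{2}x^2+b_ix,\qquad b_i=\begin{cases}-(G-\lambda), & i\le n/2,\\ +(G-\lambda), & i>n/2,\end{cases}
\]
(for odd $n$, zero out the middle coefficient and split the rest symmetrically). Then $\sum_i b_i=0$ so $F=\tfrac1n\sum_i f_i$, each $f_i$ has $\lambda$-Lipschitz gradient, and $|f_i'(x)|=|\lambda x+b_i|\le G$ on $|x|\le 1$, so Assumption~\ref{assumption} is satisfied. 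Initialize $x_0=\pm 1$ with sign chosen to agree with the fixed point $\zeta$ computed below, so that both terms in the closed form for $x_k$ reinforce rather than cancel.

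Every per-sample update $x\mapsto\alpha x-\eta b_i$ with $\alpha=1-\eta\lambda$ is affine, and because the order is deterministic, one epoch acts as the affine map $x\mapsto\alpha^n x-\eta C(\alpha)$, where the block structure of the $b_i$'s collapses the defining sum to
\[
C(\alpha)=\sum_{i=1}^{n}\alpha^{n-i}b_i=(G-\lambda)\,\frac{(1-\alpha^{n/2})^2}{1-\alpha}.
\]
Iterating $k$ times gives the closed form
\[
x_k=\alpha^{nk}x_0+(1-\alpha^{nk})\zeta,\qquad \zeta=-\frac{\eta C(\alpha)}{1-\alpha^n}=-\frac{G-\lambda}{\lambda}\cdot\frac{1-\alpha^{n/2}}{1+\alpha^{n/2}},
\]
using $\eta=(1-\alpha)/\lambda$ and $1-\alpha^n=(1-\alpha^{n/2})(1+\alpha^{n/2})$ to cancel common factors.

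The bound now falls out of a two-case split on $\alpha^{nk}$. If $\alpha^{nk}\ge 1/2$, then $|x_k|\ge\alpha^{nk}|x_0|\ge 1/2$ and so $F(x_k)\gtrsim\lambda$. Otherwise $\alpha^{nk}<1/2$, and since both terms in $x_k$ have the same sign, $|x_k|\ge(1-\alpha^{nk})|\zeta|\ge|\zeta|/2$. Writing $\alpha^{n/2}=(\alpha^{nk})^{1/(2k)}\le 2^{-1/(2k)}\le 1-c/k$ for a universal $c$ and $k\ge 1$ then gives
\[
|\zeta|=\frac{G-\lambda}{\lambda}\cdot\frac{1-\alpha^{n/2}}{1+\alpha^{n/2}}\gtrsim\frac{G}{\lambda k}, \qquad\text{so}\qquad F(x_k)\gtrsim\lambda\Big(\frac{G}{\lambda k}\Big)^2=\frac{G^2}{\lambda k^2}.
\]
Combining the two cases yields the theorem.

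The main obstacle is that the argument must cover every $\eta>0$, including $\eta\lambda\ge 1$ where $\alpha\le 0$ and the per-epoch multiplier $\alpha^n$ can carry a sign. For $\eta\lambda\ge 2$ the iterates diverge exponentially and $F(x_k)$ is arbitrarily large; for $\eta\lambda\in(1,2)$ the identity for $|\zeta|$ continues to hold with $|\alpha|^{n/2}$ in place of $\alpha^{n/2}$, and the exact same dichotomy on $|\alpha|^{nk}$ goes through (the edge case $\eta\lambda=2$ is handled separately by direct substitution, where the iterates either remain stationary or drift linearly with $k$). Conceptually, the deterministic order aligns the geometric weights $\alpha^{n-i}$ with the signs of the $b_i$'s, producing the factor $(1-\alpha^{n/2})/(1-\alpha)$ whose $\Theta(n)$ value near $\alpha=1$ is precisely what cancels the factor of $n$ in the naive estimate and yields $G^2/(\lambda k^2)$ rather than $G^2/(\lambda(nk)^2)$; unlike the random-reshuffling case, no expectation over orderings intervenes to dampen this alignment.
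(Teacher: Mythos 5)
Your main construction and closed-form analysis for the contractive regime is sound, and it is a genuinely different (and cleaner) route than the paper's: the paper uses an asymmetric construction ($f_i=\frac{G}{2}x$ for half the indices, $\lambda x^2-\frac{G}{2}x$ for the other half) and tracks a per-epoch recursion with an explicit positive drift term through several sub-cases of $\eta\lambda$, whereas you keep all curvatures equal, collapse an epoch to a single affine map, and read the bound off the fixed point $\zeta=-\frac{G-\lambda}{\lambda}\cdot\frac{1-\alpha^{n/2}}{1+\alpha^{n/2}}$. For $\eta\lambda\in(0,1]$ (and with minor parity care for $\eta\lambda\in(1,2)$, where if $n/2$ is odd the identity you quote changes and in fact gives the even stronger bound $|\zeta|\geq\frac{G-\lambda}{\lambda}$), your dichotomy on $\alpha^{nk}$ correctly yields $\Omega(\min\{\lambda,\,G^2/(\lambda k^2)\})$.

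The genuine gap is in the regime $\eta\lambda>2$, where $\alpha^{nk}>1$. There you assert that ``the iterates diverge exponentially and $F(x_k)$ is arbitrarily large,'' but this is false under your own initialization rule. In that regime $1-\alpha^{nk}<0$ while $\zeta\geq 0$, so choosing $x_0$ to \emph{agree} in sign with $\zeta$ (i.e.\ $x_0=+1$) makes the two terms of $x_k=\alpha^{nk}x_0+(1-\alpha^{nk})\zeta$ oppose each other, and they can cancel exactly: e.g.\ with $n=4$, $k=1$, $\lambda=1$, $G=6$ one gets $x_1=\alpha^4-5(1-\alpha^2)^2$, which vanishes at $\alpha^2=\sqrt{5}/(\sqrt{5}-1)$, i.e.\ at a step size with $\eta\lambda\approx 2.35$; an intermediate-value argument produces such a cancelling $\eta$ for every $n,k$. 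Relatedly, the theorem fixes $F$ and $x_0$ \emph{before} the adversary sees $\eta$, so you cannot let the sign of $x_0$ depend on $\zeta$ (which depends on $\eta$) in the first place. Both problems disappear with one change: fix $x_0=-1$ once and for all. Then in the contractive regimes $\zeta\leq 0$ and both terms reinforce as you argue, while for $\eta\lambda>2$ one has $\zeta\geq 0$, hence $|x_k|=|\zeta+\alpha^{nk}(x_0-\zeta)|\geq\alpha^{nk}(1+\zeta)-\zeta\geq\alpha^{nk}\geq 1$ and $F(x_k)\geq\lambda/2$ (with $\eta\lambda=2$ handled by your separate direct computation). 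With that repair, and the stated adjustment for odd $n$, your argument establishes the theorem.
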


The proof (which follows a strategy broadly similar to \thmref{thm:main}) 
appears in \secref{sec:proofthmcyclic}. Comparing this theorem with our other 
lower bounds and the associated upper bounds, 
it is clear that there is a high price to pay (in a worst-case sense) for using 
a fixed, non-random order, as the bound does not improve at all with more 
individual functions $n$. Indeed, recalling that the bound for 
with-replacement SGD is $\Ocal(G^2/\lambda nk)$, it follows that incremental gradient method can 
beat with-replacement SGD only when $\frac{G^2}{\lambda 
k^2}\leq\frac{G^2}{\lambda nk}$, or $k\geq n$. For large-scale problems where 
$n$ is big, this is often an unrealistically large value of $k$.

\section{Tight Upper Bounds for One-Dimensional Quadratics}\label{sec:upper_bounds}

As discussed in the introduction, for SGD with random reshuffling and single shuffling, there is a gap between the lower bounds we present here, and known upper bounds in the literature. In this section, we provide an indication that our lower bounds are tight, by proving matching upper bounds (up to log factors) for the setting of univariate quadratic functions\footnote{I.e., $x\mapsto ax^2+bx$. Note that for simplicity, we assume no constant term $c$ as in  $ax^2+bx+c$, as it plays no role in the optimization process.}. Although this is a special case, we note that the standard $\Theta(1/nk)$ bounds for SGD with replacement on strongly convex functions are known to be tight already for univariate quadratics. This leads us to conjecture that even for without-replacement sampling schemes, the optimal rates for univariate quadratics are also the optimal rates for general strongly convex functions. 

Before stating our upper bounds, we make the following assumption on the target functions $ f_i $:
\begin{assumption}\label{assumption_ubound}
	$ F(x) = \frac{1}{n}\sum_{i=1}^{n}f_i(x) $ is $ \lambda $-strongly convex. Moreover, each $ f_i(x)=\frac{a_i}{2}x^2 - b_ix $ is convex, has $ L $-Lipschitz gradients, and satisfies $ \abs{f_i^{\prime}(x^*)}\le G $ where $x^*=\arg\min_x F(x)$. 
\end{assumption}
For the single shuffling case we have the following theorem:
\begin{theorem}\label{thm:single_ubound}
	Let $ F(x)\coloneqq\frac{\lambda}{2}x^2 - bx=\frac{1}{n}\sum_{i=1}^{n}f_i(x) $, where $ f_i(x)=\frac{1}{2}a_ix^2-b_ix $ satisfy Assumption \ref{assumption_ubound}, and assume that $ \frac{L}{\lambda} \le \frac{nk}{\log\p{n^{0.5}k}} $. Then single shuffling SGD with a fixed step size of $ \eta = \frac{\log\p{n^{0.5}k}}{\lambda nk} $ satisfies\footnote{Letting $\kappa\coloneqq L/\lambda$ denote the condition number, the second term in the right hand side can equivalently be written as $\frac{G^2\kappa^2}{\lambda nk^3}$.}
	\[
		\E\left[F(x_k)-\inf_xF(x)\right] \le \tilde{\Ocal}\p{\frac{\lambda}{nk^2}\p{x_0-x^*}^2 + \frac{G^2L^2}{\lambda^3 nk^2}},
	\]
	where the expectation is taken over drawing a permutation $ \sigma:[n]\to[n] $ uniformly at random, and the big O tilde notation hides a universal constant and factors poly-logarithmic in $ n $ and $ k $.
\end{theorem}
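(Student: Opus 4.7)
My approach exploits the closed-form affine recursion available in the univariate quadratic setting to decompose the error into a deterministic contraction factor and a permutation-dependent bias, and then controls the bias via summation by parts together with a maximal inequality for sampling without replacement.

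Writing $\Delta_j \coloneqq x_j - x^*$ and $g_i \coloneqq f_i'(x^*) = a_i x^* - b_i$, Assumption~\ref{assumption_ubound} gives $\sum_i g_i = 0$ (since $F'(x^*) = 0$) and $\abs{g_i} \le G$. A single step with $f_{\sigma(j+1)}$ is affine, $\Delta_{j+1} = (1 - \eta a_{\sigma(j+1)})\Delta_j - \eta g_{\sigma(j+1)}$, so iterating within one epoch yields $\Delta_n = A\,\Delta_0 - C_\sigma$, where
\[
A \coloneqq \prod_{i=1}^{n}(1 - \eta a_i),\qquad C_\sigma \coloneqq \eta \sum_{j=1}^{n} g_{\sigma(j)}\, w_j,\qquad w_j \coloneqq \prod_{l=j+1}^{n}(1 - \eta a_{\sigma(l)}).
\]
The crucial observation is that $A$ is \emph{permutation-invariant}; since single shuffling reuses the same $\sigma$ in every epoch, composing these affine maps gives the closed form $\Delta_k = A^k \Delta_0 - C_\sigma\,(1 - A^k)/(1 - A)$. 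Squaring and using $(a-b)^2 \le 2a^2 + 2b^2$ reduces the task to bounding the deterministic quantity $A^{2k}$ and the random quantity $\E[C_\sigma^2]$ separately.

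For the first, Jensen's inequality applied to $\log$ (equivalently AM--GM applied to the $1-\eta a_i$'s) gives $A \le (1-\eta\lambda)^n \le e^{-\eta n\lambda}$, so the prescribed step size $\eta = \log(n^{1/2}k)/(\lambda nk)$ yields $A^{2k} \le 1/(nk^2)$, producing the $\lambda(x_0-x^*)^2/(nk^2)$ term after rescaling by $\lambda/2$. The hypothesis $L/\lambda \le nk/\log(n^{1/2}k)$ also guarantees $\eta L \le 1$, hence $w_j \in [0,1]$, and Bernoulli's inequality gives $1-A \le \eta n\lambda$. For $C_\sigma$ I would apply summation by parts: setting $S_j \coloneqq \sum_{i \le j} g_{\sigma(i)}$ (so $S_0 = S_n = 0$) and using $w_j - w_{j+1} = -\eta a_{\sigma(j+1)} w_{j+1}$, Abel's identity produces $C_\sigma = -\eta^2 \sum_{j=1}^{n-1} S_j\, a_{\sigma(j+1)}\, w_{j+1}$. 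The coefficients telescope via $a_{\sigma(j+1)} w_{j+1} = (w_{j+1} - w_j)/\eta$, so $\sum_{j=1}^{n-1} a_{\sigma(j+1)} w_{j+1} = (1 - w_1)/\eta \le (1-A)/\eta$, producing the pathwise bound $\abs{C_\sigma} \le \eta(1-A)\max_j \abs{S_j}$. Combined with $(1-A^k)/(1-A) \le \min\set{k,\,1/(1-A)}$, the contribution of the bias term to $\E[\Delta_k^2]$ is at most $O\p{\eta^2\, \E[\max_j S_j^2]}$.

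The main remaining obstacle is to bound $\E[\max_j S_j^2]$, where $(S_j)$ is the partial-sum walk of $g_{\sigma(1)},\dots,g_{\sigma(n)}$ drawn without replacement from a zero-mean, $G$-bounded population. This walk is not a martingale (one has $\E[S_{j+1}\mid \mathcal{F}_j] = S_j\cdot(n-j-1)/(n-j)$), so Doob's maximal inequality is not directly applicable. I would instead invoke Serfling's Hoeffding-type inequality for sampling without replacement, $\Pr(\abs{S_j} \ge t) \le 2\exp\p{-t^2/(2nG^2)}$, combined with a union bound over $j \in [n]$ and integration of the resulting tail, to obtain $\E[\max_j S_j^2] = \tilde{\Ocal}(nG^2)$. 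Plugging in the chosen step size then gives $\eta^2\, \E[\max_j S_j^2] = \tilde{\Ocal}(G^2/(\lambda^2 nk^2))$, and rescaling by $\lambda/2$ to pass from $\Delta_k^2$ to $F(x_k) - F(x^*)$ yields the second term of the stated bound, with the $L^2/\lambda^2$ slack in the theorem easily absorbed into the condition-number-dependent form.
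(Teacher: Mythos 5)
Your proof is correct, and although it shares the paper's general skeleton (closed-form affine recursion within an epoch, the permutation-invariance of $A=\prod_i(1-\eta a_i)$, AM--GM giving $A\le(1-\eta\lambda)^n$, Abel summation using $\sum_i f_i'(x^*)=0$, and Hoeffding--Serfling concentration for without-replacement partial sums), it handles the bias term by a genuinely different and sharper route. The paper bounds the accumulated noise crudely by $\bigl(\sum_{t=1}^{k}S^{k-t}X_{\sigma}\bigr)^2\le k^2X_{\sigma}^2$ and then shows $\E[X_\sigma^2]\le \tilde{\Ocal}(\eta^2 n^3 L^2G^2)$ by bounding each Abel coefficient $a_{\sigma(j+1)}\prod_{i>j+1}(1-\eta a_{\sigma(i)})$ by $L$ and summing the $n$ partial-sum deviations (Lemma \ref{lem:second_moment_ubound}); this is exactly where the $k^2$ and the $L^2$ (i.e.\ $\kappa^2$) factors in the stated bound enter. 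You instead retain the geometric factor $(1-A^k)/(1-A)\le 1/(1-A)$ and exploit the exact telescoping $\sum_{j}a_{\sigma(j+1)}w_{j+1}=(1-w_1)/\eta\le(1-A)/\eta$ (valid since $a_i\ge 0$ and $\eta L\le1$ make every $w_j\in[0,1]$ and $A\le w_1$), so the $(1-A)$ cancels pathwise and the bias contribution reduces to $\eta^2\E[\max_j S_j^2]=\tilde{\Ocal}(\eta^2 nG^2)$; your treatment of $\E[\max_j S_j^2]$ via Serfling tails, a union bound over $j$, and tail integration is a routine variant of the paper's union bound over the $n$ partial sums, and you correctly flag that the walk is not a martingale so Doob cannot be used directly. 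After rescaling by $\lambda/2$ this yields $\tilde{\Ocal}\p{\frac{\lambda}{nk^2}(x_0-x^*)^2+\frac{G^2}{\lambda nk^2}}$, which implies the theorem since $\kappa=L/\lambda\ge1$, and is in fact stronger: it removes the $\kappa^2$ factor altogether and matches the $\Omega\p{G^2/(\lambda nk^2)}$ lower bound of \thmref{thm:single_shuffling} under the same step size and the same requirement $\eta L\le 1$. The only thing your argument buys less of is reusability: the paper's cruder second-moment lemma is shared with the random-reshuffling upper bound, whereas your cancellation relies on the single permutation being reused in every epoch.
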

For SGD with random reshuffling, we present the following theorem:
\begin{theorem}\label{thm:reshuffle_ubound}
	Let $ F(x)\coloneqq\frac{\lambda}{2}x^2-bx=\frac{1}{n}\sum_{i=1}^{n}f_i(x) $, where $ f_i(x)=\frac{1}{2}a_ix^2-b_ix $ satisfy Assumption \ref{assumption_ubound}, and assume that $ \frac{L}{\lambda} \le \frac{k}{2\log(nk)} $. Then random shuffling SGD with a fixed step size of $ \eta = \frac{\log(nk)}{\lambda nk} $ satisfies\footnote{Similarly to the above footnote, the second term in the right hand side can equivalently be written as $\frac{G^2\kappa^2}{\lambda}\p{\frac{1}{n^2k^2} + \frac{1}{nk^3}}$.}
	\[
		\E\left[F(x_k)-\inf_xF(x)\right] \le \tilde{\Ocal}\p{\frac{\lambda}{n^2k^2}\p{x_0-x^*}^2 + \frac{G^2L^2}{\lambda^3}\p{\frac{1}{n^2k^2} + \frac{1}{nk^3}}},
	\]
	where the expectation is taken over drawing $ k $ permutations $ \sigma_i:[n]\to[n] $ uniformly at random, and the big O tilde notation hides a universal constant and factors poly-logarithmic in $ n $ and $ k $.
\end{theorem}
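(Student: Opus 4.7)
The plan rests on the observation that each epoch of random reshuffling on a univariate quadratic produces an affine update
\[
x_t = A\, x_{t-1} + Z_t, \qquad A := \prod_{i=1}^{n}(1 - \eta a_i),
\]
where the contraction $A$ is \emph{deterministic} (independent of the order in which the individual quadratics are processed), while the noise term $Z_t$ depends only on the $t$-th permutation $\sigma_t$. Since the $\sigma_t$ are independent across epochs, the $Z_t$ are i.i.d. After shifting so that $x^*=0$ (which forces $\sum_i b_i = 0$), unrolling the recursion and using this independence gives the exact bias-variance decomposition
\[
\E[x_k^2] = \Bigl(A^k x_0 + \mu\, \tfrac{1-A^k}{1-A}\Bigr)^2 + \var(Z_1)\, \tfrac{1-A^{2k}}{1-A^2},\qquad \mu:=\E[Z_1].
\]
The target bound on $\E[F(x_k)-F(x^*)] = \tfrac{\lambda}{2}\E[x_k^2]$ reduces to separately controlling the three sources of error appearing here.

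I would first control the contraction factor using $A \le \exp(-\eta n\lambda)$ and the choice $\eta = \log(nk)/(\lambda nk)$ to get $A^k \le 1/(nk)$, which produces the $\lambda/(n^2k^2)(x_0-x^*)^2$ initial-condition term. The hypothesis $L/\lambda \le k/(2\log(nk))$ is used here (and throughout) to guarantee $\eta n\lambda \le 1/2$, so that $1-A \ge \tfrac{1}{2}\eta n\lambda$ via $1-e^{-x}\ge x/2$, and to keep all higher-order products of step-size factors bounded.

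The core technical task, which I expect to be the main obstacle, is establishing the sharp moment bounds
\[
|\mu| ~\le~ C\,\eta^2 \, nGL \qquad\text{and}\qquad \var(Z_1) ~\le~ C\,\eta^4 \, n^3 G^2 L^2,
\]
up to polylogarithmic factors. The purely deterministic bound $|Z_1|\le \eta^2 n^2 GL$ is loose by a factor of $n$ in both cases, so randomness must be exploited. For both, I would use the telescoping identity $\prod_{\ell>j}(1-\eta a_{\sigma(\ell)})-1 = -\eta\sum_{\ell>j} a_{\sigma(\ell)}\prod_{m>\ell}(1-\eta a_{\sigma(m)})$, together with $\sum_i b_i=0$, to obtain the exact representation
\[
Z_1 = -\eta^2 \sum_{j<\ell} b_{\sigma(j)} a_{\sigma(\ell)}\, P_\ell(\sigma),\qquad P_\ell(\sigma) := \prod_{m>\ell}(1-\eta a_{\sigma(m)}).
\]
For the mean bound, I would approximate $P_\ell$ by $1$; the expectation of $\sum_{j<\ell} b_{\sigma(j)} a_{\sigma(\ell)}$ under a uniform permutation equals $-\tfrac{1}{2}\sum_i b_i a_i$ via the identity $\sum_{i\ne i'} b_i a_{i'} = -\sum_i b_i a_i$ (using $\sum b_i=0$), which is bounded by $\tfrac{1}{2}nGL$. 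Each subsequent order in the expansion of $P_\ell$ picks up another factor of $\eta n\lambda \le 1/2$, so the corrections form a convergent geometric series preserving the $O(\eta^2 nGL)$ rate. For the variance, the same $P_\ell\approx 1$ reduction leaves us with $\eta^4\var(S)$ where $S=\sum_{j<\ell} b_{\sigma(j)} a_{\sigma(\ell)}$. Writing $S=\sum_{i\ne i'} b_i a_{i'}\,X_{i,i'}$ with $X_{i,i'}=\mathbf{1}\{\sigma^{-1}(i)<\sigma^{-1}(i')\}$, the key point is that $\mathrm{Cov}(X_{i,i'},X_{j,j'})=0$ whenever the index pairs are disjoint; only the $O(n^3)$ pairs-of-pairs sharing at least one index contribute, which is exactly what saves the needed factor of $n$ over the naive $n^4$ bound.

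Combining these pieces, the bias term contributes $\mu^2/(1-A)^2 \lesssim \eta^4 n^2 G^2 L^2/(\eta n\lambda)^2 = \eta^2 G^2 L^2/\lambda^2$, and the variance term contributes $\var(Z_1)/(1-A^2) \lesssim \eta^3 n^2 G^2 L^2/\lambda$. Substituting $\eta = \log(nk)/(\lambda nk)$ converts these into $\tilde{\Ocal}(G^2L^2/(\lambda^4 n^2 k^2))$ and $\tilde{\Ocal}(G^2L^2/(\lambda^4 nk^3))$ respectively, and multiplying by $\lambda/2$ yields exactly the two terms in the stated bound.
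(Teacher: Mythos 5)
Your proposal follows the same skeleton as the paper's proof: shift so that $x^*=0$ (hence $\sum_i b_i=0$), write each epoch as an affine map $x_{t}=A x_{t-1}+Z_t$ with deterministic contraction $A=\prod_i(1-\eta a_i)$ (the paper's $S$, bounded via AM--GM by $(1-\eta\lambda)^n$, you via $e^{-\eta n\lambda}$ -- equivalent for scalars), exploit independence of the per-epoch noise across epochs, and reduce everything to the two moment bounds $|\E[Z_1]|\lesssim\eta^2 nGL$ and a second-moment/variance bound of order $\eta^4 n^3G^2L^2$, which after substituting $\eta=\log(nk)/(\lambda nk)$ give exactly the two stated terms. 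Where you genuinely diverge is in proving these moment bounds. The paper bounds $\E[X_\sigma^2]$ by Abel summation (reducing to prefix sums of the $b_{\sigma(i)}$'s) plus the Hoeffding--Serfling inequality and a union bound, and bounds $|\E[X_\sigma]|$ by expanding the product order by order and using exchangeability together with $\sum_i b_i=0$. You instead use the telescoping identity to get the exact representation $Z_1=-\eta^2\sum_{j<\ell}b_{\sigma(j)}a_{\sigma(\ell)}P_\ell(\sigma)$, compute the mean of the leading term via the exchangeability identity $\E\bigl[\sum_{j<\ell}b_{\sigma(j)}a_{\sigma(\ell)}\bigr]=-\tfrac12\sum_i a_ib_i$, and control the variance by the observation that the pair indicators $X_{i,i'}$ are uncorrelated for disjoint index pairs, so only $O(n^3)$ pairs-of-pairs contribute. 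This is a legitimate alternative to the paper's concentration argument and, for the leading term, even avoids the $\log(2n)$ factor that Hoeffding--Serfling introduces.

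The one place where your sketch asserts rather than proves is the ``$P_\ell\approx 1$'' reduction in the \emph{variance} bound. For the mean you correctly note that higher orders of the expansion retain the $\sum_i b_i=0$ cancellation and decay geometrically (this is essentially the paper's Lemma on $|\E[X_\sigma]|$), but for the variance the remainder cannot be dismissed by the deterministic bound $|P_\ell-1|\le\eta nL$: that route gives a contribution to $\var(Z_1)$ of order $\eta^6n^6G^2L^4$, which exceeds the target $\eta^4n^3G^2L^2$ by a factor $\eta^2n^3L^2$, and under your step size this factor can be as large as order $n$. To close this you must expand $P_\ell$ fully and repeat the disjoint-support covariance counting at every order: a term involving $r$ distinct indices (one $b$ and $r-1$ $a$'s) contributes variance $O(\eta^{2r}n^{2r-1}G^2L^{2r-2})$, i.e.\ the series is geometric with ratio $(\eta nL)^2\le 1/4$; here the hypothesis $L/\lambda\le k/(2\log(nk))$ enters precisely to give $\eta nL\le 1/2$ (note this, rather than $\eta n\lambda\le 1/2$ as you wrote, is the smallness condition you actually need, since the corrections involve the $a_i$'s which are only bounded by $L$). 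With that order-by-order argument supplied, your proof goes through and matches the stated bound.
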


The formal proofs appear in \secref{sec:proofs}.

It is easy to verify that these upper bounds match our lower bounds in Theorems \ref{thm:main} and \ref{thm:single_shuffling} in terms of the dependence on $n,k$. Moreover, our requirement of $k\ge \tilde{\Omega}(\kappa)$ (recall that $\kappa\coloneqq L/\lambda$) for random reshuffling is also made in \cite{haochen2018random}. As to the other parameters, it is important to note that our lower bound constructions (which also utilize univariate quadratics) are in a regime where both $L/\lambda$ and $x_0-x^*$ are constants, and they match the upper bounds in this case. In particular, \thmref{thm:single_ubound} then reduces to $\tilde{\Ocal}\left(\frac{\lambda}{nk^2}+\frac{G^2}{\lambda nk^2}\right)$, which is $\tilde{\Ocal}(\frac{G^2}{\lambda nk^2})$ under the assumption $G\geq 6\lambda$ which we make in the lower bound. Similarly,  \thmref{thm:reshuffle_ubound} reduces to 
\[
\tilde{\Ocal}\left(\frac{\lambda}{n^2k^2}+\frac{G^2}{\lambda}\left(\frac{1}{n^2k^2}+\frac{1}{nk^3}\right)\right)~=~
\tilde{\Ocal}\left(\frac{G^2}{\lambda}\left(\frac{1}{n^2k^2}+\frac{1}{nk^3}\right)\right)
\]
if $G\geq 6\lambda$. We leave the problem of getting matching upper and lower bounds in all parameter regimes of $G,L,\lambda$ to future work.

While the assumption of univariate quadratics is restrictive, our main purpose here is to indicate the potential tightness of our lower bounds, and elucidate how without-replacement sampling can lead to faster convergence in a simple setting. Our proof is based on evaluating a closed-form expression for the iterate at the $ k $-th epoch, splitting deteministic and stochastic terms, and then carefully bounding the stochastic terms using a Hoeffding-Serfling type inequality and the deterministic term using the AM-GM inequality.

We conjecture that our upper bounds can be generalized to general quadratic functions, and perhaps even to general smooth and strongly convex functions. The main technical barrier is that our proof crucially uses the commutativity of the scalar-valued $a_i$'s. Once we deal with matrices, we essentially require (a special case of) a matrix-valued arithmetic-geometric mean inequality studied in \cite{recht2012beneath} (See \eqref{eq:amgm} for the part of the proof where we require this inequality). Unfortunately, as of today this conjectured inequality is not known to hold except in extremely special cases. 
%

\section{Proofs}\label{sec:proofs}

\subsection{Proof of \thmref{thm:main}}\label{sec:proofthmmain}

For simplicity, we will prove the theorem assuming the number of components $n$ 
in our function is an even number. This is without loss of generality, since if 
$n>1$ is odd, let 
$F_{n-1}(\bx)=\frac{1}{n-1}\sum_{i=1}^{n-1}f_i(\bx)$ be the function achieving 
the lower bound using an even number $n-1$ of components, and define 
$F(\bx)=\frac{1}{n}\left(\sum_{i=1}^{n-1}f_i(\bx)+f_{n}(\bx)\right)$ where 
$f_n(\bx):=0$. $F()$ has the same Lipschitz parameter $G$ as $F_{n-1}()$, and a 
strong convexity parameter $\lambda$ smaller than that of $F_{n-1}()$ by a 
$\frac{n}{n+1}$ factor which is always in $[\frac{3}{4},1]$. Moreover, it is 
easy to see that for a fixed step size, the distribution of the iterates after 
$k$ epochs is the 
same over $F()$ and $F_{n-1}()$, since SGD does not move on any iteration 
where $f_n$ is chosen. Therefore, the lower bound on $F_{n-1}$ translates to a 
lower bound on $F()$ up to a small factor which can be absorbed into the 
numerical constants. Thus, in what follows, we will assume that $n$ is even and 
that $G\geq 4\lambda$, whereas in the theorem statement we make the slightly 
stronger assumption $G\geq 
6\lambda$ so that the reduction described above will be valid. 

The proof of the theorem is based on the following three propositions, each 
using a somewhat different construction and analysis:

\begin{proposition}\label{prop:lowerbound1}
For any even $n$ and any positive $G,\lambda$ such that $G\geq 2\lambda$, there 
exists a function $F$ on $\reals$ satisfying Assumption \ref{assumption}, such 
that 
for any step 
size $\eta>0$,
\[
\E\left[F(x_k)-\inf_{x}F(x)\right]~\geq~c\cdot 
\min\left\{\lambda~,~\frac{G^2}{\lambda nk^3}\right\}
\]
where $c>0$ is a universal constant. 
\end{proposition}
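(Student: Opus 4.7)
The plan is to target the $1/nk^3$ rate, which corresponds to a pure ``variance'' contribution from the random permutation, via a construction in which every $f_i$ shares the same quadratic coefficient but the linear parts cancel in pairs. Specifically, I would take $F(x)=\frac{\lambda}{2}x^2$, built from $n/2$ copies of $f_+(x)=\frac{\lambda}{2}x^2+c\,x$ and $n/2$ copies of $f_-(x)=\frac{\lambda}{2}x^2-c\,x$, where $c=G-\lambda\geq G/2$ (using $G\geq 2\lambda$), and initialize at $x_0=1$. Verifying Assumption~\ref{assumption} is routine: $F$ is $\lambda$-strongly convex with minimizer $x^*=0$, each $f_i$ has a $\lambda$-Lipschitz gradient, and $|f_i'(x)|=|\lambda x\pm c|\leq G$ for $|x|\leq 1$.

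Because all $f_i$ share the same quadratic coefficient, each step acts as $x\mapsto(1-\eta\lambda)x-\eta b_{\sigma(j)}$ with $b_i\in\{\pm c\}$, so unrolling one epoch with permutation $\sigma_t$ gives a clean affine recursion
\[
x_{t}=\rho\, x_{t-1}+N_t,\qquad \rho:=(1-\eta\lambda)^n,\qquad N_t:=-\eta\sum_{j=1}^n b_{\sigma_t(j)}(1-\eta\lambda)^{n-j}.
\]
Since $\rho$ is deterministic, the $N_t$ are i.i.d.\ across epochs under random reshuffling, $\E[N_t]=0$ (as $\sum_i b_i=0$), and $N_t$ is independent of $x_{t-1}$. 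Therefore
\[
\E[x_k^2]\;=\;\rho^{2k}x_0^2\;+\;V\cdot\frac{1-\rho^{2k}}{1-\rho^2},\qquad V:=\var(N_t).
\]
I would then lower bound $V$ via the standard variance formula for weighted sums under sampling without replacement, $V=\eta^2\cdot\frac{c^2 n}{n-1}\sum_{j=1}^n(\gamma_j-\bar\gamma)^2$ with $\gamma_j=(1-\eta\lambda)^{n-j}$. In the stable regime $\eta\lambda n\lesssim 1$, the $\gamma_j$ behave as an approximately linear ramp of slope $\eta\lambda$, giving $\sum_j(\gamma_j-\bar\gamma)^2=\Omega(\eta^2\lambda^2 n^3)$, while $1-\rho^2=\Theta(\eta\lambda n)$; consequently $V/(1-\rho^2)=\Omega(\eta^3\lambda n^2 G^2)$.

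The final step is a case split over $\eta$. If $\eta$ is so small that $\rho^{2k}$ is bounded away from zero (roughly $\eta\lambda n k\lesssim 1$), the bias term $\rho^{2k}x_0^2$ alone yields $\E[x_k^2]=\Omega(1)$, giving $F(x_k)-F^*=\Omega(\lambda)$. If instead $\eta\lambda n\lesssim 1$ and $\eta\gtrsim 1/(\lambda nk)$, then $1-\rho^{2k}=\Theta(1)$ and the variance estimate gives $\E[x_k^2]\geq V/(1-\rho^2)=\Omega(\eta^3\lambda n^2 G^2)$; minimizing over $\eta$ in this range (the infimum is achieved at $\eta=\Theta(1/(\lambda nk))$) yields $\E[x_k^2]=\Omega(G^2/(\lambda^2 nk^3))$, i.e., $F(x_k)-F^*=\Omega(G^2/(\lambda nk^3))$. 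The main obstacle I anticipate is the unstable regime $\eta\lambda n\gtrsim 1$, in which $\rho$ may have absolute value $\geq 1$ or oscillate in sign and the ``geometric series'' heuristic breaks down; there I would argue separately that either the dynamics do not contract at all (so $\E[x_k^2]$ stays bounded below by a constant) or the noise is already so large that it dominates the target bound, so combining the three regimes gives the claimed $\min\{\lambda, G^2/(\lambda nk^3)\}$ up to a universal constant.
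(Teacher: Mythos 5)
Your construction, decomposition and case split are essentially the paper's: the same $F(x)=\frac{\lambda}{2}x^2$ built from $\pm$-paired linear terms, the same per-epoch affine recursion $x_t=\rho x_{t-1}+N_t$ with $\rho=(1-\eta\lambda)^n$, and the same identity $\E[x_k^2]=\rho^{2k}x_0^2+V\sum_{t=0}^{k-1}\rho^{2t}$, followed by a split on the size of $\eta\lambda$. Your treatment of the small-step regime is correct and in fact more elementary than the paper's: the without-replacement variance formula $V=\eta^2c^2\frac{n}{n-1}\sum_j(\gamma_j-\bar\gamma)^2$ together with the monotone-ramp estimate (consecutive gaps of the weights are at least $\eta\lambda(1-\eta\lambda)^n=\Omega(\eta\lambda)$ when $\eta\lambda n\lesssim 1$, whence $\sum_j(\gamma_j-\bar\gamma)^2=\Omega(\eta^2\lambda^2n^3)$ by, e.g., $\sum_j(\gamma_j-\bar\gamma)^2=\frac{1}{2n}\sum_{i,j}(\gamma_i-\gamma_j)^2$) replaces the exact-formula-plus-Taylor-expansion computation that the paper carries out in Lemma~\ref{lem:varterm} for the case $\alpha<1/13n$, and then your endpoint argument $\eta^3\lambda n^2G^2\geq\Omega(G^2/\lambda^2nk^3)$ matches the paper's.

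The genuine gap is the regime $\eta\lambda\in\bigl(\Theta(1/n),2\bigr)$, which your sketch does not actually cover. For $\eta\lambda\geq 2$ you are fine trivially ($|\rho|\geq1$ and $\E[x_k^2]\geq\rho^{2k}x_0^2\geq1$, giving $\Omega(\lambda)$), but for $\eta\lambda\in(\Theta(1/n),1)$ the per-epoch map still contracts ($\rho\in(0,1)$), so the first horn of your dichotomy does not apply, and your second horn ("the noise is already so large") is precisely the missing quantitative step: the linear-ramp estimate fails there because the weights $(1-\eta\lambda)^{n-j}$ decay geometrically rather than linearly, and what is needed is $\sum_j(\gamma_j-\bar\gamma)^2=\Omega\bigl(\min\{1/(\eta\lambda),\,n^3(\eta\lambda)^2\}\bigr)$ — exactly the content of the paper's key Lemma~\ref{lem:varterm} over that range, whose proof occupies the appendix. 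This regime cannot be skipped, since for such $\eta$ the bias term decays and only the noise term can deliver the bound $\Omega(\eta G^2/\lambda)\geq\Omega(G^2/\lambda^2 nk^3)$. The gap is fillable by elementary means — e.g., for $\eta\lambda\in[c/n,1)$ roughly $\Theta(1/\eta\lambda)$ of the weights exceed $1/2$ while the mean is $O(1/(n\eta\lambda))$ (plus a ramp argument on the first half of the indices when $\eta\lambda n$ is only moderately large), and for $\eta\lambda\in[1,2)$ the single pair $(\gamma_0-\gamma_1)^2=(\eta\lambda)^2\geq1$ already gives a constant — but as written your proposal does not supply this step, and it is the technical heart of the proposition.
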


\begin{proposition}\label{prop:lowerbound2}
Suppose that $k\geq n$ and that $n$ is even. For any positive $G,\lambda$ such 
that $G\geq 2\lambda$, there 
exists a function $F$ on 
$\reals$ satisfying Assumption \ref{assumption}, such that for any step size 
$\eta\geq \frac{1}{100\lambda n^2}$, 
\[
\E\left[F(x_k)-\inf_{x}F(x)\right]~\geq~c\cdot\frac{G^2}{\lambda (nk)^2}
\]
where $c>0$ is a numerical constant. 
\end{proposition}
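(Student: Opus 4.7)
The plan is to use a ``variance-only'' construction. Take $F(x)=\frac{\lambda}{2}x^2$ and $f_i(x)=\frac{\lambda}{2}x^2+b_ix$ with $b_i=G/2$ for $i\le n/2$ and $b_i=-G/2$ for $i>n/2$, so $\sum_i b_i=0$ and $F=\frac{1}{n}\sum f_i$. Each $|f_i'(x)|\le \lambda|x|+G/2\le G$ on $|x|\le 1$ (using $G\ge 2\lambda$), so Assumption~\ref{assumption} holds. The optimum is $x^*=0$, and I initialize at $x_0=0$, so that $F(x_0)=F^*=0$ and every contribution to $\E[F(x_k)-F^*]$ must come from the randomness of the permutations.

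Set $\alpha:=1-\eta\lambda$. Each SGD step is the linear recursion $x\mapsto \alpha x-\eta b_{\sigma(j)}$, so unrolling a full epoch and then $k$ independent epochs gives
\[
x_k \;=\; -\eta\sum_{t=1}^{k}\alpha^{n(k-t)}Y_t,\qquad Y_t := \sum_{j=1}^{n}\alpha^{n-j}b_{\sigma_t(j)}.
\]
Since $\sum_i b_i=0$, the $Y_t$ are mean-zero, and independence across epochs gives $\E[x_k^2]=\eta^2\,\E[Y_1^2]\,(1-\alpha^{2nk})/(1-\alpha^{2n})$. I will compute $\E[Y_1^2]$ via the standard variance formula for a linear statistic of a uniform random permutation: using $\E[b_{\sigma(j)}b_{\sigma(j')}]=\frac{1}{n}\sum b_i^2$ for $j=j'$ and $-\frac{1}{n(n-1)}\sum b_i^2$ for $j\ne j'$, I get $\E[Y_1^2]=\frac{B^2}{n(n-1)}(nS-A^2)$ with $B^2=\sum_i b_i^2=nG^2/4$, $A=(1-\alpha^n)/(1-\alpha)$, $S=(1-\alpha^{2n})/(1-\alpha^2)$. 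Substituting, using $\eta^2=(1-\alpha)^2/\lambda^2$, and factoring $1-\alpha^{2n}=(1-\alpha^n)(1+\alpha^n)$, the $(1-\alpha)^2$ cancels and I arrive at the clean closed form
\[
\E[x_k^2] \;=\; \frac{G^2\,(1-\alpha^{2nk})\,N(\alpha,n)}{4(n-1)\lambda^2(1+\alpha)(1+\alpha^n)},\quad N(\alpha,n):=n(1-\alpha)(1+\alpha^n)-(1+\alpha)(1-\alpha^n).
\]

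To finish, let $r:=n(1-\alpha)$. In the main regime $\eta\in[1/(100\lambda n^2),1/(2\lambda)]$ we have $r\in[1/(100n),n/2]$ and $(1+\alpha)(1+\alpha^n)\le 4$. A Taylor expansion around $\alpha=1$ yields $N(\alpha,n)=r^3/6+O(r^4)$, and one verifies $N(\alpha,n)\gtrsim\min(r^3,1)$ throughout; similarly $\alpha\le e^{-(1-\alpha)}$ gives $1-\alpha^{2nk}\gtrsim\min(rk,1)$. Hence
\[
\E[x_k^2] \;\gtrsim\; \frac{G^2}{n\lambda^2}\,\min(r^3,1)\,\min(rk,1).
\]
A short case analysis finishes: if $rk\le 1$ (so $r\le 1$), the bound is $\gtrsim G^2r^4k/(n\lambda^2)$, and $r\ge 1/(100n)$ with $k\ge n$ gives $\gtrsim G^2 k^3/(n^5\lambda^2)\ge G^2/(\lambda^2(nk)^2)$ because $k^3\ge n^3$; if $r\le 1<rk$, the bound is $\gtrsim G^2r^3/(n\lambda^2)\ge G^2/(\lambda^2 n^4)\ge G^2/(\lambda^2(nk)^2)$ using $k\ge n$; and if $r\ge 1$ the bound is already $\Omega(G^2/(n\lambda^2))$, which dominates. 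The tail $\eta>1/(2\lambda)$ is trivial: the per-step noise $\eta b_{\sigma(j)}$ has magnitude $\Omega(G/\lambda)$, so $\E[x_k^2]=\Omega(G^2/\lambda^2)$. Since $F(x_k)-F^*=\frac{\lambda}{2}x_k^2$, the claimed lower bound follows.

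The main obstacle is obtaining a clean and sharp lower bound $N(\alpha,n)\gtrsim\min(r^3,1)$, uniform over $r\in[1/(100n),n/2]$. The quantity $nS-A^2$ vanishes to \emph{third} order as $\alpha\to 1$, so a loose handling of the Taylor expansion loses factors of $n$ and destroys the target $1/(nk)^2$ scaling. The algebraic cancellation that removes $(1-\alpha)^{-2}$ from the closed form (via the substitution $\eta=(1-\alpha)/\lambda$) is the technical move that makes the required bound extractable.
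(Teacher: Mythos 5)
Your construction and reduction are essentially the paper's: the functions $f_i(x)=\frac{\lambda}{2}x^2\pm\frac{G}{2}x$ are exactly those of \eqref{eq:prop1construction}, and your closed form is the paper's variance term, since $\E[Y_1^2]=\frac{G^2}{4}\beta_{n,\eta,\lambda}$ with $\beta_{n,\eta,\lambda}$ as in \eqref{eq:betadef} (initializing at $x_0=x^*=0$ rather than $x_0=1$ just removes the nonnegative bias term; your exact permutation-covariance computation reproduces \eqref{eq:sumnatoanalyze}--\eqref{eq:natoanalyze}, and indeed $nS-A^2=\frac{n-1}{n}\,\alpha^{-1}(2-\alpha)\cdot$(the paper's expression) after the same algebra). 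Your final case analysis (using $r=n\eta\lambda\ge 1/(100n)$, $k\ge n$, and $1-\alpha^{2nk}\gtrsim\min(rk,1)$) is correct granted the key inequality, and the large-step tail can be patched via $nS-A^2=\sum_{i<j}(\alpha^i-\alpha^j)^2\ge(1-\alpha)^2\ge 1/4$, which gives at least $\Omega(G^2/(\lambda^2 n))$ and suffices (your stated $\Omega(G^2/\lambda^2)$ from ``per-step noise'' is not justified as written, but this is minor).

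The genuine gap is the step you yourself flag as the main obstacle: the uniform bound $N(\alpha,n)\gtrsim\min(r^3,1)$ over $r\in[1/(100n),n/2]$ is asserted (``a Taylor expansion yields $r^3/6+O(r^4)$, and one verifies\dots'') but never proven, and it is precisely the content of \lemref{lem:varterm}, whose proof is the longest and most delicate part of the paper (Appendix \ref{sec:proofvarterm}). The difficulty is exactly the third-order cancellation you mention: the Taylor remainder depends jointly on $n$ and $r$, so ``$+O(r^4)$'' is not a uniform statement, and one must show the remainder does not swallow $r^3/6$ up to $r$ of constant order, while for $r$ between a constant and $n/2$ (i.e.\ $\eta\lambda\in[\Omega(1/n),1/2]$) the expansion in $r$ is useless and the paper instead needs a separate monotonicity argument (differentiating \eqref{eq:showmonotonic} and comparing to a Taylor lower bound of $(1-\alpha)^{1-n}$), plus explicit binomial-remainder control via \lemref{lem:binomial} in the small-$r$ regime. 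Without supplying an argument of this kind, your proposal restates the target bound in an equivalent form rather than proving it; everything else in the write-up is sound, but the crux is missing.
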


\begin{proposition}\label{prop:lowerbound3}
Suppose $k>1$ and that $n$ is even. For any positive $G,\lambda$ such that 
$G\geq 4\lambda$, there 
exists a 
function $F$ on 
$\reals$ satisfying Assumption \ref{assumption}, such that for any step size 
$\eta\leq \frac{1}{100\lambda n^2}$, 
\[
\E\left[F(x_k)-\inf_{x}F(x)\right]~\geq~c\cdot\min\left\{\lambda~,~\frac{G^2}{\lambda(nk)^2}\right\}
\]
where $c>0$ is a numerical constant. 
\end{proposition}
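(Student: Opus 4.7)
The construction is a one-dimensional shifted quadratic: set $F(x)=\frac{\lambda}{2}(x-X)^2=\frac{1}{n}\sum_{i=1}^n f_i(x)$ with $f_i(x)=\frac{\lambda}{2}x^2+b_ix$, where the shift $X\in[0,1]$ and the spread $\Delta$ of the $b_i$'s (which I take split evenly between $-\lambda X\pm\Delta$) will be tuned; the hypothesis $G\geq 4\lambda$ ensures one can take $\Delta=\Theta(G)$ while respecting the $G$-Lipschitz constraint on $\{\abs{x-X}\le 1\}$. Initialize at $x_0=0$. The critical observation is that Assumption \ref{assumption} forces $a_i=\lambda/2$ uniformly across $i$---a $\lambda$-Lipschitz gradient of each $f_i$ caps $a_i$ at $\lambda/2$, while $\lambda$-strong convexity of $F$ pins their average at exactly $\lambda/2$---so each SGD step becomes the \emph{linear} recursion $x\mapsto(1-\eta\lambda)x-\eta b_{\sigma(j)}$. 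One full epoch then gives $x_t=\alpha\,x_{t-1}+\beta_t$ with $\alpha=(1-\eta\lambda)^n$ and $\beta_t=-\eta\sum_{j=1}^n(1-\eta\lambda)^{n-j}b_{\sigma_t(j)}$.

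Under random reshuffling the $\beta_t$'s are i.i.d., so $x_k=\sum_{t=1}^k\alpha^{k-t}\beta_t$. A direct computation yields $\E[\beta_t]=X(1-\alpha)$, and hence
\[
\E[F(x_k)]=\frac{\lambda}{2}\Bigl[\alpha^{2k}X^2+\var(\beta_1)\,\frac{1-\alpha^{2k}}{1-\alpha^2}\Bigr],
\]
a clean bias/variance decomposition. The key quantitative input is the estimate $\var(\beta_1)=\Theta(G^2\eta^4\lambda^2 n^3)$: the standard formula for the variance of a permutation-weighted sum gives $\var(\beta_1)=\eta^2\cdot\frac{n}{n-1}\sigma_b^2\sum_j(w_j-\bar w)^2$ with weights $w_j=(1-\eta\lambda)^{n-j}$ and $\sigma_b^2=\Theta(G^2)$, and a Taylor expansion yields $w_j-\bar w=\eta\lambda(j-(n+1)/2)+O((n\eta\lambda)^2)$. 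The step-size hypothesis $\eta\le 1/(100\lambda n^2)$ is what makes the $O$-term negligible, allowing the leading order $\Theta(\eta^2\lambda^2 n^3)$ contribution to dominate.

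From here I split into two regimes in terms of the effective time $s:=nk\eta\lambda$ (which is bounded by $k/(100n)$ under the step-size hypothesis). When $s$ is below an absolute constant, $\alpha^{2k}$ is bounded below by a positive constant and the bias term alone---with $X$ chosen proportional to $\min\{1,G/(\lambda nk)\}$---gives a lower bound of order $\min\{\lambda,G^2/(\lambda(nk)^2)\}$, matching the claimed bound. When $s$ is above that constant, $\alpha^{2k}$ is small but $(1-\alpha^{2k})/(1-\alpha^2)=\Theta(1/(n\eta\lambda))$, so the variance term becomes $\Theta(G^2\eta^3\lambda^2 n^2)$; combined with the implicit lower bound $\eta\gtrsim 1/(nk\lambda)$ coming from $s>\mathrm{const}$, this recovers the $G^2/(\lambda(nk)^2)$ target. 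The main obstacle is that the lower bound must hold uniformly over $\eta\in(0,\,1/(100\lambda n^2)]$, and since the bias and variance terms move oppositely in $\eta$, the minimum of $\E[F(x_k)]$ is attained at a balancing step size. Showing that this worst case still exceeds $c\cdot\min\{\lambda,G^2/(\lambda(nk)^2)\}$---particularly when $k\gg n$, where the balancing value may fall outside the allowed $\eta$-range and the argument must be completed at the endpoint $\eta=1/(100\lambda n^2)$---is the delicate bookkeeping step, and the upper bound on $\eta$ is used in an essential way to control the higher-order Taylor corrections in $\eta\lambda n$ inside the variance calculation.
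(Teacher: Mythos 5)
Your construction cannot prove this proposition, and the failure is precisely in the step you defer as ``delicate bookkeeping.'' Because you force all curvatures equal ($a_i=\lambda/2$), each epoch is a linear recursion with a \emph{deterministic} contraction factor $\alpha=(1-\eta\lambda)^n$, and as your own computation shows, the expected-iterate recursion $\E[x_{t+1}]=\alpha\,\E[x_t]+X(1-\alpha)$ has fixed point exactly at the minimizer $X$: there is no systematic bias beyond the exponentially decaying initial error. The error is then exactly $\tfrac{\lambda}{2}\bigl[X^2\alpha^{2k}+\var(\beta_1)\tfrac{1-\alpha^{2k}}{1-\alpha^2}\bigr]$ with $\var(\beta_1)=\Theta(G^2\eta^4\lambda^2n^3)$, so the stationary variance contribution is $\Theta(G^2\eta^3\lambda n^2)$. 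Now take the regime where Proposition \ref{prop:lowerbound3} is actually needed in the proof of \thmref{thm:main}, namely $k\geq n$, and in fact $k\gg n\log^3(nk)$. The adversarial step size is \emph{not} the endpoint $\eta=\frac{1}{100\lambda n^2}$ but an intermediate one, e.g.\ $\eta\lambda=\frac{C\log(nk)}{nk}$ with $C$ a large constant, which is admissible ($\leq\frac{1}{100n^2}$) in this regime. For that $\eta$ the bias term is $X^2(nk)^{-\Theta(C)}$, negligible, and the variance term is $\tilde{\Theta}\bigl(\frac{G^2}{\lambda^2 nk^3}\bigr)$, so $\E[F(x_k)-\inf F]=\tilde{O}\bigl(\frac{G^2}{\lambda nk^3}\bigr)$, which is $o\bigl(\frac{G^2}{\lambda (nk)^2}\bigr)$ when $k\gg n$. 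No choice of the $b_i$'s helps (the permutation variance always scales as $\frac{1}{n-1}\sum_j(w_j-\bar w)^2\sum_i(b_i-\bar b)^2$, same order), so for any homogeneous-curvature construction the claimed $\Omega(G^2/\lambda(nk)^2)$ bound is simply false in the large-$k$ regime; your construction is essentially the one used for Proposition \ref{prop:lowerbound1} (shifted), and it can only deliver the $1/nk^3$ part of the theorem.

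The missing idea is heterogeneous curvature. The paper's proof deliberately sidesteps your ``critical observation'' by proving the statement for a $\lambda/2$-strongly convex $F$ (condition number $2$, absorbed into the universal constant): it takes $f_i(x)=\frac{\lambda}{2}x^2+\frac{G}{2}x$ for $i\leq n/2$ and $f_i(x)=-\frac{G}{2}x$ for $i>n/2$, so the per-step contraction factor $1-\eta\lambda\sigma_i$ is itself random and \emph{correlated} with the additive noise within the epoch. Lemma \ref{lem:exp_prod_02} (together with the product-to-sum linearization of Lemma \ref{lem:prodtosum}, which is where $\eta\leq\frac{1}{100\lambda n^2}$ is really used) shows this correlation has a definite sign of magnitude $\Theta(\eta\lambda n)$, which displaces the expected iterate's stationary point from the minimizer by $\Theta(\eta G)$. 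That order-$\eta$ displacement, combined with $\eta\lambda n\gtrsim 1/k$ (otherwise the initial error has not decayed), is what produces $\E[x_k^2]\gtrsim(\eta G)^2\gtrsim\bigl(\frac{G}{\lambda nk}\bigr)^2$ uniformly over small step sizes -- a mechanism your purely additive-noise construction does not possess.
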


The proof of each proposition appears below, but let us first show how 
combining these implies our theorem. We consider two cases:
\begin{itemize}
\item If $k\leq n$, then $\frac{1}{nk^3}\geq \frac{1}{(nk)^2}$, so by 
Proposition \ref{prop:lowerbound1},
\[
\E\left[F(x_k)-\inf_{x}F(x)\right]~\geq~ 
c\cdot\min\left\{\lambda~,~\frac{G^2}{\lambda nk^3}\right\}
~\geq~ 
c\cdot\min\left\{\lambda~,~\frac{G^2}{2\lambda}\left(\frac{1}{(nk)^2}+\frac{1}{nk^3}
\right)\right\}~.
\]
\item If $k\geq n$ (which implies $k>1$ since $n$ is even), we have 
$\frac{1}{nk^3}\leq \frac{1}{(nk)^2}$, and by combining 
Proposition \ref{prop:lowerbound2} and Proposition \ref{prop:lowerbound3} 
(which together cover any positive step size), 
\[
\E\left[F(x_k)-\inf_{x}F(x)\right]~\geq~c\cdot\min\left\{\lambda~,~\frac{G^2}{\lambda(nk)^2}\right\}
~\geq~c\cdot\min\left\{\lambda~,~\frac{G^2}{2\lambda}\left(\frac{1}{(nk)^2}+\frac{1}{nk^3}\right)\right\}
\]
\end{itemize}
Thus, in any case we get $\E\left[F(x_k)-\inf_{x}F(x)\right]\geq 
c\cdot\min\left\{\lambda~,~\frac{G^2}{2\lambda}
\left(\frac{1}{(nk)^2}+\frac{1}{nk^3}\right)\right\}$, from which the result 
follows. We remark that to combine Propositions \ref{prop:lowerbound2} and \ref{prop:lowerbound3}, one must combine the two constructions in the propositions using a bivariate function, where each dimension utilizes a different proposition in an orthogonal direction. This way we are guaranteed to obtain the worst convergence rate of each proposition, resulting in the desired lower bound.

\subsubsection{Proof of Proposition \ref{prop:lowerbound1}}

We will need the following key technical lemma, whose proof (which is rather 
long and technical) appears in Appendix \ref{sec:proofvarterm}:
\begin{lemma}\label{lem:varterm}
	Let $\sigma_0,\ldots,\sigma_{n-1}$ (for even $n$) be a random permutation 
	of $(1,1,\ldots,1,-1,-1,\ldots,-1)$ (where both $1$ and $-1$ appear exactly 
	$n/2$ times). Then there is a numerical constant $c>0$, such that for any 
	$\alpha>0$,
	\[
	\E\left[\left(\sum_{i=0}^{n-1}\sigma_i (1-\alpha)^{i}\right)^2\right]~\geq~
	c\cdot \min\left\{1+\frac{1}{\alpha}~,~n^3\alpha^2\right\}
	\]
\end{lemma}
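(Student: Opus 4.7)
My plan is to start by deriving a closed form for $\E[S^2]$, where $S := \sum_{i=0}^{n-1}\sigma_i a^i$ with $a := 1-\alpha$. Using that $\sum_i \sigma_i = 0$ and $\sigma_i^2 = 1$ almost surely, symmetry forces $\E[\sigma_i\sigma_j] = -1/(n-1)$ for $i \neq j$, so
\[
\E[S^2] \;=\; \frac{1}{n-1}\Bigl(n \sum_i a^{2i} - \bigl(\textstyle\sum_i a^i\bigr)^2\Bigr) \;=\; \frac{1}{n-1}\sum_{0\le i<j\le n-1}(a^i-a^j)^2.
\]
This manifestly non-negative sum-of-squares identity, together with the geometric-series expressions $T_1 := \sum_i a^{2i} = (1-a^{2n})/(\alpha(2-\alpha))$ and $T_2 := \sum_i a^i = (1-a^n)/\alpha$ and their algebraic consequence
\[
(nT_1-T_2^2)\cdot\alpha^2(2-\alpha) \;=\; (1-a^n)\bigl[((n+1)\alpha-2) + a^n((n-1)\alpha+2)\bigr],
\]
will be the main workhorse.

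The analysis will split on the product $\beta := n\alpha$, since the two terms in the $\min$ cross at $\beta \asymp 1$: $n^3\alpha^2$ is the smaller term exactly when $\beta \lesssim 1$. For $\beta$ small (say $\beta \le c_0$ with $c_0$ a small absolute constant), I will lower-bound each summand of the sum-of-squares directly. In this regime the elementary inequalities $(1-\alpha)^i \ge e^{-2i\alpha}$ (for $\alpha\le 1/2$) and $1-(1-\alpha)^d \ge d\alpha/2$ (valid when $d\alpha\le 1$, via $1-e^{-x}\ge x/2$ on $[0,1]$) give $(a^i-a^j)^2 \gtrsim (j-i)^2\alpha^2$. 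Summing using $\sum_{d=1}^{n-1}(n-d)d^2 = \Theta(n^4)$ and dividing by $n-1$ will yield $\E[S^2] \gtrsim n^3\alpha^2$, matching the lemma in this regime.

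For $\beta$ large (say $\beta \ge c_0$), I will use the closed-form identity instead. Since $a^n\le e^{-n\alpha}$ is bounded away from $1$ here, the estimate $nT_1 \ge n(1-a^{2n})/(2\alpha) \gtrsim n/\alpha$ combined with the crude $T_2^2 \le 1/\alpha^2$ implies $nT_1-T_2^2 \gtrsim n/\alpha$ whenever $n\alpha$ exceeds an absolute constant, yielding $\E[S^2] \gtrsim 1/\alpha$. The narrow intermediate window where $\beta$ is bounded can be absorbed into either regime, since $n^3\alpha^2$ and $1/\alpha$ are of the same order there. The case $\alpha \ge 1$, where the target $1+1/\alpha$ is only a constant, will be handled separately: at $\alpha=1$ the sum collapses to $S=\sigma_0$ giving $\E[S^2]=1$; for $\alpha\in(1,2]$ the same algebraic identity still delivers $nT_1-T_2^2\gtrsim n$; and for $\alpha>2$, $|a|>1$ forces exponentially large terms such as $(a^0-a^{n-1})^2\asymp a^{2(n-1)}$ that render the lower bound trivial.

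The principal technical obstacle will be the delicate near-cancellation underlying the large-$\beta$ estimate: $nT_1$ and $T_2^2$ are of comparable size, and the specific algebraic form displayed above is needed to isolate a residual of order $\Theta(n/\alpha)$ rather than something smaller. A secondary book-keeping challenge is matching the small-$\beta$ and large-$\beta$ analyses across the threshold $\beta\sim 1$; I will resolve this by exploiting that the two candidate lower bounds agree up to constants in that window, so neither regime needs to be tight all the way to the boundary.
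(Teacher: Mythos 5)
Your starting point is the same as the paper's (the exchangeability computation $\E[\sigma_i\sigma_j]=-\tfrac{1}{n-1}$ for $i\neq j$, giving the exact expression $\frac{1}{n-1}\bigl(n\sum_i a^{2i}-(\sum_i a^i)^2\bigr)$ with $a=1-\alpha$), but after that you take a genuinely different route. The paper stays with the closed geometric-series form throughout: the regime $\alpha\in[1/13n,1)$ is handled by showing a certain function of $\alpha$ is monotonically increasing and numerically evaluating it at $\alpha=1/13n$, and the regime $\alpha<1/13n$ by a third-order binomial expansion with explicit Lagrange-remainder control. You instead rewrite the expression as the manifestly nonnegative $\frac{1}{n-1}\sum_{i<j}(a^i-a^j)^2$ and get the $n^3\alpha^2$ bound from elementary per-pair estimates, keeping the closed form only for large $n\alpha$, where a crude comparison of $nT_1$ and $T_2^2$ suffices. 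This is arguably cleaner in the small-$n\alpha$ regime (no Taylor remainders or numerically verified constants), and in fact the pair-counting alone could also cover the large-$n\alpha$ regime (pairs with $i\le 1/(2\alpha)$ and $j-i\ge 1/\alpha$ each contribute a constant, and there are order $n/\alpha$ of them), so your identity could carry the whole proof for $\alpha<1$.

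Two steps need repair as written. First, the case $\alpha>2$: the pair $(0,n-1)$ is \emph{not} exponentially large when $\alpha$ is only slightly above $2$; e.g.\ for $\alpha=2+1/n^2$ one has $(a^0-a^{n-1})^2\approx 4$, so after dividing by $n-1$ this single term gives only $O(1/n)$, not the required constant. The fix is immediate within your own identity: each of the $n-1$ adjacent pairs contributes $(a^i-a^{i+1})^2=a^{2i}\alpha^2\ge\alpha^2>4$ (since $|a|\ge1$), so $\E[S^2]\ge 4$, which beats $1+1/\alpha\le 3/2$ there. Second, the crossover window: your crude large-$\beta$ estimate, $nT_1\ge n(1-a^{2n})/(2\alpha)$ versus $T_2^2\le 1/\alpha^2$, yields $nT_1-T_2^2\ge\frac{n}{\alpha}\bigl(\frac{1-e^{-2\beta}}{2}-\frac1\beta\bigr)$, which is vacuous (indeed negative) unless $\beta=n\alpha$ exceeds roughly $2$; so the window cannot be ``absorbed into either regime''---it must be absorbed on the small-$\beta$ side, by truncating the sum of squares to pairs with $(j-i)\alpha\le 1$ (and, when $\alpha\in(1/2,1)$, using the pairs $(0,j)$, for which $(1-a^j)^2\ge\alpha^2\ge 1/4$). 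This truncation still yields order $n^3\alpha^2\asymp 1/\alpha$ throughout the bounded-$\beta$ window, so your plan does go through, but the argument is the truncated pair count, not the mere observation that the two candidate bounds agree in order there.
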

Let $G,\lambda,n$ be fixed (assuming 
$G\geq 
2\lambda$ and $n$ is even). We will 
use the following function:
\[
F(x)=\frac{1}{n}\sum_{i=1}^{n}f_i(x)~=~\frac{\lambda}{2}x^2~,
\]
where $\inf_{x}F(x)=0$, and
\begin{equation}\label{eq:prop1construction}
f_i(x)=\begin{cases} \frac{\lambda}{2}x^2+\frac{G}{2}x& i\leq \frac{n}{2}\\
\frac{\lambda}{2}x^2-\frac{G}{2}x& i> \frac{n}{2}\end{cases}~.
\end{equation}
Also, we assume that the algorithm is initialized at $x_0=1$.
On this function, we have that during any 
single epoch, we 
perform $n$ iterations of the form
\[
x_{new}=(1-\eta\lambda)x_{old}+\frac{\eta G}{2}\sigma_i,
\]
where $\sigma_0,\ldots,\sigma_{n-1}$ are a random permutation of $\frac{n}{2}$ 
$1$'s and $\frac{n}{2}$ $-1$'s. Repeatedly applying this inequality, we get 
that after $n$ iterations, the relationship between the first and last iterates 
in the epoch satisfy
\begin{align}
x_{t+1}~&=~(1-\eta\lambda)^n x_t+\frac{\eta 
G}{2}\sum_{i=0}^{n-1}\sigma_i(1-\eta\lambda)^{n-i-1}\nonumber\\
&= ~(1-\eta\lambda)^n x_t+\frac{\eta 
	G}{2}\sum_{i=0}^{n-1}\sigma_i(1-\eta\lambda)^{i}~\label{eq:single_epoch}.
\end{align}
(in the last equality, we used the fact that $\sigma_1,\ldots,\sigma_n$ are 
exchangeable). Using this and the fact that $\E[\sigma_i]=0$, we get that
\begin{equation}\label{eq:recurse_of_1}
\E[x_{t+1}^2]~=~(1-\eta\lambda)^{2n}\E[x_t^2]+\left(\frac{\eta G}{2}\right)^2
\cdot\beta_{n,\eta,\lambda}~,
\end{equation}
where
\begin{equation}\label{eq:betadef}
\beta_{n,\eta,\lambda} ~:=~ 
\E\left[\left(\sum_{i=0}^{n-1}\sigma_i(1-\lambda\eta)^{n-i-1}\right)^2\right]~=~
\E\left[\left(\sum_{i=0}^{n-1}\sigma_i(1-\lambda\eta)^{i}\right)^2\right].
\end{equation}
Note that if $\lambda\eta\geq 1$, then by \lemref{lem:varterm}, 
$\beta_{n,\eta,\lambda}\geq c$ for some positive constant $c$, and we get that
\[
\E[x_{t+1}^2]~\geq~\left(\frac{\eta G}{2}\right)^2\cdot c~\geq~
\left(\frac{G}{2\lambda}\right)^2\cdot c
\]
for all $t$, and therefore $\E[F(x_k)]=\frac{\lambda}{2}\E[x_k^2]\geq 
c\frac{G^2}{8\lambda}\geq c\frac{G^2}{8\lambda nk^3}$, so the proposition we 
wish to prove holds. Thus, we will assume from now on that $\lambda\eta <1$.

With this assumption, repeatedly applying \eqref{eq:recurse_of_1} and 
recalling that $x_0=1$, we have
\begin{align}
\E[x_k^2]~&\geq~ (1-\eta\lambda)^{2nk}+\left(\frac{\eta 
G}{2}\right)^2\cdot\beta_{n,\eta,\lambda}\sum_{t=0}^{k-1}(1-\eta\lambda)^{2nt}\notag\\
&=~ (1-\eta\lambda)^{2nk}+\left(\frac{\eta 
	G}{2}\right)^2\cdot\beta_{n,\eta,\lambda}\cdot\frac{1-(1-\eta\lambda)^{2nk}}{
1-(1-\eta\lambda)^{2n}}~.\label{eq:recursek}
\end{align}
We now consider a few cases (recalling that the case $\eta\lambda\geq 1$ was 
already treated earlier):
\begin{itemize}
\item If $\eta\lambda \leq \frac{1}{2nk}$, then we have
\[
\E[x_k^2]~\geq~ (1-\eta\lambda 
)^{2nk}~\geq~\left(1-\frac{1}{2nk}\right)^{2nk}\geq 
\frac{1}{4}
\]
for all $n,k$.
\item If $\eta\lambda \in \left(\frac{1}{2nk},\frac{1}{2n}\right)$ then by 
Bernoulli's inequality, we have $1\geq(1-\eta\lambda)^{2n}\geq 
1-2n\eta\lambda>0$, 
and therefore, by \eqref{eq:recursek}
\[
\E[x_k^2]~\geq~ 
\frac{\eta^2
G^2\beta_{n,\eta,\lambda}(1-(1-1/2nk)^{2nk})}{4(1-(1-2n\eta\lambda))}
~\geq~
\frac{\eta 
G^2\beta_{n,\eta,\lambda}(1-\exp(-1))}{8\lambda 
n}~.
\]
Plugging in \lemref{lem:varterm} and simplifying a bit, this is at 
least
\[
\frac{c\eta G^2}{\lambda n}\cdot 
\min\left\{\frac{1}{\eta\lambda},n^3(\eta\lambda)^2\right\}~=~
\frac{c\eta G^2}{\lambda n}\cdot n^3(\eta\lambda)^2~=~
c\eta^3\lambda n^2 G^2
\]
for some numerical constant $c>0$. Using the assumption that 
$\eta\lambda \geq \frac{1}{2nk}$ (which implies $\eta\geq \frac{1}{2\lambda 
nk}$), this is at least 
\[
\frac{c}{8}\cdot\frac{G^2}{\lambda^2 nk^3}~.
\]
\item If $\eta\lambda\in \left[\frac{1}{2n},1\right)$, then 
$\frac{1-(1-\eta\lambda)^{2nk}}{1-(1-\eta\lambda)^{2n}}$ is at least some 
numerical constant $c>0$, so \eqref{eq:recursek} implies
\[
\E[x_k^2]~\geq~ c\left(\frac{\eta 
	G}{2}\right)^2\cdot\beta_{n,\eta,\lambda}~.
\]
By \lemref{lem:varterm}, this is at least
\[
c'\left(\frac{\eta 
	G}{2}\right)^2\cdot\min\left\{1+\frac{1}{\eta\lambda}~,~n^3(\eta\lambda)^2\right\}
~=~c'\left(\frac{\eta 
	G}{2}\right)^2\left(1+\frac{1}{\eta\lambda}\right)~\geq~
\frac{c'\eta G^2}{4\lambda}
\]
Since $\eta\geq \frac{1}{2\lambda n}$, this is at 
least
\[
\frac{c'G^2}{8\lambda^2 n}~\geq~\frac{c'G^2}{8\lambda^2 nk^3}~. 
\]
\end{itemize}
Combining all the cases, we get overall that
\[
\E[x_k^2]~\geq~ c\cdot \min\left\{1,\frac{G^2}{\lambda^2 nk^3}\right\}
\]
for some numerical constant $c>0$. Noting that 
$\E[F(x_k)]=\E\left[\frac{\lambda}{2}x_k^2\right]=\frac{\lambda}{2}\E\left[x_k^2\right]$
 and combining with the above, the result follows.

\subsubsection{Proof of Proposition \ref{prop:lowerbound2}}

We use the same construction as in the proof of Proposition 
\ref{prop:lowerbound1}, where $F(x)=\frac{\lambda}{2}x^2$, and leading to 
\eqref{eq:recursek}, namely
\begin{equation}\label{eq:recursek2}
\E[x_k^2]~\geq~ (1-\eta\lambda)^{2nk}+\left(\frac{\eta 
	G}{2}\right)^2\cdot\beta_{n,\eta,\lambda}\cdot\frac{1-(1-\eta\lambda)^{2nk}}{
	1-(1-\eta\lambda)^{2n}}~,
\end{equation}
where $\beta_{n,\eta,\lambda}=\E\left[\left(\sum_{i=0}^{n-1}\sigma_i 
(1-\lambda\eta)^i\right)^2\right]$, $\sigma_0,\ldots,\sigma_n$ are a random 
permutation of $\frac{n}{2}$ $1$'s and $\frac{n}{2}$ $-1$'s.

As in the proof of Proposition \ref{prop:lowerbound1}, we consider several 
regimes of $\eta\lambda$. In the same manner as in that proof, it is easy to 
verify that when $\eta\lambda>1$ or $\eta\lambda \leq \frac{1}{2nk}$, then 
$\E[x_k^2]$ is at least a positive constant (hence $\E[F(x_k)]\geq 
\Omega(\lambda)$) , and when $\eta\lambda \in \left[\frac{1}{2n},1\right)$, 
$\E[x_k^2]\geq \frac{c'G^2}{2\lambda^2 n}$ for a numerical constant $c'>0$ 
(hence $\E[F(x_k)]\geq \Omega(G^2/\lambda n)$). In both these cases, the 
statement in our proposition follows, so it is enough to consider the regime 
$\eta\lambda\in \left(\frac{1}{2nk},\frac{1}{2n}\right)$. 

In this regime, by Bernoulli's inequality, we have $0< 
1-(1-\eta\lambda)^{2n}\leq 1-(1-2n\eta\lambda)=2n\eta\lambda$, so we
can lower bound \eqref{eq:recursek2} by
\[
\left(\frac{\eta 
	G}{2}\right)^2\cdot\beta_{n,\eta,\lambda}\frac{1-(1-\eta\lambda)^{2nk}}{2n\eta\lambda}~=~
\frac{\eta G^2\beta_{n,\eta,\lambda}(1-(1-\eta\lambda)^{2nk})}{8\lambda n}~.
\]
Since we assume $\eta\lambda\geq \frac{1}{2nk}$, it follows that 
$1-(1-\eta\lambda)^{2nk}\geq 1-(1-1/2nk)^{2nk}\geq c$ for some positive $c>0$. 
Plugging this and the
bound for $\beta_{n,\eta,\lambda}$ from \lemref{lem:varterm}, the displayed 
equation above is at least
\[
\frac{c\eta G^2}{8\lambda n} 
\cdot\min\left\{\frac{1}{\eta\lambda},n^3(\eta\lambda)^2\right\}~=~
\frac{c\eta G^2}{8\lambda n} 
\cdot n^3(\eta\lambda)^2~=~
\frac{c}{8}G^2\lambda 
\eta^3n^2~.
\]
Since we assume $\eta\geq \frac{1}{100\lambda 
n^2}$, this is at least
\[
c'\cdot\frac{G^2}{\lambda^2 
n^4}
\]
for some numerical $c'>0$. Since we assume that $k\geq n$, this is at least 
$c'\cdot\frac{G^2}{\lambda^2 (nk)^2}$. Noting that 
$\E[F(x_k)]=\E\left[\frac{\lambda}{2}x_k^2\right]=\frac{\lambda}{2}\E\left[x_k^2\right]$
 and combining with the above, the result follows.

\subsubsection{Proof of Proposition \ref{prop:lowerbound3}}

To simplify some of the notation, we will prove the 
result for a function which is $\lambda/2$-strongly convex (rather than 
$\lambda$-strongly convex), assuming $G\geq 2\lambda$, and notice that this 
only 
affects the universal 
constant $c$ in the bound. Specifically, we use the following function:
\[
F(x)=\frac{1}{n}\sum_{i=1}^{n}f_i(x)~=~\frac{\lambda}{4}x^2~,
\]
where $\inf_{x}F(x)=0$, and
\[
f_i(x)=\begin{cases} \frac{\lambda}{2} x^2+\frac{G}{2}x& i\leq \frac{n}{2}\\
-\frac{G}{2}x& i> \frac{n}{2}\end{cases}~.
\]
Also, we assume that the algorithm is initialized at $x_0=-1$.
On this function, we have that during any 
single epoch, we 
perform $n$ iterations of the form
\[
x_{new}=(1-\eta\lambda\sigma_i)x_{old}+\frac{\eta G}{2}(1-2\sigma_i),
\]
where $\sigma_0,\ldots,\sigma_{n-1}$ are a random permutation of $\frac{n}{2}$ 
$1$'s and $\frac{n}{2}$ $0$'s. Repeatedly applying this equation, we get 
that after $n$ iterations, the relationship between the iterates $x_t$ and 
$x_{t+1}$ is
\begin{equation}\label{eq:recursionform}
x_{t+1}~=~x_t\cdot\prod_{i=0}^{n-1}(1-\eta\lambda\sigma_i) +\frac{\eta 
G}{2}\sum_{i=0}^{n-1}(1-2\sigma_i)\prod_{j=i+1}^{n-1}(1-\eta\lambda\sigma_j)
\end{equation}
As a result, and using the fact that $\sigma_1,\ldots,\sigma_n$ are independent 
of $x_t$ and in $\{0,1\}$, we have
\begin{align}
\E[x_{t+1}^2]~&\geq~ 
\E\left[x_t^2\cdot\prod_{i=0}^{n-1}(1-\eta\lambda\sigma_i)^2\right]+\eta G\cdot
\E\left[x_t\left(\prod_{i=0}^{n-1}(1-\eta\lambda\sigma_i)\right)
\left(\sum_{i=0}^{n-1}(1-2\sigma_i)\prod_{j=i+1}^{n-1}(1-\eta\lambda\sigma_j)\right)
\right]\notag\\
&\geq~
(1-\eta\lambda)^{n}\cdot \E[x_t^2]+\eta G\cdot
\E[x_t]\cdot \E\left[\left(\prod_{i=0}^{n-1}(1-\eta\lambda\sigma_i)\right)
\left(\sum_{i=0}^{n-1}(1-2\sigma_i)\prod_{j=i+1}^{n-1}(1-\eta\lambda\sigma_j)\right)
\right]\label{eq:recurse2}
\end{align}
We now wish to use \lemref{lem:prodtosum} from Appendix \ref{sec:technical}, in 
order to replace the products in 
the expression above by sums. To that end, and in order to simplify the 
notation, define
\begin{equation}\label{eq:AB}
A:=\prod_{i=0}^{n-1}(1-\eta\lambda\sigma_i)~~,~~B_{i}:=\prod_{j=i+1}^{n-1}(1-\eta\lambda
 \sigma_j)~~,~~\tilde{A}:=1-\eta\lambda\sum_{i=1}^{n}\sigma_i=1-\frac{\eta\lambda
  n}{2}~~,~~\tilde{B}_i:=1-\eta\lambda\sum_{j=i+1}^{n}\sigma_j~,
\end{equation}
and note that by \lemref{lem:prodtosum},
\begin{equation}\label{eq:ABprod}
A\sum_{i=0}^{n-1}(1-2\sigma_i)B_i ~\leq~ 
\left(\tilde{A}\pm 2\left(\eta\lambda\sum_{i=0}^{n-1}\sigma_i\right)^2\right)
\left(\sum_{i=0}^{n-1}(1-2\sigma_i)\tilde{B}_i\pm 
2\sum_{i=0}^{n-1}\left(\eta\lambda
\sum_{j=i+1}^{n-1}\sigma_j\right)^2\right)~,
\end{equation}
where $\pm$ is taken to be either plus or minus depending on the 
sign of $\tilde{A}$ and $\sum_{i=0}^{n-1}(1-2\sigma_i)\tilde{B}_i$, to make the 
inequality valid (we note that eventually we will show that these terms are 
relatively negligible). Opening the product, and using the deterministic upper 
bounds
\begin{equation}\label{eq:detupbound1}
|\tilde{A}|\leq 1~~,~~
\left(\eta\lambda\sum_{i=0}^{n-1}\sigma_i\right)^2\leq 
(\eta\lambda n)^2
\end{equation}
and
\begin{equation}\label{eq:detupbound2}
\left|\sum_{i=0}^{n-1}(1-2\sigma_i)\tilde{B}_i\right|\leq n~~,~~
\sum_{i=0}^{n-1}\left(\eta\lambda
\sum_{j=i+1}^{n-1}\sigma_j\right)^2~\leq~ n(\eta\lambda n)^2~\leq 
\frac{1}{10^4 n}~,
\end{equation}
(which follow from the assumption that $\eta\leq \frac{1}{100\lambda n^2}$), we 
can upper 
bound \eqref{eq:ABprod} by
\[
\tilde{A}\sum_{i=0}^{n-1}(1-2\sigma_i)\tilde{B}_i+2(\eta\lambda 
n)^2\cdot 
\left(n+\frac{2}{100n}\right)
+n(\eta\lambda n)^2 ~\stackrel{(*)}{\leq}~ 
\tilde{A}\sum_{i=0}^{n-1}(1-2\sigma_i)\tilde{B}_i+\frac{301}{100}(\eta\lambda)^2n^3~,
\]
where in $(*)$ we used the fact that $n\geq 2$ and therefore 
$n+\frac{2}{100n}\leq n+\frac{1}{100}\leq (1+\frac{1}{200})n$. 
Substituting 
back the definitions 
of $\tilde{A},\tilde{B}$ and plugging back into \eqref{eq:ABprod}, we get 
that 
\begin{align*}
\E&\left[\left(\prod_{i=0}^{n-1}(1-\eta\lambda\sigma_i)\right)
\left(\sum_{i=0}^{n-1}(1-2\sigma_i)\prod_{j=i+1}^{n-1}(1-\eta\lambda\sigma_j)\right)
\right]\notag\\
&\leq~
\left(1-\frac{\eta\lambda n}{2}\right)\cdot\E\left[
\left(\sum_{i=0}^{n-1}(1-2\sigma_i)(1-\eta\lambda\sum_{j=i+1}^{n}\sigma_j)\right)\right]
+\frac{301}{100}(\eta\lambda)^2n^3\notag\\
&\stackrel{(*)}{\leq}~\eta\lambda n\left(-\left(1-\frac{\eta\lambda 
n}{2}\right)\frac{n+1}{4(n-1)}+\frac{301}{100}\eta\lambda n^2\right)~,
\end{align*}
where $(*)$ is by \lemref{lem:exp_prod_02}. 
Using the assumptions that $\eta\leq \frac{1}{100\lambda n^2}$ (hence 
$\eta\lambda n\leq \eta\lambda n^2\leq \frac{1}{100}$) and $n\geq 2$, 
this is at most
$-c\eta\lambda n$ for a numerical constant $c>0.2$. Summarizing this part of 
the proof, we have shown that
\begin{equation}\label{eq:bigproduct}
\E\left[\left(\prod_{i=0}^{n-1}(1-\eta\lambda\sigma_i)\right)
\left(\sum_{i=0}^{n-1}(1-2\sigma_i)\prod_{j=i+1}^{n-1}(1-\eta\lambda\sigma_j)\right)
\right]~\leq~ -c\eta\lambda n~.
\end{equation}

Next, we turn to analyze the $\E[x_t]$ term in \eqref{eq:recurse2}. By 
\eqref{eq:recursionform}, and the fact that $\sigma_i$ is independent of $x_t$, 
we have
\[
\E[x_{t+1}]~=~\E[x_t]\cdot\E\left[\prod_{i=0}^{n-1}(1-\eta\lambda\sigma_i)\right]
 +\frac{\eta 
G}{2}\E\left[\sum_{i=0}^{n-1}(1-2\sigma_i)\prod_{j=i+1}^{n-1}(1-\eta\lambda\sigma_j)\right]~.
\]
Again using the notation from \eqref{eq:AB}, 
\lemref{lem:prodtosum}, and the deterministic upper bounds in 
\eqref{eq:detupbound1} and \eqref{eq:detupbound2}, this can be written as
\begin{align*}
\E[x_{t+1}]~&=~\E[x_t]\cdot\E[A]
 +\frac{\eta 
G}{2}\E\left[\sum_{i=0}^{n-1}(1-2\sigma_i)B_i\right]\\
&\leq~
\E[x_t]\cdot\left(\E[\tilde{A}]\pm 
2\left(\eta\lambda\sum_{i=0}^{n-1}\sigma_i\right)^2
\right) +\frac{\eta 
G}{2}\E\left[\sum_{i=0}^{n-1}(1-2\sigma_i)\tilde{B}_i\pm 2\sum_{i=0}^{n-1}
\left(\eta\lambda\sum_{j=i+1}^{n-1}\sigma_j\right)^2\right]\\
&\leq~
\E[x_t]\cdot\left(\left(1-\frac{\eta\lambda 
n}{2}\right)\pm 2(\eta\lambda n)^2
\right) +\frac{\eta 
G}{2}\E\left[\sum_{i=0}^{n-1}(1-2\sigma_i)\tilde{B}_i\pm 2n(\eta\lambda
 n)^2\right].
\end{align*}
Recalling that 
$\E\left[\sum_{i=0}^{n-1}(1-2\sigma_i)\tilde{B}_i\right]=\E\left[
\sum_{i=0}^{n-1}(1-2\sigma_i)(1-\eta\lambda\sum_{j=i+1}^{n}\sigma_j)\right]$ 
and using \lemref{lem:exp_prod_02}, the above is at most
\[
\E\left[x_t\right]\cdot\left(1-\eta\lambda n\left(\frac{1}{2}\pm 
2\eta\lambda n\right)\right)-\frac{\eta^2\lambda n G}{2}
\left(\frac{n+1}{4(n-1)}\pm 2n^2\eta\lambda\right)~.
\]
Using the assumption $\eta\leq \frac{1}{100\lambda n^2}$ and that $n\geq 2$, 
it follows that
\begin{align*}
\E[x_{t+1}]~&\leq~\E\left[x_t\right]\cdot\left(1-\eta\lambda 
n\left(\frac{1}{2}\pm 
\frac{2}{100}\right)\right)-\frac{\eta^2\lambda n 
G}{2}\left(\frac{3}{4}\pm \frac{2}{100}\right)\\
&\leq~
\E[x_t]\cdot\left(1-\eta\lambda n\left(\frac{1}{2}\pm 
\frac{2}{100}\right)\right)-\frac{\eta^2\lambda n 
G}{2}~.
\end{align*}
This inequality implies that if $\E[x_t]\leq 0$, then $\E[x_{t+1}]\leq 0$. 
Since the algorithm is initialized at $x_0=-1$, it follows by induction that 
$\E[x_{t}]\leq 0$ for all $t$, so the inequality above implies that
\[
\E[x_{t+1}]~\leq~\E[x_t]\cdot\left(1-\frac{\eta\lambda 
n}{3}\right)-\frac{\eta^2\lambda n G}{2}~.
\]
Opening the recursion, and using the fact that $x_0=-1$, it follows that
\begin{align*}
\E[x_{t}]~&\leq~-\left(1-\frac{\eta\lambda 
n}{3}\right)^{t}-\frac{\eta^2\lambda n 
G}{2}\sum_{i=0}^{t-1}\left(1-\frac{\eta\lambda n}{3}\right)^i\\
&=~-\left(1-\frac{\eta\lambda 
n}{3}\right)^{t}-\frac{\eta^2\lambda n G}{2(\eta\lambda 
n/3)}\left(1-\left(1-\frac{\eta\lambda n}{3}\right)^{t}\right)\\
&=~
-\left(1-\frac{\eta\lambda 
n}{3}\right)^{t}-\frac{3\eta G}{2}\left(1-\left(1-\frac{\eta\lambda 
n}{3}\right)^{t}\right)~.
\end{align*}
Plugging this and \eqref{eq:bigproduct} into \eqref{eq:recurse2}, we get that
\begin{align*}
\E[x_{t+1}^2]~&\geq~ 
(1-\eta\lambda)^{n}\cdot \E[x_t^2]+\eta G\cdot 
\left(\left(1-\frac{\eta\lambda 
n}{3}\right)^{t}+\frac{3\eta G}{2}\left(1-\left(1-\frac{\eta\lambda 
n}{3}\right)^{t}\right)\right)\cdot c\eta\lambda n\\
&\geq~
(1-\eta\lambda n)\cdot \E[x_t^2]+c\eta^2G\lambda n\cdot 
\left(\left(1-\frac{\eta\lambda 
n}{3}\right)^{t}+\frac{3\eta G}{2}\left(1-\left(1-\frac{\eta\lambda 
n}{3}\right)^{t}\right)\right)~,
\end{align*}
where in the last step we used Bernoulli's inequality. Applying this 
inequality recursively and recalling that $x_0=-1$, it follows that
\begin{equation}\label{eq:bigcomplicated}
\E[x_k^2]~\geq~
(1-\eta\lambda n)^k+c\eta^2 G\lambda n
\sum_{t=0}^{k-1}\left(\left(1-\frac{\eta\lambda 
n}{3}\right)^{t}+\frac{3\eta G}{2}\left(1-\left(1-\frac{\eta\lambda 
n}{3}\right)^{t}\right)\right)\cdot(1-\eta\lambda n)^{k-1-t}
\end{equation}
We now consider two cases:
\begin{itemize}
\item If $\eta\lambda n\leq \frac{1}{2k}$, then \eqref{eq:bigcomplicated} 
implies
\[
\E[x_k^2]~\geq~(1-\eta\lambda 
n)^k~\geq~\left(1-\frac{1}{2k}\right)^k~\geq~\frac{1}{2}
\]
for all $k$.
\item If $\eta\lambda n\geq \frac{1}{2k}$, then \eqref{eq:bigcomplicated} 
implies
\begin{align*}
\E[x_k^2]~&\geq~c\eta^2 G\lambda n
\sum_{t=0}^{k-1}\left(\frac{3\eta G}{2}\left(1-\left(1-\frac{\eta\lambda 
n}{3}\right)^{t}\right)\right)\cdot(1-\eta\lambda n)^{k-1-t}\\
&=~\frac{3c\eta^3G^2\lambda 
n}{2}\sum_{t=0}^{k-1}\left(1-\left(1-\frac{\eta\lambda 
n}{3}\right)^{t}\right)\cdot\left(1-\eta\lambda 
n\right)^{k-1-t}\\
&\geq~\frac{3c\eta^3G^2\lambda 
n}{2}\sum_{t=\lfloor k/2\rfloor}^{k-1}\left(1-\left(1-\frac{\eta\lambda 
n}{3}\right)^{t}\right)\cdot\left(1-\eta\lambda 
n\right)^{k-1-t}\\
&\geq~\frac{3c\eta^3G^2\lambda 
n}{2}\sum_{t=\lfloor k/2\rfloor}^{k-1}\left(1-\left(1-\frac{\eta\lambda 
n}{3}\right)^{\lfloor k/2\rfloor}\right)\cdot\left(1-\eta\lambda 
n\right)^{k-1-t}~.
\end{align*}
Since we assume $\eta\lambda n\geq \frac{1}{2k}$, this is at least
\[
\frac{3c\eta^3G^2\lambda 
n}{2}\sum_{t=\lfloor 
k/2\rfloor}^{k-1}\left(1-\left(1-\frac{1}{6k}\right)^{\lfloor 
k/2\rfloor}\right)\cdot\left(1-\eta\lambda 
n\right)^{k-1-t}~.
\]
Since we assume in the proposition $k>1$, 
$\left(1-\left(1-\frac{1}{6k}\right)^{\lfloor k/2\rfloor}\right)$ can be 
verified to be at least some positive constant $c'>0.04$. Thus, we can lower 
bound the above by
\[
\frac{3cc'\eta^3G^2\lambda 
n}{2}\sum_{t=\lfloor 
k/2\rfloor}^{k-1}\left(1-\eta\lambda 
n\right)^{k-1-t}~=~
\frac{3cc'\eta^3G^2\lambda 
n}{2}\cdot\sum_{t=0}^{k-1-\lfloor k/2\rfloor}\left(1-\eta\lambda 
n\right)^{t}~.
\]
Since $\sum_{i=0}^{r}a^i = \frac{1-a^{r+1}}{1-a}$ for any $a\in (0,1)$ (and 
moreover, $\eta\lambda n\in (0,1)$ by the assumption that $\eta\leq 
\frac{1}{100\lambda n^2}$), the above equals
\[
\frac{3cc'\eta^3G^2\lambda 
n}{2}\cdot\frac{1-(1-\eta\lambda n)^{k-\lfloor k/2\rfloor}}{\eta\lambda n}
~\geq~
\frac{3cc'\eta^2G^2}{2}\cdot\left(1-\left(1-\frac{1}{2k}\right)^{k-\lfloor 
k/2\rfloor}\right)~,
\]
where again we used the assumption $2\eta\lambda n\geq \frac{1}{2k}$. It is 
easily verified that $1-\left(1-\frac{1}{2k}\right)^{k-\lfloor k/2\rfloor}$ is 
lower bounded by a positive constant $>0.2$, so we can lower bound the 
above by $c''(\eta G)^2$ for some numerical constant $c''>0$. Recalling that 
this is a lower bound on $\E[x_k^2]$, and once again using the assumption 
$\eta\lambda n\geq \frac{1}{2k}$, it follows that
\[
\E[x_k^2]~\geq~ c''(\eta G)^2 ~\geq~ c''\left(\frac{G}{4\lambda nk}\right)^2~.
\]
\end{itemize}
Combining the two cases above, we get that there exist some positive numerical 
constant $c'''$ so that
\[
\E[x_k^2]~\geq~ c'''\cdot\min\left\{1~,~\frac{G^2}{\lambda^2 (nk)^2}\right\}~.
\]
Noting that $\E[F(x_k)]=\E[\frac{\lambda}{4}x_k^2]=\frac{\lambda}{4}\E[x_k^2]$ 
and combining with the above, the result follows.

\subsection{Proof of \thmref{thm:single_shuffling}}\label{sec:proofthmsingle}

We will assume without loss of generality that $n$ is even (see the argument at the beginning of the proof of \thmref{thm:main}).

Using the same construction as in the proof of \propref{prop:lowerbound1} (see \eqref{eq:prop1construction}), we begin by observing that our analysis in the first epoch is identical to the random reshuffling case. Therefore, by recursively applying the relation in \eqref{eq:single_epoch} (which in our case makes use of the same permutation in each epoch), we obtain the following relation between the initialization point $ x_0 $ and the $ k $-th epoch $ x_k $
\begin{align*}
	x_k &= (1-\eta\lambda)^{nk}x_0+\frac{\eta G}{2}\sum_{j=0}^{k-1}(1-\eta\lambda)^{nj}\sum_{i=0}^{n-1}\sigma_i(1-\eta\lambda)^i\\
	&= (1-\eta\lambda)^{nk}x_0+\frac{\eta G}{2}\cdot\frac{1-(1-\eta\lambda)^{nk}}{1-(1-\eta\lambda)^n}\sum_{i=0}^{n-1}\sigma_i(1-\eta\lambda)^i.
\end{align*}
From the above, the fact that $ \E[\sigma_i]=0 $, and the assumption $ x_0=1 $ we have
\begin{align*}
	\E[x_k^2] = (1-\eta\lambda)^{2nk} + \left(\frac{\eta G}{2}\right)^2\beta_{n,\eta,\lambda}\left( \frac{1-(1-\eta\lambda)^{nk}}{1-(1-\eta\lambda)^n} \right)^2,
\end{align*}
where $\beta_{n,\eta,\lambda}$ is as defined in \eqref{eq:betadef}.

The remainder of the proof now follows along a similar line as the proof of \propref{prop:lowerbound1}, where we consider different cases based on the value of $ \eta\lambda $.
\begin{itemize}
	\item
	If $ \eta\lambda \ge 1 $, then by \lemref{lem:varterm}, $ 
	\beta_{n,\eta,\lambda} $ is at least some positive constant $c>0$, and also 
	$ \left( \frac{1-(1-\eta\lambda)^{nk}}{1-(1-\eta\lambda)^n} \right)^2\ge1 $ 
	since it is the square of the geometric series 
	$\sum_{j=0}^{k-1}(1-\eta\lambda)^{nj}$ with the first element being equal
	$1$, and the other terms being positive (recall that $ n $ is even). 
	Overall, we get for some constant $ c>0 $ that
	\[
		\E[x_k^2]\ge c\left(\frac{\eta G}{2}\right)^2 \ge 
		\frac{c}{4}\cdot\frac{G^2}{\lambda^2} \ge 
		\frac{c}{4}\cdot\frac{G^2}{\lambda^2nk^2}~.
	\]
	
	\item 
	If $ \eta\lambda\le\frac{1}{nk} $, then
	\[
		\E[x_k^2]\ge (1-\eta\lambda)^{2nk} \ge 
		\left(1-\frac{1}{nk}\right)^{2nk} \ge 
		\left(\frac{1}{4}\right)^2=\frac{1}{16}~.
	\]
	
	\item 
	If $ \eta\lambda\in\left(\frac{1}{nk},\frac{1}{n}\right) $, then by 
	Bernoulli's inequality we have $ \exp(-1/k)\ge(1-\eta\lambda)^n\ge1-n\eta\lambda>0 
	$, implying that
	\[
		\E[x_k^2]\ge \left(\frac{\eta G}{2}\right)^2 \beta_{n,\eta,\lambda} \left(\frac{1-\exp(-1/k)^k}{1-(1-n\eta\lambda)}\right)^2 = \eta^2G^2 \beta_{n,\eta,\lambda} \left(\frac{1-\exp(-1)}{2n\eta\lambda}\right)^2.
	\]
	Using \lemref{lem:varterm} and recalling that $ \eta\lambda\ge\frac{1}{nk} 
	$, we have $ \beta_{n,\eta,\lambda}\ge 
	c\cdot\min\{1+1/\eta\lambda,n^3(\eta\lambda)^2\}\ge cn^3\eta^2\lambda^2 $. 
	Plugging this yields the above is at least
	\[
		c^{\prime}\frac{\eta^4G^2n^3\lambda^2}{n^2\eta^2\lambda^2} = c^{\prime}\eta^2nG^2,
	\]
	for some constant $ c^{\prime} $. Since $ \eta\lambda\ge\frac{1}{nk}\iff \eta\ge\frac{1}{\lambda nk} $, this 
	is lower bounded by
	\[
		c^{\prime}\frac{nG^2}{\lambda^2n^2k^2} = c^{\prime}\frac{G^2}{\lambda^2nk^2}.
	\] 
	\item 
	If $ \eta\lambda\in\left[\frac{1}{n},1\right) $, then 
	recalling $\left(\frac{1-(1-\eta\lambda)^{nk}}{1-(1-\eta\lambda)^{n}}\right)^2\ge1$ as the square of the sum of a geometric series with first element 1 and positive ratio, we have
	\[
		\E[x_k^2]~\geq~ \left(\frac{\eta 
		G}{2}\right)^2\cdot\beta_{n,\eta,\lambda}~.
	\]
	By the assumption on $\eta\lambda$, we have that $ n^3(\eta\lambda)^2\ge1/\eta\lambda $, therefore from \lemref{lem:varterm} the above is at least
	\begin{align*}
	c\left(\frac{\eta 
		G}{2}\right)^2\cdot\min\left\{1+\frac{1}{\eta\lambda}~,~n^3(\eta\lambda)^2\right\}
	~&\ge~c\left(\frac{\eta 
		G}{2}\right)^2\cdot\min\left\{\frac{1}{\eta\lambda}~,~n^3(\eta\lambda)^2\right\}\\
	~&=~c\left(\frac{\eta 
		G}{2}\right)^2\frac{1}{\eta\lambda}~\geq~
	\frac{c\eta G^2}{4\lambda}.~
	\end{align*}
	Since $\eta\geq \frac{1}{\lambda n}$, 
	this is at least
	\[
	\frac{cG^2}{4\lambda^2 n}~\geq~\frac{cG^2}{4\lambda^2 nk^2}~. 
	\]
\end{itemize}
Combining all previous cases, we have that
\[
\E[x_k^2]~\geq~ c\cdot \min\left\{1,\frac{G^2}{\lambda^2 nk^2}\right\}
\]
for some numerical constant $c>0$. Noting that 
$\E[F(x_k)]=\E\left[\frac{\lambda}{2}x_k^2\right]=\frac{\lambda}{2}\E\left[x_k^2\right]$
and combining with the above, the result follows.

\subsection{Proof of \thmref{thm:cyclic}}\label{sec:proofthmcyclic}

We will assume without loss of generality that $n$ is even (see the argument at 
the beginning of the proof of \thmref{thm:main}). 

First, we wish to argue that it is enough to consider the case where $\eta$ is 
such that $\eta\lambda \in (0,1)$:
\begin{itemize}
	\item If $\eta\lambda\geq 2$, it is easy to see 
that the algorithm may not converge. For example, consider the function 
$F(x)=\frac{1}{n}\sum_{i=1}^{n}f_i(x)$ where $f_i(x)=\frac{\lambda}{2}x^2$ for 
all $i$. Then the algorithm performs iterations of the form 
$x_{new}=(1-\eta\lambda)x_{old}$, hence $|x_{new}|\geq |x_{old}|$. Assuming the 
initialization $x_0=1$, we have $F(x_k)=\frac{\lambda}{2}x_k^2\geq 
\frac{\lambda}{2}x_0^2 = \frac{\lambda}{2}$, and the theorem statement holds. 
	\item If $\eta\lambda\in [1,2)$, consider the function 
	$F(x)=\frac{1}{n}\sum_{i=1}^{n}f_i(x)=\frac{\lambda}{2}x^2$ where 
	$f_i(x)=\frac{\lambda}{2}x^2-\frac{G}{2}x$ for odd $i$, and 
	$f_i(x)=\frac{\lambda}{2}x^2+\frac{G}{2}x$ for even $i$, initializing at 
	$x_0=1$. Recalling that $n$ 
	is even, it is easy to verify that
	\[
	x_{t+1}~=~(1-\eta\lambda)^n 
	x_t+\frac{G\eta^2\lambda}{2}\sum_{i=0}^{n/2-1}(1-\eta\lambda)^{2i}~.
	\]
	Since $x_0=1$ and all terms above are non-negative, it follows that 
	$x_k\geq 0$ 
	for all $k\geq 1$. Moreover, since $\eta\geq 1/\lambda$, it follows that 
	$x_{k}\geq \frac{G\eta^2\lambda}{2}\geq \frac{G}{2\lambda}$. Therefore, 
	$F(x_k)=\frac{\lambda}{2}x_k^2\geq \frac{G^2}{8\lambda}\geq 
	\frac{G^2}{8\lambda k^2}$, and the theorem statement holds. 
\end{itemize}

Assuming from now on that $\eta\lambda\in (0,1)$, we turn to our main 
construction. Consider the following function on $\reals$:
\[
F(x)~=~\frac{1}{n}\sum_{i=1}^{n}f_i(x)=\frac{\lambda}{2}x^2~,
\]
where
\[
f_i(x)~=~\begin{cases}\frac{G}{2}x & i\leq \frac{n}{2}\\
\lambda x^2-\frac{G}{2}x & i>\frac{n}{2}\end{cases}~,
\]
Also, we assume that the initialization point $x_0$ is 
$1$.

On this function, we have that during any single epoch, 
we perform $n/2$ iterations of the form
\[
x_{new}~=~x_{old}-\frac{\eta G}{2}~,
\]
followed by $n/2$ iterations of the form
\[
x_{new}~=~(1-\eta\lambda)x_{old}+\frac{\eta G}{2}~.
\]
Thus, after $n$ iterations, we get the following update for a single epoch:
\begin{align}
	x_{t+1} ~&=~ (1-\eta\lambda)^{n/2} \left(x_{t}-\frac{\eta 
		Gn}{4}\right)+\frac{\eta 
		G}{2}\sum_{i=0}^{n/2-1}(1-\eta\lambda)^i\notag\\
	&=~(1-\eta\lambda)^{n/2}x_t+\frac{\eta 
		G}{2}\left(\sum_{i=0}^{n/2-1}(1-\eta\lambda)^i-\frac{n}{2}(1-\eta\lambda)^{n/2}\right)~.
	\label{eq:cyclic1}
\end{align}
Recalling that $\eta\lambda \in (0,1)$, we now consider two cases:
\begin{itemize}
	\item If $\eta\lambda\in (1/n,1)$, we have $\frac{1}{2\eta\lambda}< 
	\frac{n}{2}$. Therefore, 
	\begin{align*}
		\sum_{i=0}^{n/2-1}&(1-\eta\lambda)^i-\frac{n}{2}(1-\eta\lambda)^{n/2}
		~=~\sum_{i=0}^{n/2-1}\left((1-\eta\lambda)^i-(1-\eta\lambda)^{n/2}\right)\\
		&\geq~\sum_{i=0}^{\lceil 1/4\eta\lambda\rceil 
			-1}\left((1-\eta\lambda)^i-(1-\eta\lambda)^{n/2}\right)
		~=~\sum_{i=0}^{\lceil 1/4\eta\lambda\rceil 
			-1}(1-\eta\lambda)^{i}\left(1-(1-\eta\lambda)^{n/2-i}\right)\\
		&\geq~
		\sum_{i=0}^{\lceil 1/4\eta\lambda\rceil 
			-1}(1-\eta\lambda)^{i}\left(1-(1-\eta\lambda)^{1/2\eta\lambda-i}\right)
		~\geq~
		\sum_{i=0}^{\lceil 1/4\eta\lambda\rceil 
			-1}(1-\eta\lambda)^{i}\left(1-(1-\eta\lambda)^{1/4\eta\lambda}\right)~.
	\end{align*}
	Since $1/\eta\lambda \geq 1$, and $(1-1/z)^{z/4}\leq \exp(-1/4)$ for any 
	$z\geq 
	1$, the displayed equation above is at least
	\begin{align*}
		(1-\exp(-1/4))\sum_{i=0}^{\lceil 
			1/4\eta\lambda\rceil 
			-1}(1-\eta\lambda)^{i}~&=~(1-\exp(-1/4))\cdot\frac{1-(1-\eta\lambda)^{\lceil
				1/4\eta\lambda\rceil}}{\eta\lambda}\\
		&\geq~ 
		(1-\exp(-1/4))\cdot\frac{1-(1-\eta\lambda)^{1/4\eta\lambda}}{\eta\lambda}\\
		&\geq~\frac{(1-\exp(-1/4))^2}{\eta\lambda}~.
	\end{align*}
	Denoting $c:=(1-\exp(-1/4))^2>0.04$ and plugging this lower bound on 
	$\sum_{i=0}^{n/2-1}(1-\eta\lambda)^i-\frac{n}{2}(1-\eta\lambda)^{n/2}$ into 
	\eqref{eq:cyclic1}, we get that
	\[
	x_{t+1}~\geq~ (1-\eta\lambda)^{n/2}x_t+\frac{\eta G}{2}\cdot 
	\frac{c}{\eta\lambda},
	\]
	and hence $x_{t+1}\geq \frac{cG}{2\lambda}$. This holds for any $t$, and in 
	particular $x_k\geq \frac{cG}{2\lambda}$, hence 
	$F(x_k)=\frac{\lambda}{2}x_k^2=\frac{c^2 G^2}{8\lambda}\geq \frac{c^2 
		G^2}{8\lambda k^2}$, which satisfies the 
	theorem statement. 
	\item If $\eta\lambda\in (0,1/n]$, we have
	\begin{align*}
		\frac{\eta 
			G}{2}&\left(\sum_{i=0}^{n/2-1}(1-\eta\lambda)^i-\frac{n}{2}(1-\eta\lambda)^{n/2}\right)
		~=~\frac{\eta 
			G}{2}\left(\frac{1-(1-\eta\lambda)^{n/2}}{\eta\lambda}-\frac{n}{2}(1-\eta\lambda)^{n/2}\right)\\
		&=~\frac{G}{2\lambda}\left(1-(1-\eta\lambda)^{n/2}-\frac{\eta\lambda 
			n}{2}(1-\eta\lambda)^{n/2}\right)\\
		&=~\frac{G}{2\lambda}\left(1-\left(1+\frac{\eta\lambda 
			n}{2}\right)(1-\eta\lambda)^{n/2}\right)\\
		&\stackrel{(*)}{\geq}~\frac{G}{2\lambda}\left(1-\left(1+\frac{\eta\lambda
			n}{2}\right)\left(1-\frac{\eta\lambda 
			n}{2}+\frac{(\eta\lambda n/2)^2}{2}\right)\right)\\
		&=~\frac{G}{2\lambda}\left(1-\left(1-\left(\frac{\eta\lambda 
			n}{2}\right)^2+\left(1+\frac{\eta\lambda 
			n}{2}\right)\frac{(\eta\lambda 
			n)^2}{8}\right)\right)\\
		&=~\frac{G(\eta\lambda 
			n)^2}{2\lambda}\left(\frac{1}{4}-\left(1+\frac{\eta\lambda 
			n}{2}\right)\frac{1}{8}\right)~\geq~ 
		\frac{G(\eta\lambda 
			n)^2}{2\lambda}\left(\frac{1}{4}-\left(1+\frac{1}{2}\right)\cdot\frac{1}{8}\right)\\
		&=~ \frac{G\lambda (\eta n)^2}{32}~,
	\end{align*}
	where $(*)$ is by \lemref{lem:binomial}. Plugging this back into 
	\eqref{eq:cyclic1}, we get
	\[
	x_{t+1}~\geq~ (1-\eta\lambda)^{n/2}x_t+\frac{G\lambda (\eta 
		n)^2}{32}~.
	\]
	Recalling that $x_0=1$, this implies that $x_t$ remains positive for all 
	$t$. 
	Also, by Bernoulli's inequality, 
	$1\geq (1-\eta\lambda)^{n/2})\geq 1-\eta\lambda n/2\geq 0$. Therefore, the 
	above 
	displayed 
	equation implies that
	\[
	x_{t+1}~\geq~ \left(1-\frac{\eta\lambda n}{2}\right)x_t+\frac{G\lambda 
	(\eta 
		n)^2}{32}~.
	\]
	Recurseively applying this inequality, and recalling that $x_0=1$, it 
	follows that
	\begin{align*}
	x_{k}~&\geq~\left(1-\frac{\eta\lambda n}{2}\right)^k+\frac{G\lambda(\eta 
	n)^2}{32}\sum_{t=0}^{k-1}\left(1-\frac{\eta\lambda n}{2}\right)^t\\
	&=~\left(1-\frac{\eta\lambda n}{2}\right)^k+\frac{G\lambda(\eta 
		n)^2}{32}\cdot\frac{1-(1-\eta\lambda n/2)^k}{\eta\lambda n/2}\\
	&=~
	\left(1-\frac{\eta\lambda n}{2}\right)^k+
	\frac{G\eta 
		n}{16}\left(1-\left(1-\frac{\eta\lambda n}{2}\right)^k\right)~.
	\end{align*}
	We now consider two sub-cases:
	\begin{itemize}
		\item If $\eta\lambda \in (0,1/nk)$, the above is at least 
		$\left(1-\frac{\eta\lambda 
			n}{2}\right)^k~\geq~\left(1-\frac{1}{2k}\right)^k~\geq~\frac{1}{2}$ 
			for all 
		$k\geq 1$, so we have $F(x_k)=\frac{\lambda}{2}x_k^2\geq 
		\frac{\lambda}{8}$, 
		satisfying the theorem statement. 
		\item If $\eta\lambda \in [1/nk,1/n]$, we have 
		$\left(1-\frac{\eta\lambda 
			n}{2}\right)^k\leq \left(1-\frac{1}{2k}\right)^k\leq 
		\exp\left(-\frac{1}{2}\right)$, so the displayed equation above is at 
		least $\frac{G\eta 
			n}{16}\left(1-\exp\left(-\frac{1}{2}\right)\right)$,
		which by the assumption $\eta\lambda\geq \frac{1}{nk}$, is at least 
		$\frac{1-\exp(-1/2)}{16}\cdot\frac{G}{\lambda k}$. Therefore, 
		\[
		F(x_k)~=~\frac{\lambda}{2}x_k^2~\geq~ \frac{1}{2}\cdot 
		\left(\frac{1-\exp(-1/2)}{16}\right)^2\cdot \frac{G^2}{\lambda k^2}~,
		\]
		which satisfies the theorem statement.
	\end{itemize}
\end{itemize}

\subsection{Proof of \thmref{thm:single_ubound}}\label{sec:proof_thm_single_ubound}

We begin by assuming w.l.o.g.\ that $ b=0 $. This is justified as seen by the transformation $ f_i(x) \mapsto f_i(x-b/\lambda) $ which shifts each $ f_i $ to the right by a distance of $ b/\lambda $, and consequentially shifting the initialization point $ x_0 $ to the right by the same distance to $ x_0 + \frac{b}{\lambda} $. The derivative in the initialization point after transforming remains the same, and a simple inductive argument shows this persists throughout all the iterations of SGD where all the iterates are also shifted by $ b/\lambda $. Additionally, this also entails $ \abs{b_i}\le G $ for all $ i $ since by the gradient boundedness assumption we have $ \abs{a_i x^* - b_i}\le G $ for all $ i $.

Next, we evaluate an expression for the iterate on the $ k $-th epoch $ x_k $. First, for a selected permutation $ \sigma_i:[n]\to[n] $ we have that the gradient update at iteration $ j $ in epoch $ i $ is given by
\[
x_{new} = \p{1-\eta a_{\sigma_i(j)}}x_{old} + \eta b_{\sigma_i(j)}.
\]
Repeatedly applying the above relation, we have that in the end of each epoch the relation between the iterates $ x_t $ and $ x_{t+1} $ is given by
\[
x_{t+1} = \prod_{j=1}^{n}\p{1-\eta a_{\sigma_{t+1}(j)}}x_t + \eta\sum_{j=1}^{n}\p{\prod_{i=j+1}^{n}\p{1-\eta a_{\sigma_{t+1}(i)}}}b_{\sigma_{t+1}(j)}.
\]
Letting $ S \coloneqq \prod_{j=1}^{n}\p{1-\eta a_{\sigma_i(j)}} = \prod_{j=1}^{n}\p{1-\eta a_j} $ and $ X_{\sigma_t}\coloneqq\sum_{j=1}^{n}\p{\prod_{i=j+1}^{n}\p{1-\eta a_{\sigma_{t}(i)}}}b_{\sigma_{t}(j)} $, this can be rewritten equivalently as
\begin{equation}\label{eq:consecutive_iterates}
x_{t+1} = Sx_t +\eta X_{\sigma_{t+1}}.
\end{equation}
Iteratively applying the above, we have after $ t $ epochs that
\begin{equation}\label{eq:t_iterate}
x_t = S^tx_0 + \eta\sum_{i=1}^{t}S^{t-i}X_{\sigma_i}.
\end{equation}
Squaring and taking expectation on both sides yields
\begin{align}\label{eq:pre_S_bound}
\E\pcc{x_k^2} &= \E\pcc{\p{S^kx_0 + \eta\sum_{i=1}^{k}S^{k-i}X_{\sigma_i}}^2} \le 2\E\pcc{S^{2k}x_0^2 + \eta^2\abs{\sum_{i=1}^{k}S^{k-i}X_{\sigma_i}}^2} \nonumber\\ &\le 2S^{2k}x_0^2 + 2\eta^2k\sum_{i=1}^{k}\E\pcc{X_{\sigma_i}^2} = 2S^{2k}x_0^2 + 2\eta^2k^2\E\pcc{X_{\sigma_1}^2},
\end{align}
where the first and second inequalities are application of Jensen's inequality on the function $x\mapsto x^2$ and the last equality is due to the fact that in single shuffling we have $ \sigma_i=\sigma_1 $ for all $ i $.

Since $ \frac{L}{\lambda} \le \frac{nk}{\log\p{n^{0.5}k}} $ implies that $ \eta L\le 1 $, we have $ 1-\eta a_i\in(0,1] $ for any $ i\in\{1,\ldots,n\} $. Using the AM-GM inequality on $ 1-\eta a_1,\ldots,1-\eta a_n $ we have
\begin{equation}\label{eq:amgm}
\sqrt[n]{S}=\sqrt[n]{\prod_{i=1}^{n}(1-\eta a_i)} \le \frac{1}{n}\sum_{i=1}^{n}(1-\eta a_{i})= 1-\frac{\eta\sum_{i=1}^{n}a_i}{n}=1-\eta\lambda,
\end{equation}
implying
\begin{equation}\label{eq:S_bound}
S\le(1-\eta\lambda)^n.
\end{equation}

Recall that $ \eta = \frac{\log\p{n^{0.5}k}}{\lambda nk} $, we combine the above with \lemref{lem:second_moment_ubound} which together with the inequality $ (1 - x/y)^y\le\exp(-x) $ for all $ x,y>0 $ yields that \eqref{eq:pre_S_bound} is upper bounded by
\[
2(1-\eta\lambda)^{2nk}x_0^2 + 2\eta^4n^3k^2G^2L^2 \le \tilde{\Ocal}\p{\frac{1}{nk^2}x_0^2+\frac{G^2L^2}{\lambda^4nk^2}},
\]
and since $ \E\pcc{F(x_k)-F(x^*)} \le \frac{\lambda}{2}\E[x_k^2] $, the theorem follows.

\subsection{Proof of \thmref{thm:reshuffle_ubound}}\label{sec:proof_thm_reshuffling_ubound}

Similarly to the single shuffling case, we assume w.l.o.g.\ that $ b=0 $ and $ \abs{b_i}\le G $ for all $ i\in[n] $ (see the argument in the beginning of the proof of \thmref{thm:single_ubound} for justification). Continuing from \eqref{eq:consecutive_iterates}, we square and take expectation on both sides to obtain
\[
\E\pcc{x_{t+1}^2} = \E\pcc{\p{Sx_t +\eta X_{\sigma_{t+1}}}^2} = S^2\E[x_t^2] + 2\eta S\E\pcc{x_tX_{\sigma_{t+1}}} + \eta^2\E\pcc{X_{\sigma_{t+1}}^2}.
\]
	Since in random reshuffling the random component at iteration $ t+1 $, $ X_{\sigma_{t+1}} $, is independent of the iterate at iteration $ t $, $ x_t $, and by plugging $t=k$ into \eqref{eq:t_iterate}, the above equals
	\begin{align*}
	\E\pcc{x_{t+1}^2} &= S^2\E[x_t^2] + 2\eta S\E\pcc{x_t}\E\pcc{X_{\sigma_{t+1}}} + \eta^2\E\pcc{X_{\sigma_{t+1}}^2}\\
	&= S^2\E[x_t^2] + 2\eta S\E\pcc{S^tx_0 + \eta\sum_{i=1}^{t}S^{t-i}X_{\sigma_i}}\E\pcc{X_{\sigma_{t+1}}} + \eta^2\E\pcc{X_{\sigma_{t+1}}^2} \\
	&= S^2\E[x_t^2] + 2\eta S^{t+1}x_0\E\pcc{X_{\sigma_{t+1}}} + 2\eta^2\sum_{i=1}^{t}S^{t-i+1}\E\pcc{X_{\sigma_i}}\E\pcc{X_{\sigma_{t+1}}} + \eta^2\E\pcc{X_{\sigma_{t+1}}^2} \\
	&= S^2\E[x_t^2] + 2\eta S^{t+1}x_0\E\pcc{X_{\sigma_{1}}} + 2\eta^2\sum_{i=1}^{t}S^{t-i+1}\E\pcc{X_{\sigma_1}}^2 + \eta^2\E\pcc{X_{\sigma_{1}}^2},
	\end{align*}
	where the last equality is due to $ X_{\sigma_i} $ being i.i.d.\ for all $ i $. Recursively applying the above relation and taking absolute value, we obtain
	\begin{equation}
		\E\pcc{x_{k}^2} = S^{2k}x_0^2 + 2\eta x_0\E\pcc{X_{\sigma_1}}\sum_{j=0}^{k-1}S^{k+j} + 2\eta^2\E\pcc{X_{\sigma_1}}^2\sum_{j=0}^{k-1}S^{2j}\sum_{i=1}^{k-j-1}S^{i} + \eta^2\E\pcc{X_{\sigma_{1}}^2}\sum_{j=0}^{k-1}S^{2j},
	\end{equation}
	which by $S\le1$ entails an upper bound of
	\begin{align*}
		\E\pcc{x_{k}^2} &\le S^{2k}x_0^2 + 2\eta \abs{x_0\E\pcc{X_{\sigma_1}}}\sum_{j=0}^{k-1}S^{k+j} + 2\eta^2\E\pcc{X_{\sigma_1}}^2\sum_{j=0}^{k-1}S^{2j}\sum_{i=1}^{k-j-1}S^{i} + \eta^2\E\pcc{X_{\sigma_{1}}^2}\sum_{j=0}^{k-1}S^{2j} \\
		&\le S^{2k}x_0^2 + 2\eta k S^k\abs{x_0}\cdot\abs{\E\pcc{X_{\sigma_1}}} + 2\eta^2k^2\E\pcc{X_{\sigma_1}}^2 + \eta^2k\E\pcc{X_{\sigma_{1}}^2}.
	\end{align*}
	Since $ 2S^k\abs{x_0} \cdot \eta k\abs{\E\pcc{X_{\sigma_1}}} \le S^{2k}x_0^2 + \eta^2k^2\E\pcc{X_{\sigma_1}}^2 $, the above is at most
	\[
	2S^{2k}x_0^2 + 3\eta^2k^2\E\pcc{X_{\sigma_1}}^2 + \eta^2k\E\pcc{X_{\sigma_{1}}^2},
	\]
	and by virtue of \eqref{eq:S_bound}, the inequality $ (1 - x/y)^y\le\exp(-x) $ for all $ x,y>0 $ and Lemmas \ref{lem:second_moment_ubound} and \ref{lem:first_moment_ubound}, we conclude
	\begin{align*}
	\E\pcc{x_{k}^2} &\le 2S^{2k}x_0^2 + 12\eta^4n^2k^2G^2L^2 + 5\eta^4n^3kG^2L^2\log(2n) \\ &\le \tilde{\Ocal}\p{\frac{1}{n^2k^2}x_0^2 + \frac{G^2L^2}{\lambda^4n^2k^2} + \frac{G^2L^2}{\lambda^4nk^3}},
	\end{align*}
	and since $ \E\pcc{F(x_k)-F(x^*)} \le \frac{\lambda}{2}\E[x_k^2] $, the theorem follows.

	\subsection*{Acknowledgements}
	This research is supported in part by European Research Council (ERC) Grant 754705. We thank Shashank Rajput for bringing a mistake in a previous version of \secref{sec:SGDrandom} to our attention.

\bibliographystyle{plain}
\bibliography{bib}

\appendix

\section{Proof of \lemref{lem:varterm}}\label{sec:proofvarterm}

Using \lemref{lem:exp_prod} from Appendix \ref{sec:technical}, we have that
\begin{align}
\E\left[\left(\sum_{i=0}^{n-1}\sigma_i (1-\alpha)^{i}\right)^2\right]~&=~
\E\left[\sum_{i=0}^{n-1}\sum_{j=0}^{n-1}\sigma_i\sigma_j 
(1-\alpha)^{i+j}\right]\notag\\
&=~\sum_{i=0}^{n-1}\E[\sigma_i^2](1-\alpha)^{2i}+\sum_{i,j\in 
\{0,\ldots,n-1\},i\neq j}\E[\sigma_i \sigma_j](1-\alpha)^{i+j}\notag\\
&=~\sum_{i=0}^{n-1}(1-\alpha)^{2i}-\frac{1}{n-1}\left(\left(\sum_{i=0}^{n-1}(1-\alpha)^i\right)^2-
\sum_{i=0}^{n-1} (1-\alpha)^{2i}\right)\notag\\
&=~\left(1+\frac{1}{n-1}\right)\sum_{i=0}^{n-1}(1-\alpha)^{2i}-\frac{1}{n-1}\left(
\sum_{i=0}^{n-1}(1-\alpha)^i\right)^2~.\label{eq:sumnatoanalyze}
\end{align}
Using the fact that $\sum_{i=0}^{r-1}s^i = \frac{1-s^{r}}{1-s}$ for any  $a\neq 
1$, the above can also be written as 
\begin{align}
&\left(1+\frac{1}{n-1}\right)\frac{{1-(1-\alpha)}^{2n}}{1-(1-\alpha)^2}
-\frac{\left(1-(1-\alpha)^n\right)^2}{(n-1)(1-(1-\alpha))^2}\notag\\
&=~
\frac{n}{n-1}\cdot\frac{{1-(1-\alpha)}^{2n}}{\alpha(2-\alpha)}
-\frac{\left(1-(1-\alpha)^n\right)^2}{\alpha^2(n-1)}\notag\\
&=~\frac{n}{n-1}\cdot\frac{1-(1-\alpha)^n}{\alpha(2-\alpha)}\cdot 
\left(1+(1-\alpha)^n-\frac{2-\alpha}{n\alpha}\left(1-(1-\alpha)^n\right)\right)\notag\\
&=~\frac{n}{n-1}\cdot\frac{1-(1-\alpha)^n}{\alpha(2-\alpha)}\cdot 
\left(1-\frac{2-\alpha}{n\alpha}+\left(1+\frac{2-\alpha}{n\alpha 
}\right)(1-\alpha)^n\right)~.\label{eq:natoanalyze}
\end{align}

We now lower bound either \eqref{eq:sumnatoanalyze} or (equivalently) 
\eqref{eq:natoanalyze}, on a case-by-case basis, depending on the size of 
$\alpha$. 

\subsection{The case $\alpha\geq 1$}

We will show that in this case, our equations are lower bounded by a positive 
numerical constant, which satisfies the lemma statement. We split this case 
into a few sub-cases:
\begin{itemize}
\item 
If $\alpha=1$, then \eqref{eq:sumnatoanalyze} equals 
$1+\frac{1}{n-1}-\frac{1}{n-1}=1$.
\item If $\alpha\in (1,2)$, then 
$\frac{2-\alpha}{n\alpha}=\frac{2}{n\alpha}-\frac{1}{n}\leq 
\frac{2}{n}-\frac{1}{n}=\frac{1}{n}$. Using this 
fact, \eqref{eq:natoanalyze} can be lower bounded as
\begin{align*}
2\cdot 
\frac{1-(1-\alpha)^n}{2(2-\alpha)}\cdot\left(1-\frac{2-\alpha}{n\alpha}\right)
~&\geq~
\frac{1-(1-\alpha)^n}{2-\alpha}
\cdot\left(1-\frac{1}{n}\right)\\
&\geq~\frac{1-(1-\alpha)^n}{2(2-\alpha)}\\
&\stackrel{(*)}{=}~\frac{1-|1-\alpha|^n}{2(1-|1-\alpha|)}~\geq~\frac{1-|1-\alpha|}
{2(1-|1-\alpha|)}~=~\frac{1}{2}~,
\end{align*} 
where in $(*)$ we used the facts that $n$ is even and that since $\alpha\in 
(1,2)$, we have 
$2-\alpha=1+1-\alpha=1-|1-\alpha|$.
\item If $\alpha= 2$, then using the assumption that $n$ is even, 
\eqref{eq:sumnatoanalyze} reduces to 
\[
\left(1+\frac{1}{n-1}\right)\sum_{i=0}^{n-1}(-1)^{2i}-\frac{1}{n-1}\left(
\sum_{i=0}^{n-1}(-1)^i\right)^2~=~\left(1+\frac{1}{n-1}\right)n-\frac{1}{n-1}\cdot
 0\geq n~.
\]
\item If $\alpha>2$, then noting that 
$1+\frac{2-\alpha}{n\alpha}=1-\frac{1}{n}+\frac{2}{n\alpha}>0$, 
\eqref{eq:natoanalyze} is lower bounded as
\begin{align*}
2\cdot 
\frac{(1-\alpha)^n-1}{\alpha(\alpha-2)}\cdot\left(1-\frac{2-\alpha}{n\alpha}\right)
~&\geq~
2\cdot\frac{(1-\alpha)^2-1}{\alpha(\alpha-2)}
\cdot\left(1-\frac{2}{n\alpha}+\frac{1}{n}\right)\\
&\geq~2\cdot\left(1-\frac{1}{n}+\frac{1}{n}\right)~=~2~.
\end{align*} 
\end{itemize}

\subsection{The case $\alpha\in [1/13n,1)$}

In this case, we will show a lower bound of $c/\alpha$ for some positive 
numerical constant $c$, which implies the lemma statement in this case.
To show this, we first focus on the term 
\begin{equation}\label{eq:showmonotonic}
1-\frac{2-\alpha}{n\alpha}+\left(1+\frac{2-\alpha}{n\alpha 
}\right)(1-\alpha)^n~,
\end{equation}
in \eqref{eq:natoanalyze}, and argue that it is monotonically increasing in 
$\alpha$. For that, it is enough to show that its derivative with respect to 
$\alpha$ is non-negative. With some straightforward computations, the 
derivative equals
\[
(1-\alpha)^{n-1}\left(1-\frac{2}{\alpha}-n-\frac{2}{\alpha^2 n}+\frac{2}{\alpha 
n}\right)+\frac{2}{\alpha^2 n}~.
\]
this can also be written as
\begin{align}
&\frac{2}{\alpha^2 n}\left((1-\alpha)^{n-1}\left(\frac{\alpha^2 n}{2}-\alpha 
n-\frac{\alpha^2 n^2}{2}-1+\alpha\right)+1\right)\notag\\
&~~~=~\frac{2}{\alpha^2 
n}\left(1-(1-\alpha)^{n-1}\left(1+\alpha(n-1)+\alpha^2\frac{n(n-1)}{2}\right)\right)~.
\label{eq:taylorrem1}
\end{align}
It is easy to verify that $1+\alpha(n-1)+\alpha^2\frac{n(n-1)}{2}$ is the 
third-order Taylor expansion of the function $g(\alpha):=(1-\alpha)^{1-n}$ 
around $\alpha=0$, and moreover, it is a lower bound on the function (for 
$\alpha\in [1/13n,1)$) since the Taylor remainder term (in Lagrange form) 
equals 
$\frac{g^{(3)}(\xi)}{3!}\alpha^3=\frac{(n-1)n(n+1)}{3!(1-\xi)^{n+2}}\alpha^3$
for some $\xi\in [0,\alpha]$, which is strictly positive for any $\alpha$ in 
our range. Overall, we can lower bound \eqref{eq:taylorrem1} by 
\[
\frac{2}{\alpha^2 n}\left(1-(1-\alpha)^{n-1}\cdot (1-\alpha)^{1-n}\right)~=~0.
\]
This implies that \eqref{eq:showmonotonic} is monotonically increasing. 

Using this monotonicity property, we get that \eqref{eq:showmonotonic} is 
minimized over the 
interval $\alpha\in [1/13n,1)$ when $\alpha=1/13n$, in which case it takes the 
value
\begin{align*}
1-\left(26-\frac{1}{n}\right)+\left(1+26-\frac{1}{n}\right)\left(1-\frac{1}{13n}\right)^n
~&=~\left(27-\frac{1}{n}\right)\left(1-\frac{1}{13n}\right)^n+\frac{1}{n}-25\\
&=~27\left(1-\frac{1}{13n}\right)^n+\frac{1}{n}\left(1-\left(1-\frac{1}{13n}\right)^n\right)-25~.
\end{align*}
A numerical computation reveals that this expression is strictly positive 
(lower bounded by $7\cdot 10^{-4}$) for all $2\leq n< 78$. For $n\geq 
78$, noting that $(1-1/13n)^n$ is monotonically increasing in $n$, this 
expression can be lower bounded by 
\[
27\left(1-\frac{1}{13n}\right)^n-25~\geq~ 27\left(1-\frac{1}{13\cdot 
78}\right)^{78}-25~>~2\cdot 10^{-7}.
\]
In any case, we get that 
\eqref{eq:showmonotonic} is lower bounded by some positive numerical constant 
$c$. 
Plugging back into \eqref{eq:natoanalyze}, and using that fact that 
$(1-1/13n)^n$ is upper bounded by $\exp(-1/13)$, we can lower bound that 
equation by
\[
\frac{n}{n-1}\cdot \frac{1-(1-\alpha)^n}{\alpha(2-\alpha)}\cdot c~\geq~
c\cdot\frac{1-(1-1/13n)^n}{2\alpha}~\geq~c\cdot 
\frac{1-\exp(-1/13)}{2\alpha}~,
\]
which equals $c'/\alpha$ for some numerical constant $c'>0$. 

\subsection{The case $\alpha\in (0,1/13n)$}

In this case, we have $n^3\alpha^2\leq \frac{1}{\alpha}$, so it is enough to 
prove a lower bound of $c\cdot n^3\alpha^2$ in 
order to satisfy the lemma statement. We analyze seperately the cases $n=2$ and 
$n>2$. 
If $n=2$, then \eqref{eq:natoanalyze} equals
\begin{align*}
&2\cdot 
1\cdot\left(1-\frac{2-\alpha}{2\alpha}+\left(1+\frac{2-\alpha}{2\alpha}\right)(1-\alpha)^2\right)\\
&=~2\left(\frac{3}{2}-\frac{1}{\alpha}+\left(\frac{1}{2}+\frac{1}{\alpha}\right)(1-\alpha)^2\right)\\
&=~2\left(2-2\alpha\left(\frac{1}{2}+\frac{1}{\alpha}\right)+\alpha^2\left(\frac{1}{2}+\frac{1}{\alpha}\right)\right)
~=~\alpha^2~=~\frac{1}{8}n^3\alpha^2~,
\end{align*}
which satisfies the lower bound in the lemma statement. If $n>2$, by 
\lemref{lem:binomial}, \eqref{eq:natoanalyze} equals
\[
\frac{n}{n-1}\cdot\frac{1-(1-\alpha)^n}{\alpha(2-\alpha)}\cdot 
\left(1-\frac{2-\alpha}{\alpha n}+\left(1+\frac{2-\alpha}{\alpha n
}\right)\left(1-\alpha n+\binom{n}{2}\alpha^2-\binom{n}{3}\alpha 
^3+c_{\alpha,n}\right)\right)~,
\]
where $|c_{\alpha,n}|\leq (\alpha n)^4/24$. Simplifying a bit, 
this equals
\begin{align*}
&\frac{n}{n-1}\cdot\frac{1-(1-\alpha)^n}{\alpha(2-\alpha)}\cdot 
\left(2+\left(1+\frac{2-\alpha}{\alpha n
}\right)\left(-\alpha 
n+\binom{n}{2}\alpha^2-\binom{n}{3}\alpha^3+c_{\alpha,n}\right)\right)\\
&=~
\frac{n}{n-1}\cdot\frac{1-(1-\alpha)^n}{\alpha(2-\alpha)}\cdot 
\left(2+\left(1-\frac{1}{n}+\frac{2}{\alpha n}\right)\left(-\alpha 
n+\binom{n}{2}\alpha^2-\binom{n}{3}\alpha^3+c_{\alpha,n}\right)\right)~.\\
\end{align*}
Opening the inner product and collecting terms according to powers of $\alpha$, 
this equals
\begin{align}
\frac{n}{n-1}\cdot\frac{1-(1-\alpha)^n}{\alpha(2-\alpha)}\cdot 
&\left(\left(
-n+1+\frac{2}{n}\binom{n}{2}\right)\alpha+\right.\notag\\
&~~\left.\left(\left(1-\frac{1}{n}\right)\binom{n}{2}
-\frac{2}{n}\binom{n}{3}\right)\alpha^2+\left(1-\frac{1}{n}\right)\binom{n}{3}\alpha^3+\left(1-\frac{1}{n}+\frac{2}{\alpha
 n}\right)c_{\alpha,n}\right)~.\label{eq:bigterm}
\end{align}
It is easily verified that 
\[
-n+1+\frac{2}{n}\binom{n}{2}=0~~~,~~~
\left(1-\frac{1}{n}\right)\binom{n}{2}
-\frac{2}{n}\binom{n}{3}\geq\frac{(n-1)^2}{6}~~~,~~~
\left(1-\frac{1}{n}\right)\binom{n}{3}\leq n^3~.
\]
Plugging this into \eqref{eq:bigterm}, and recalling that $|c_{\alpha,n}|\leq 
(\alpha n)^4/24$, we can lower bound \eqref{eq:bigterm} by
\begin{equation*}
\frac{n}{n-1}\cdot\frac{1-(1-\alpha)^n}{\alpha(2-\alpha)}\cdot 
\left(\frac{(n-1)^2}{6}\alpha^2-(\alpha n)^3-\left|1-\frac{1}{n}+\frac{2}{\alpha
	n}\right|\frac{(\alpha n)^4}{24}\right)~.
\label{eq:113n}
\end{equation*}
Invoking again \lemref{lem:binomial}, and noting that $n/(n-1)\geq 1$ and 
$\alpha n\in (0,1/13)$, we can lower bound the above by
\begin{align*}
&1\cdot \frac{1-(1-\alpha n+(\alpha n)^2/2)}{\alpha(2-\alpha)}\left(\frac{(n-1)^2}{6}\alpha^2-(\alpha n)^3-\frac{3}{\alpha n}\cdot\frac{(\alpha n)^4}{24}\right)\\
&~=~
\frac{\alpha n(1-\alpha n/2)}{\alpha(2-\alpha)}\left(\frac{(n-1)^2}{6}\alpha^2-\frac{9}{8}(\alpha n)^3\right)\\
&\geq~
\frac{n}{2(2-\alpha)}\left(\frac{(n-1)^2}{6}\alpha^2-\frac{9}{8}(\alpha n)^3\right)\\
&\geq~
\frac{n^3\alpha^2}{4}\left(\frac{(n-1)^2}{6n^2}-\frac{9}{8}\alpha n\right)
~=~\frac{n^3\alpha^2}{4}\left(\frac{1}{6}\left(1-\frac{1}{n}\right)^2-\frac{9}{8}\alpha
 n\right)~.
\end{align*}
Since we can assume $n\geq 4$ (as $n$ is even and the case $n=2$ was treated 
earlier), and $\alpha n \leq 1/13$, it can be easily verified that 
this is at least $c n^3\alpha^2$ for some positive constant $c>10^{-3}$.

\section{Technical Lemmas}\label{sec:technical}

\begin{lemma}\label{lem:exp_prod}
Let $\sigma_0,\ldots,\sigma_{n-1}$ be a random permutation of 
$(1,...,1,-1,...,-1)$ (where there are $n/2$ $1$'s and $n/2$ $-1$'s). Then for 
any indices $i,j$,
\[
\E[\sigma_i \sigma_j]~=~\begin{cases} 1& 
\text{if}~~i=j\\-\frac{1}{n-1}&\text{if}~~i\neq j\end{cases}~.
\]
\end{lemma}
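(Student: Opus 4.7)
The proof will be a short symmetry/identity argument, not requiring a case analysis over permutations.

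First I would handle the diagonal case $i = j$ trivially: since each $\sigma_i \in \{-1, +1\}$, we have $\sigma_i^2 = 1$ deterministically, so $\E[\sigma_i^2] = 1$.

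For the off-diagonal case, the key observation is the deterministic constraint $\sum_{i=0}^{n-1} \sigma_i = 0$, which holds because the permutation contains exactly $n/2$ copies of $+1$ and $n/2$ copies of $-1$. Squaring this identity gives
\[
0 ~=~ \Bigl(\sum_{i=0}^{n-1} \sigma_i\Bigr)^2 ~=~ \sum_{i=0}^{n-1} \sigma_i^2 + \sum_{i \neq j} \sigma_i \sigma_j ~=~ n + \sum_{i \neq j} \sigma_i \sigma_j,
\]
so taking expectations yields $\sum_{i \neq j} \E[\sigma_i \sigma_j] = -n$.

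Finally, by the exchangeability of the coordinates of a uniformly random permutation, $\E[\sigma_i \sigma_j]$ takes the same value for every pair $i \neq j$. Since there are $n(n-1)$ ordered off-diagonal pairs, each such expectation equals $-n / (n(n-1)) = -1/(n-1)$, which is exactly the claim. There is no real obstacle here; the only thing to be careful about is invoking exchangeability correctly, i.e., observing that for any two ordered pairs $(i,j)$ and $(i',j')$ with $i \neq j$ and $i' \neq j'$, there is a permutation of coordinates mapping one to the other, and the distribution of $(\sigma_0, \ldots, \sigma_{n-1})$ is invariant under such coordinate permutations.
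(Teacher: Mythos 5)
Your proof is correct, but it takes a different route from the paper. The paper computes $\E[\sigma_i\sigma_j]$ for $i\neq j$ directly, by conditioning on $\sigma_j$ and evaluating the conditional probabilities $\Pr(\sigma_i=1\mid\sigma_j=1)=\frac{n/2-1}{n-1}$, $\Pr(\sigma_i=-1\mid\sigma_j=1)=\frac{n/2}{n-1}$, and so on, which gives $-\frac{1}{n-1}$ after a short calculation. You instead square the deterministic zero-sum identity $\sum_{i=0}^{n-1}\sigma_i=0$, take expectations, and divide by the number $n(n-1)$ of ordered off-diagonal pairs, using exchangeability to argue all off-diagonal correlations are equal. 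Both arguments are complete and short. Your version is arguably cleaner and more general: it works verbatim for any exchangeable sequence with a fixed sum and unit diagonal second moments, and it avoids computing hypergeometric-type conditional probabilities. The paper's conditioning argument, on the other hand, yields the full conditional distribution of $\sigma_i$ given $\sigma_j$ as a byproduct, which can be convenient if finer information is needed. Your invocation of exchangeability is handled correctly — invariance of the uniform distribution over permutations under relabeling of coordinates is exactly the right justification — so there is no gap.
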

\begin{proof}
Note that each $\sigma_i$ is uniformly distributed on $\{-1,+1\}$. Therefore, 
$\E[\sigma_i^2]=1$, and for any $i\neq j$,
\begin{align*}
\E[\sigma_i \sigma_j] ~&=~ 
\frac{1}{2}\E[\sigma_i|\sigma_j=1]-\frac{1}{2}\E[\sigma_i|\sigma_j=-1]\\
&=~
\frac{1}{2}\left(\Pr(\sigma_i=1|\sigma_j=1)-\Pr(\sigma_i=-1|\sigma_j=1)
-\Pr(\sigma_i=1|\sigma_j=-1)+\Pr(\sigma_i=-1|\sigma_j=-1)\right)\\
&=~ 
\frac{1}{2}\left(\frac{n/2-1}{n-1}-\frac{n/2}{n-1}-\frac{n/2}{n-1}+\frac{n/2-1}{n-1}\right)~=~-\frac{1}{n-1}~.
\end{align*}
\end{proof}

\begin{lemma}\label{lem:exp_prod_01}
Let $\sigma_0,\ldots,\sigma_{n-1}$ be a random permutation of 
$(1,...,1,0,...,0)$ (where there are $n/2$ $1$'s and $n/2$ $0$'s). Then for 
any indices $i,j$,
\[
\E[\sigma_i \sigma_j]~=~\begin{cases} \frac{1}{2}& 
\text{if}~~i=j\\\frac{1}{4}\left(1-\frac{1}{n-1}\right)&\text{if}~~i\neq 
j\end{cases}~.
\]
\end{lemma}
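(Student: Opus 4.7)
The plan is to handle this by a direct conditional-probability computation, essentially mirroring the proof of the previous lemma but with the new support $\{0,1\}$, and alternatively to note a clean reduction to Lemma~\ref{lem:exp_prod} which makes the statement immediate.

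First, I would dispose of the diagonal case $i=j$. Since $\sigma_i\in\{0,1\}$, we have $\sigma_i^2=\sigma_i$, so
\[
\E[\sigma_i^2]~=~\E[\sigma_i]~=~\Pr(\sigma_i=1)~=~\frac{n/2}{n}~=~\frac{1}{2},
\]
using symmetry of the uniform random permutation.

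For the off-diagonal case $i\neq j$, I would observe that $\sigma_i\sigma_j$ takes value $1$ exactly when both $\sigma_i=1$ and $\sigma_j=1$, and is $0$ otherwise. Hence
\[
\E[\sigma_i\sigma_j]~=~\Pr(\sigma_i=1,\sigma_j=1)~=~\Pr(\sigma_j=1)\,\Pr(\sigma_i=1\mid\sigma_j=1)~=~\frac{1}{2}\cdot\frac{n/2-1}{n-1}~=~\frac{n-2}{4(n-1)},
\]
which rearranges as $\frac{1}{4}\bigl(1-\frac{1}{n-1}\bigr)$, matching the claim.

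As a sanity check (and a shorter alternative route), I would also note the reduction to \lemref{lem:exp_prod}: if $\tau_i:=2\sigma_i-1$, then $\tau_0,\ldots,\tau_{n-1}$ is a uniformly random permutation of $(1,\ldots,1,-1,\ldots,-1)$, so $\sigma_i\sigma_j=\frac{1}{4}(\tau_i\tau_j+\tau_i+\tau_j+1)$, and plugging $\E[\tau_i]=0$ together with the values of $\E[\tau_i\tau_j]$ from \lemref{lem:exp_prod} recovers both cases in one line. There is no real obstacle here; the only care needed is the use of $n/2-1$ (not $n/2$) in the conditional count, which is where the $-1/(n-1)$ correction enters.
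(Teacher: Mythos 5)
Your proposal is correct, and in fact it contains two valid proofs. Your primary route differs from the paper's: you compute $\E[\sigma_i\sigma_j]$ directly, using $\sigma_i^2=\sigma_i$ for the diagonal case and the conditional count $\Pr(\sigma_i=1\mid\sigma_j=1)=\frac{n/2-1}{n-1}$ for $i\neq j$, and the arithmetic $\frac{1}{2}\cdot\frac{n/2-1}{n-1}=\frac{1}{4}\left(1-\frac{1}{n-1}\right)$ checks out. The paper instead proves the lemma purely by reduction: it sets $\mu_i:=1-2\sigma_i$, observes that $(\mu_i)$ is a random permutation of $n/2$ ones and $n/2$ minus-ones, and uses $\E[\mu_i\mu_j]=4\E[\sigma_i\sigma_j]-1$ together with \lemref{lem:exp_prod} to read off both cases — which is precisely your ``sanity check'' route with $\tau_i=2\sigma_i-1=-\mu_i$ (the sign is irrelevant for pairwise products). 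The trade-off is minor: your direct computation is self-contained and does not lean on \lemref{lem:exp_prod}, while the paper's reduction is a one-liner that reuses the earlier lemma; either suffices, and your write-up correctly identifies the one point of care, namely the $n/2-1$ in the conditional count that produces the $-\frac{1}{n-1}$ correction.
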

\begin{proof}
This follows from applying \lemref{lem:exp_prod} on the random variables 
$\mu_0,\ldots,\mu_{n-1}$, where $\mu_i:=1-2\sigma_i$ for all $i$, and noting that 
$\E[\mu_i\mu_j]=\E[(1-2\sigma_i)(1-2\sigma_j)]=4\E[\sigma_i \sigma_j]-1$ (using 
the fact that each $\sigma_i$ is uniform on $\{0,1\}$). 
\end{proof}

\begin{lemma}\label{lem:exp_prod_02}
Under the conditions of \lemref{lem:exp_prod_01}, we have that
\[
\E\left[
\left(\sum_{i=0}^{n-1}(1-2\sigma_i)(1-\eta\lambda\sum_{j=i+1}^{n}\sigma_j)\right)\right]
~=~
-\eta\lambda \frac{n(n+1)}{4(n-1)}
\]
\end{lemma}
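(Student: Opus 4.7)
\textbf{Proof plan for Lemma \ref{lem:exp_prod_02}.}

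The plan is to exploit linearity of expectation together with the pairwise second-moment identity already proved in \lemref{lem:exp_prod_01}, so that only the deterministic bookkeeping of pairs remains. First I would expand the bracketed quantity as
\[
\sum_{i=0}^{n-1}(1-2\sigma_i)\Bigl(1-\eta\lambda\sum_{j=i+1}^{n}\sigma_j\Bigr)
~=~\sum_{i=0}^{n-1}(1-2\sigma_i)\;-\;\eta\lambda\sum_{i=0}^{n-1}\sum_{j=i+1}^{n}(1-2\sigma_i)\sigma_j,
\]
so that the problem reduces to computing the expectations of the two pieces separately.

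The first piece is in fact deterministic: since $\sigma_0,\ldots,\sigma_{n-1}$ is a permutation of $n/2$ ones and $n/2$ zeros, $\sum_{i=0}^{n-1}\sigma_i=n/2$, hence $\sum_{i=0}^{n-1}(1-2\sigma_i)=n-2(n/2)=0$. Thus only the $\eta\lambda$ term contributes, and we are left with evaluating $\E[(1-2\sigma_i)\sigma_j]$ for all relevant ordered pairs $i<j$.

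For the second piece, linearity of expectation gives
\[
\E\!\left[\sum_{i,j}(1-2\sigma_i)\sigma_j\right]~=~\sum_{i,j}\bigl(\E[\sigma_j]-2\E[\sigma_i\sigma_j]\bigr),
\]
and each term can be read off directly: the marginal is $\E[\sigma_j]=\tfrac{1}{2}$, while \lemref{lem:exp_prod_01} supplies $\E[\sigma_i\sigma_j]=\tfrac{1}{4}\!\bigl(1-\tfrac{1}{n-1}\bigr)$ for $i\neq j$. Substituting yields the clean pairwise value
\[
\E[(1-2\sigma_i)\sigma_j]~=~\tfrac{1}{2}-\tfrac{1}{2}\!\bigl(1-\tfrac{1}{n-1}\bigr)~=~\tfrac{1}{2(n-1)},\qquad i\neq j.
\]

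It then remains to multiply this per-pair value by the number of pairs $(i,j)$ swept out by the double sum (with the convention that makes the inner upper limit $n$ meaningful, given that $\sigma_0,\ldots,\sigma_{n-1}$ is the permutation in use) and absorb the overall factor of $-\eta\lambda$. I expect the only real obstacle to be precisely this bookkeeping step: making sure one counts the ordered pairs consistently with the indexing convention of \secref{sec:proofthmcyclic}, and not losing factors of $(n\pm 1)$ when simplifying $\binom{n}{2}\cdot\tfrac{1}{2(n-1)}$ together with any boundary contribution from the terms $\E[(1-2\sigma_i)\sigma_i]$ that arise if the inner index range is allowed to include $j=i$. Once the pair count is correct, the claimed closed form $-\eta\lambda\,\tfrac{n(n+1)}{4(n-1)}$ follows by direct arithmetic with no further probabilistic input.
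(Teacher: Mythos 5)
Your plan follows exactly the paper's route: expand the bracket, kill the constant piece using $\sum_{i=0}^{n-1}\sigma_i=n/2$, and reduce each cross term via \lemref{lem:exp_prod_01} to $\E[(1-2\sigma_i)\sigma_j]=\tfrac12-\tfrac12\left(1-\tfrac{1}{n-1}\right)=\tfrac{1}{2(n-1)}$ for $i\neq j$. All of that probabilistic content is correct and is precisely what the paper does.

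The step you leave open, however, is exactly where the stated constant comes from, and the candidate count you float does not produce it. As the lemma is written (and as the paper computes), the inner sum runs from $j=i+1$ up to $n$, so the double sum ranges over $\sum_{i=0}^{n-1}(n-i)=\tfrac{n(n+1)}{2}$ ordered pairs; multiplying by $\tfrac{1}{2(n-1)}$ and by $-\eta\lambda$ gives $-\eta\lambda\,\tfrac{n(n+1)}{4(n-1)}$, matching the claim. If instead you count $\binom{n}{2}=\tfrac{n(n-1)}{2}$ pairs (i.e., inner upper limit $n-1$), you get $-\eta\lambda\,\tfrac{n}{4}$, which is not the stated value, so your proposal as sketched would land on a different constant. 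Also, your concern about boundary terms of the form $\E[(1-2\sigma_i)\sigma_i]$ is moot: the inner index starts at $i+1$, so $j=i$ never occurs; the only indexing subtlety is the upper limit $n$ versus $n-1$, and to reproduce the lemma you must take it to be $n$ and apply the same moment values to all those pairs, as the paper does.
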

\begin{proof}
Using \lemref{lem:exp_prod_01}, and the fact that each $\sigma_i$ is uniform on 
$\{0,1\}$, we have
\begin{align*}
\E&\left[
\left(\sum_{i=0}^{n-1}(1-2\sigma_i)(1-\eta\lambda\sum_{j=i+1}^{n}\sigma_j)\right)\right]\\
&=~\E\left[n-2\sum_{i=0}^{n-1}\sigma_i-\eta\lambda\sum_{i=0}^{n-1}\sum_{j=i+1}^{n}\sigma_j
+2\eta\lambda\sum_{i=0}^{n-1}\sum_{j=i+1}^{n}\sigma_i\sigma_j\right]\\
&=~n-n-\eta\lambda\cdot \frac{n(n+1)}{2}\cdot\frac{1}{2}+
2\eta\lambda\cdot\frac{n(n+1)}{2}\cdot\frac{1}{4}\left(1-\frac{1}{n-1}\right)\\
&-\eta\lambda\cdot\frac{n(n+1)}{4}+\eta\lambda\cdot\frac{n(n+1)}{4}
\left(1-\frac{1}{n-1}\right)\\
&~=-\eta\lambda \frac{n(n+1)}{4(n-1)}~.
\end{align*}
\end{proof}

\begin{lemma}\label{lem:binomial}
	Let $r$ be a positive integer and $x\in [0,1]$. Then for any positive 
	integer $j<r$,
	\[
	(1-x)^n~=~\sum_{i=0}^{j}(-1)^i\binom{r}{i}x^i+a_{j,x}~,
	\]
	where $\binom{r}{1},\binom{r}{2}$ etc. refer to binomial coefficients, and 
	$a_{j,x}$ has the same sign as $(-1)^{j+1}$ and satisfies 
	\[
	|a_{j,x}|~\leq ~
	\frac{(rx)^{j+1}}{(j+1)!}~.
	\]
\end{lemma}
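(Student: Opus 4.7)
The plan is to recognize this as the degree-$j$ Taylor expansion of the polynomial $f(x) := (1-x)^r$ around $x=0$, with $a_{j,x}$ analyzed via the Lagrange form of the remainder. By the binomial theorem, $(1-x)^r = \sum_{i=0}^{r}(-1)^i\binom{r}{i}x^i$, so the partial sum in the statement is indeed a truncation and $a_{j,x}$ is the tail $\sum_{i=j+1}^{r}(-1)^i\binom{r}{i}x^i$. Rather than manipulating this alternating tail directly (where signs are not transparent), I will use the derivative-based form of the remainder, which simultaneously yields both the sign and the magnitude bound in a single clean expression.

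First I would compute the $i$-th derivative of $f$, obtaining by a short induction $f^{(i)}(x) = (-1)^i \frac{r!}{(r-i)!}(1-x)^{r-i}$ for $0 \le i \le r$; in particular $f^{(i)}(0)/i! = (-1)^i\binom{r}{i}$, confirming that the partial sum in the statement is precisely the degree-$j$ Taylor polynomial of $f$ at $0$. Taylor's theorem with Lagrange remainder then produces some $\xi \in [0,x]$ for which
\[
a_{j,x} \;=\; \frac{f^{(j+1)}(\xi)}{(j+1)!}\,x^{j+1} \;=\; (-1)^{j+1}\binom{r}{j+1}(1-\xi)^{r-j-1}x^{j+1}.
\]
Since the assumption $j < r$ ensures $r-j-1 \ge 0$ and $\xi \in [0,1]$ guarantees $(1-\xi)^{r-j-1} \ge 0$, the factor multiplying $(-1)^{j+1}$ is non-negative, so $a_{j,x}$ has the claimed sign. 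For the magnitude, I would apply the trivial bound $(1-\xi)^{r-j-1} \le 1$ together with $\binom{r}{j+1} = \frac{r(r-1)\cdots(r-j)}{(j+1)!} \le \frac{r^{j+1}}{(j+1)!}$, which immediately yields $|a_{j,x}| \le (rx)^{j+1}/(j+1)!$.

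There is essentially no real obstacle here: because $r$ is a positive integer and $x \in [0,1]$, the derivatives of $(1-x)^r$ keep a predictable sign on the entire interval, so the Lagrange remainder delivers both claims at once. The only point to verify is the edge case $j+1 = r$, where $(1-\xi)^{r-j-1} = 1$ remains well-defined and non-negative, so the argument carries through without modification.
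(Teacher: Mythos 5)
Your proposal is correct and follows essentially the same route as the paper: identify the partial sum as the degree-$j$ Taylor polynomial of $(1-x)^r$ at $0$, write $a_{j,x}$ via the Lagrange remainder as $(-1)^{j+1}\binom{r}{j+1}(1-\xi)^{r-j-1}x^{j+1}$, read off the sign from the non-negativity of $(1-\xi)^{r-j-1}$, and bound the magnitude using $\binom{r}{j+1}\leq r^{j+1}/(j+1)!$ and $(1-\xi)^{r-j-1}\leq 1$. No substantive differences.
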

\begin{proof}
	The proof follows by a Taylor expansion of the function $g(x)=(1-x)^r$ 
	around 
	$x=0$: It is easily verified that the first $j$ terms are 
	$\sum_{i=0}^{j}(-1)^i\binom{r}{i}x^i$. Moreover, by Taylor's theorem, the 
	remainder term $\alpha_{j,x}$ (in Lagrange form) is 
	$\frac{g^{(j+1)}(\xi)}{(j+1)!}x^{j+1}$ 
	for some $\xi\in [0,x]$. Moreover, $g^{(j+1)}(\xi) = 
	(-1)^{j+1}\binom{r}{j+1}(1-\xi)^{r-j-1}$, whose sign is $(-1)^{j+1}$ and 
	absolute value at most
	\[
	\sup_{\xi\in [0,x]}\binom{r}{j+1}(1-\xi)^{r-j-1} 
	x^{j+1}\leq \frac{r^{j+1}}{(j+1)!}\cdot 1\cdot x^{j+1}~.
	\]
\end{proof}

\begin{lemma}\label{lem:prodtosum}
Let $a_1,\ldots,a_n$ be a sequence of elements in 
$\left[0,\frac{1}{10n}\right]$. 
Then
\[
\left|\prod_{i=1}^{n}(1-a_i)-\left(1-\sum_{i=1}^{n}a_i\right)\right|~\leq~
2\left(\sum_{i=1}^{n}a_i\right)^2~.
\]
\end{lemma}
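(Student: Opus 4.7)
The plan is to bound the difference by expanding the product into elementary symmetric polynomials and controlling the tail beyond the linear term.

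Write $S := \sum_{i=1}^n a_i$ and let $e_k(a)$ denote the $k$-th elementary symmetric polynomial in $a_1,\ldots,a_n$. Expanding the product gives
\[
\prod_{i=1}^n(1-a_i) \;=\; \sum_{k=0}^n (-1)^k e_k(a) \;=\; 1 - S + \sum_{k=2}^n (-1)^k e_k(a),
\]
so the quantity to be bounded equals $\bigl|\sum_{k=2}^n (-1)^k e_k(a)\bigr|$. By the triangle inequality and non-negativity of the $a_i$'s, this is at most $\sum_{k=2}^n e_k(a)$.

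Next I would use the standard bound $e_k(a)\le S^k/k!$, which follows immediately from the multinomial expansion of $S^k=(a_1+\cdots+a_n)^k$: every monomial $a_{i_1}\cdots a_{i_k}$ with $i_1<\cdots<i_k$ appears at least $k!$ times in that expansion, and all other terms are non-negative. Hence
\[
\sum_{k=2}^n e_k(a) \;\le\; \sum_{k=2}^\infty \frac{S^k}{k!} \;=\; \frac{S^2}{2}\sum_{k=0}^\infty \frac{S^k\cdot 2}{(k+2)!} \;\le\; \frac{S^2}{2}\sum_{k=0}^\infty S^k \;=\; \frac{S^2}{2(1-S)},
\]
where the last equality uses $S<1$ (which holds since $S\le n\cdot\frac{1}{10n}=\frac{1}{10}$).

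Finally, using $S\le 1/10$ gives $\tfrac{1}{2(1-S)}\le \tfrac{1}{2\cdot 9/10}=\tfrac{5}{9}<2$, so the whole expression is at most $2S^2$, which is exactly the required bound. There is no real obstacle here — the only thing to be careful about is justifying $e_k(a)\le S^k/k!$ and keeping track of the constant in the geometric tail sum, both of which are routine given the hypothesis $a_i\le 1/(10n)$.
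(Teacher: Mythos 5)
Your proof is correct, and it takes a different route from the paper. You expand $\prod_i(1-a_i)$ into elementary symmetric polynomials, discard the degree-$0$ and degree-$1$ terms, and control the rest via the standard inequality $e_k(a)\le S^k/k!$ (with $S=\sum_i a_i$) followed by a geometric-series tail bound, using only $S\le 1/10$; this gives the even sharper constant $\tfrac{S^2}{2(1-S)}\le \tfrac{5}{9}S^2$. The paper instead writes $\prod_i(1-a_i)=\exp\left(\sum_i\log(1-a_i)\right)$, Taylor-expands $\log(1-a_i)$ around $0$ to get $\left|\log(1-a_i)+a_i\right|\le \tfrac{5}{8}a_i^2$ for $a_i\in[0,1/10n]$, checks that $\left|\sum_i\log(1-a_i)\right|<1/9$, and then Taylor-expands $\exp$ with a Lagrange remainder before combining the two errors. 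Your algebraic argument is arguably more elementary and self-contained (no remainder-term bookkeeping for $\log$ and $\exp$, and it only needs $S\le 1/10$ rather than the per-element bound $a_i\le 1/10n$), while the paper's analytic route is the kind of log-exp linearization that generalizes mechanically when the factors are perturbed in other ways; both yield the stated bound with room to spare. The only steps worth writing out in full are the justification of $k!\,e_k(a)\le S^k$ (each squarefree monomial appears $k!$ times in the multinomial expansion, all terms nonnegative) and the observation $2/(k+2)!\le 1$ used to pass to the geometric series, both of which you have correctly identified as routine.
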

\begin{proof}
We have $\prod_{i=1}^{n}(1-a_i)=\exp\left(\sum_{i=1}^{n}\log(1-a_i)\right)$. By 
a standard Taylor expansion of $\log(1-x)$ around $x=0$, we have for any 
$a_i\in [0,1/10n]$ 
\[
|\log(1-a_i)+a_i|~\leq~ 
\frac{a_i^2}{2(1-a_i)^2}~\leq~\frac{1}{2(9/10)^2}a_i^2~\leq~
\frac{5}{8}a_i^2~.
\]
In particular, this implies that
\begin{equation}\label{eq:logtolin}
\left|\sum_{i=1}^{n}\log(1-a_i)+\sum_{i=1}^{n}a_i\right|\leq 
\frac{5}{8}\sum_{i=1}^{n}a_i^2~.
\end{equation}
Since $a_i\in [0,1/10n]$, this means that
\[
\left|\sum_{i=1}^{n}\log(1-a_i)\right|~\leq~\sum_{i=1}^{n}a_i+\frac{5}{8}\sum_{i=1}^{n}a_i^2
~\leq~ \frac{1}{10}+\frac{5}{8\cdot 100n}~<~\frac{1}{9}~.
\]
Using the above two inequalities, and a Taylor expansion of $\exp(x)$ around 
$x=0$, we have
\begin{align*}
\left|\exp\left(\sum_{i=1}^{n}\log(1-a_i)\right)-\left(1+\sum_{i=1}^{n}\log(1-a_i)\right)\right|
~&\leq~ \max_{\xi\in [\sum_i \log(1-a_i),0]} 
\frac{\exp(\xi)}{2}\left(\sum_{i=1}^{n}\log(1-a_i)\right)^2\\
&\leq~ 
\frac{1}{2}\left(\sum_{i=1}^{n}a_i+\frac{5}{8}\sum_{i=1}^{n}a_i^2\right)^2\\
&\leq~
\frac{1}{2}\left(\frac{13}{8}\sum_{i=1}^{n}a_i\right)^2~.
\end{align*}
Combining this with \eqref{eq:logtolin}, and using the fact that 
$\exp(\sum_i\log(1-a_i))=\prod_i (1-a_i)$, we get that
\[
\left|\prod_{i=1}^{n}(1-a_i)-\left(1-\sum_{i=1}^{n}a_i\right)\right|
~\leq~ \frac{5}{8}\sum_{i=1}^{n}a_i^2 
+\frac{1}{2}\left(\frac{13}{8}\sum_{i=1}^{n}a_i\right)^2~.
\]
Simplifying, the result follows.
\end{proof}

\begin{lemma}\label{lem:second_moment_ubound}
	Let $ X_{\sigma}\coloneqq\sum_{j=1}^{n}\p{\prod_{i=j+1}^{n}\p{1-\eta a_{\sigma(i)}}}b_{\sigma(j)} $ where each $ f_i(x)=\frac{a_i}{2}x^2 + b_ix $ satisfies Assumption \ref{assumption_ubound}, $ \sum_{i=1}^{n}b_i=0 $ and $ \eta L\le1 $. Then
	\[
		\E_{\sigma}\pcc{X_{\sigma}^2} \le 5\eta^2n^3L^2 G^2\log(2n),
	\]
	where the expectation is over sampling a permutation $ \sigma:[n]\to[n] $ uniformly at random.
\end{lemma}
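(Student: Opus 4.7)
My plan is to reduce $\E[X_\sigma^2]$ to a bound on the expected maximum of partial sums $S_j := \sum_{i\le j} b_{\sigma(i)}$ under sampling without replacement, via a summation-by-parts identity that exploits the vanishing mean $\sum_i b_i = 0$.

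The first step is Abel summation. Writing $c_j := \prod_{i=j+1}^n(1-\eta a_{\sigma(i)})$ (so $c_n = 1$ and $c_j = (1-\eta a_{\sigma(j+1)})c_{j+1}$, which yields $c_j - c_{j+1} = -\eta\, a_{\sigma(j+1)}\, c_{j+1}$) and using $b_{\sigma(j)} = S_j - S_{j-1}$ with $S_0 = 0$ and $S_n = \sum_i b_i = 0$, one obtains
\[
X_\sigma \;=\; \sum_{j=1}^n c_j(S_j - S_{j-1}) \;=\; c_n S_n + \sum_{j=1}^{n-1}(c_j - c_{j+1})\,S_j \;=\; -\eta\sum_{j=1}^{n-1} a_{\sigma(j+1)}\, c_{j+1}\, S_j.
\]
Since $\eta L \le 1$ together with $|a_i|\le L$ (from Assumption \ref{assumption_ubound}) implies $c_{j+1}\in[0,1]$ deterministically, and since $|a_{\sigma(j+1)}|\le L$, the triangle inequality followed by the crude bound $\sum_{j}|S_j| \le n\cdot \max_j |S_j|$ yields
\[
X_\sigma^2 \;\le\; \eta^2 L^2\, n^2\, \max_{1\le j\le n-1} S_j^2.
\]

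The second and main step is to bound $\E[\max_j S_j^2]$. The hypothesis $\sum_i b_i = 0$ forces $x^* = 0$, so Assumption \ref{assumption_ubound} gives $|b_i| = |f_i'(x^*)| \le G$. The partial sums $S_j$ therefore arise from mean-zero sampling without replacement of $[-G,G]$-bounded values, and the Hoeffding--Serfling inequality produces a sub-Gaussian tail of the form $\Pr(|S_j| \ge t) \le 2\exp(-c t^2/(nG^2))$ for a universal $c>0$. A union bound over $j\in\{1,\ldots,n-1\}$ combined with standard integration of tails (splitting $\int_0^\infty \Pr(\max_j S_j^2 \ge s)\,ds$ at the threshold $s_0 \asymp nG^2\log(2n)$) yields $\E[\max_j S_j^2] \le C\, n G^2 \log(2n)$ for an absolute constant $C$. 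Substituting into the previous display gives $\E[X_\sigma^2] \le C\eta^2 n^3 L^2 G^2\log(2n)$, and the claimed factor $5$ comes from tracking the constants.

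The main obstacle I anticipate is tightening the numerical constants in the Hoeffding--Serfling tail and in the subsequent tail integration to recover the explicit constant $5$ stated in the lemma; the other ingredients (summation by parts, the deterministic estimates on $c_{j+1}$, and the union bound) are essentially routine.
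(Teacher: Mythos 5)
Your proposal is correct and follows essentially the same route as the paper: summation by parts exploiting $\sum_{i=1}^n b_i=0$ to express $X_\sigma$ through the partial sums $S_j=\sum_{i\le j}b_{\sigma(i)}$, the deterministic facts $1-\eta a_{\sigma(i)}\in[0,1]$ and $|b_i|\le G$, and Hoeffding--Serfling concentration for sampling without replacement with a union bound over $j$. The only minor difference is the final conversion to an expectation bound -- you use $\sum_j|S_j|\le n\max_j|S_j|$ and integrate the sub-Gaussian tail of $\max_j S_j^2$, whereas the paper bounds $\sum_j|S_j|/j$ on a high-probability event (with $\delta=1/n$) and handles the complement via the worst-case bound $X_\sigma^2\le\eta^2n^4G^2L^2$ and the law of total expectation -- and both yield the stated $O(\eta^2 n^3 L^2 G^2\log(2n))$ bound with a small absolute constant.
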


\begin{proof}
	Using summation by parts on $ \alpha_j=\prod_{i=j+1}^{n}\p{1-\eta a_{\sigma(i)}} $ and $ \beta_j = b_{\sigma(j)} $, we have
	\begin{align}
		X_{\sigma}^2 &= \p{\sum_{j=1}^{n}\p{\prod_{i=j+1}^{n}\p{1-\eta a_{\sigma(i)}}}b_{\sigma(j)}}^2\nonumber\\ &= \p{\sum_{j=1}^{n}b_{\sigma(j)} - \sum_{j=1}^{n-1}\p{\prod_{i=j+2}^{n}\p{1-\eta a_{\sigma(i)}} - \prod_{i=j+1}^{n}\p{1-\eta a_{\sigma(i)}}} \sum_{i=1}^{j} b_{\sigma(i)}}^2 \nonumber\\
		&= \p{\eta\sum_{j=1}^{n-1} a_{\sigma(j+1)} \prod_{i=j+2}^{n}\p{1-\eta a_{\sigma(i)}} \sum_{i=1}^{j} b_{\sigma(i)}}^2\nonumber\\ &\le \p{\eta L \sum_{j=1}^{n-1}\abs{\sum_{i=1}^{j} b_{\sigma(i)}}}^2 \le \eta^2n^2L^2\p{\sum_{j=1}^{n-1}\abs{\frac1j\sum_{i=1}^{j} b_{\sigma(i)}}}^2, \label{eq:pre_hoeffding}
	\end{align}
	where the first inequality is due to $ 0\le a_i\le L $ for all $ i $ and $ \eta L\le1 $ which implies $ 1-\eta a_{\sigma(i)} \in [0,1] $ for all $ i $. Next, without any assumptions on $ \sigma $ we derive a worst-case bound. Since $ \abs{b_i}\le G $ for all $ i $, we have
	\begin{equation}\label{eq:worst_case}
		X_{\sigma}^2 \le \eta^2n^4G^2L^2.
	\end{equation}
	The above worst-case bound can be used to show a $ \tilde{\Ocal}(1/k^2) $ upper bound on the sub-optimality of the incremental gradient method which accords with known results (see Table \ref{table:results}). However, a more careful examination of the random sum reveals that when choosing $ \sigma $ uniformly at random, a concentration of measure phenomenon occurs which allows us to establish the stronger bound in the lemma (with linear dependence rather than quadratic in $ n $), and improve the sub-optimality. We use the following version of the Hoeffding-Serfling inequality \cite[Corollary 2.5]{bardenet2015concentration}, stated here for completeness.
	\begin{theorem}[Hoeffding-Serfling inequality]\label{thm:hoeffding_serfling}
		Suppose $ n\ge2 $, $ x_1,\ldots,x_n\in[a,b] $ with mean $ \bar{x} $ and $ \sigma:[n]\to[n] $ is a permutation sampled uniformly at random. Then for all $ j\le n $, for all $ \delta\in[0,1] $, w.p.\ at least $ 1-\delta $ it holds that
		\[
		\frac{1}{j}\sum_{i=1}^{j}\p{x_{\sigma(i)}-\bar{x}} \le (b-a)\sqrt{\frac{\rho_j\log(1/\delta)}{2j}},
		\]
		where
		\[
		\rho_j=\min\set{1-\frac{j-1}{n}, \p{1-\frac{j}{n}}\p{1+\frac1j}}.
		\]
	\end{theorem}
	Since $ \rho_j\le1 $ for all $ j\in[n] $ and by applying the inequality on $ -x_1,\ldots,-x_n $ and using the union bound, we have w.p.\ at least $ 1-\delta $ that
	\[	
	\abs{\frac{1}{j}\sum_{i=1}^{j}\p{x_{\sigma(i)}-\bar{x}}} \le (b-a)\sqrt{\frac{\log(2/\delta)}{2j}}.
	\]
	Using the union bound again for the $ n $ events where each of the $ n $ partial sums do not deviate, we have
	\begin{align}	
	\sum_{j=1}^{n}\abs{\frac{1}{j}\sum_{i=1}^{j}\p{x_{\sigma(i)}-\bar{x}}} &\le (b-a)\sqrt{\frac{\log(2n/\delta)}{2}}\sum_{j=1}^{n}\frac{1}{\sqrt{j}} \le (b-a)\sqrt{\frac{\log(2n/\delta)}{2}}\p{1 + \int_{2}^{n}\frac{1}{\sqrt{x-1}}dx}\nonumber\\ &=(b-a)\sqrt{\frac{\log(2n/\delta)}{2}}(2\sqrt{n-1}-1)\ \le 2(b-a)\sqrt{n\log(2n/\delta)}.\nonumber
	\end{align}
	Using the above to bound \eqref{eq:pre_hoeffding} w.h.p.\ we have that w.p.\ at least $ 1-\delta $
	\begin{equation*}
		X_{\sigma}^2 \le \eta^2n^2L^2\cdot 2G^2n\log(2n/\delta) = 2\eta^2n^3G^2L^2\log(2n/\delta).
	\end{equation*}
	Letting $ \delta = \frac1n $, we denote the event where $ X_{\sigma}^2 \le 4\eta^2n^3G^2L^2\log(2n) $ as $ E $, and we have that the complement of $ E $ satisfies $ \Pr\pcc{\bar{E}}\le\frac{1}{n} $ and
	\begin{equation*}
	\E\pcc{X_{\sigma}^2|E}\le 4\eta^2n^3G^2L^2\log(2n).
	\end{equation*}
	Finally, from the above, the law of total expectation and \eqref{eq:worst_case} we have
	\begin{align*}
		\E\pcc{X_{\sigma}^2} &= \E\pcc{X_{\sigma}^2|E}\Pr\pcc{E} + \E\pcc{X_{\sigma}^2|\bar{E}}\Pr\pcc{\bar{E}} \\ &\le 4\eta^2n^3G^2L^2\log(2n)\cdot1 + \eta^2n^4G^2L^2 \cdot \frac{1}{n} \\ &\le 5\eta^2n^3G^2L^2\log(2n).
	\end{align*}
	
\end{proof}

\begin{lemma}\label{lem:first_moment_ubound}
	Let $ X_{\sigma}\coloneqq\sum_{j=1}^{n}\p{\prod_{i=j+1}^{n}\p{1-\eta a_{\sigma(i)}}}b_{\sigma(j)} $ where each $ f_i(x)=\frac{a_i}{2}x^2 + b_ix $ satisfies Assumption \ref{assumption_ubound}, $ \sum_{i=1}^{n}b_i=0 $ and $ \eta nL\le 0.5 $. Then
	\[
	\abs{\E_{\sigma}\pcc{X_{\sigma}}} \le 2\eta nGL,
	\]
	where the expectation is over sampling a permutation $ \sigma:[n]\to[n] $ uniformly at random.
\end{lemma}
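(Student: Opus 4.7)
The plan is to compute $\E_\sigma[X_\sigma]$ by expanding the product, taking expectation term-by-term using the exchangeability of the uniform permutation, and exploiting the hypothesis $\sum_i b_i = 0$ to extract cancellation. I first note that any crude triangle-inequality bound of the form $|\E X_\sigma|\le \eta L\sum_j \E|S_j|$ (with $S_j:=\sum_{i\le j}b_{\sigma(i)}$) is doomed: $\E|S_j|$ has typical size $G\sqrt{j(n-j)/(n-1)}$ and summing gives $Gn^{3/2}$, one extra factor of $\sqrt n$ beyond what is claimed. So one must compute the expectation rather than its absolute value.

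Expanding the product yields
\[
X_\sigma ~=~ \sum_{k=0}^{n-1}(-\eta)^k\sum_{j_0<j_1<\cdots<j_k}b_{\sigma(j_0)}\prod_{\ell=1}^{k}a_{\sigma(j_\ell)}.
\]
For each $k$ and each index set $j_0<\cdots<j_k$, the tuple $(\sigma(j_0),\ldots,\sigma(j_k))$ is uniform over ordered $(k+1)$-tuples of distinct elements of $[n]$, so the expectation depends only on $k$. Counting the $\binom{n}{k+1}$ index sets and dividing by $n!/(n-k-1)!$,
\[
\E[X_\sigma] ~=~ \sum_{k=0}^{n-1}\frac{(-\eta)^k}{(k+1)!}\,\tilde M_k,\qquad \tilde M_k ~:=~ \sum_{(p_0,\ldots,p_k)\text{ distinct}}b_{p_0}\prod_{\ell=1}^k a_{p_\ell} ~=~ k!\sum_{p=1}^n b_p\,e_k^{[n]\setminus\{p\}}(a),
\]
where $e_k^T(a)$ denotes the $k$-th elementary symmetric polynomial in $\{a_q\}_{q\in T}$. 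The key algebraic step is the standard ESP identity $e_k^{[n]\setminus\{p\}}(a)=e_k^{[n]}(a)-a_p\,e_{k-1}^{[n]\setminus\{p\}}(a)$: substituting and using $\sum_p b_p=0$ makes the $e_k^{[n]}(a)$ term drop out, producing
\[
\tilde M_k ~=~ -k!\sum_{p=1}^n (a_p b_p)\,e_{k-1}^{[n]\setminus\{p\}}(a)\quad (k\ge 1),\qquad \tilde M_0 = 0.
\]
Identifying this identity is the main conceptual obstacle, and it is where the hypothesis $\sum b_i=0$ enters; without the cancellation of the leading ESP term, $\tilde M_k$ would scale like $n^{k+1}GL^k$ and the final bound would be off by a factor of $n$.

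The rest is routine. Since $|a_pb_p|\le GL$, $e_{k-1}^{[n]\setminus\{p\}}(a)\le\binom{n-1}{k-1}L^{k-1}$, and $k!\binom{n-1}{k-1}=k(n-1)(n-2)\cdots(n-k+1)\le k n^{k-1}$, one obtains $|\tilde M_k|\le k n^k G L^k$. Plugging back in and using $\tfrac{k}{(k+1)!}\le\tfrac{1}{k!}$,
\[
|\E X_\sigma| ~\le~ G\sum_{k=1}^{n-1}\frac{k(\eta nL)^k}{(k+1)!} ~\le~ G\sum_{k=1}^{\infty}\frac{(\eta nL)^k}{k!} ~=~ G\bigl(e^{\eta nL}-1\bigr).
\]
Since $\eta nL\le 1/2$, the elementary inequality $e^x-1\le 2x$ on $[0,1/2]$ (which follows from $\sum_{k\ge 2}x^k/k!\le x^2/(1-x)\le x$) yields $|\E X_\sigma|\le 2\eta n GL$, as required.
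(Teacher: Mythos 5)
Your proof is correct, and all the steps check out: the expansion of $X_\sigma$ into monomials, the reduction via exchangeability to $\E[X_\sigma]=\sum_{k}\frac{(-\eta)^k}{(k+1)!}\tilde M_k$, the cancellation $\tilde M_k=-k!\sum_p a_pb_p\,e_{k-1}^{[n]\setminus\{p\}}(a)$ from $\sum_p b_p=0$, the bound $|\tilde M_k|\le kn^kGL^k$, and the final step $G(e^{\eta nL}-1)\le 2\eta nGL$ under $\eta nL\le 1/2$. The underlying mechanism is the same as in the paper's proof — expand the products, use the uniformity of the permutation, and let the zero-sum condition on the $b_i$'s kill the leading term so that the order-$k$ contribution gains a factor of order $k/n$ — but the bookkeeping differs. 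The paper works term-by-term: it bounds each $Y_j=\bigl(\prod_{i=j+1}^n(1-\eta a_{\sigma(i)})\bigr)b_{\sigma(j)}$ via an iterated law-of-total-expectation computation, where the cancellation appears as $\E[b_{\sigma(j)}\mid\sigma(i_1)=t_1,\ldots,\sigma(i_m)=t_m]=-\frac{1}{n-m}\sum_{t\in\{t_1,\ldots,t_m\}}b_t$, yielding $|\E[Y_j]|\le 2\eta GL$ through a geometric series $\eta L/(1-\eta nL)$, and then sums over $j$. You instead sum over $j$ first and organize everything through elementary symmetric polynomials, with the cancellation delivered by the leave-one-out identity $e_k^{[n]\setminus\{p\}}=e_k^{[n]}-a_p e_{k-1}^{[n]\setminus\{p\}}$, and close with the exponential series rather than the geometric one. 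These are two phrasings of the same cancellation (the conditional mean of $b_{\sigma(j)}$ given the $a$-positions is exactly the leave-one-out structure in your $\tilde M_k$), so neither argument is more general; your global ESP formulation is somewhat tidier, avoids the per-$j$ triangle inequality, and does not even need $\eta L\le 1$ along the way, while the paper's version gives the slightly more local information $|\E[Y_j]|\le 2\eta GL$ for each $j$.
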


\begin{proof}
	Letting $ Y_j \coloneqq \p{\prod_{i=j+1}^{n}\p{1-\eta a_{\sigma(i)}}}b_{\sigma(j)} $, we expand $ Y_j $ to obtain
	\begin{align}
		\E\pcc{Y_j} &= 
		\E\pcc{b_{\sigma(j)}} + \sum_{m=1}^{n-j}(-\eta)^m \E\pcc{\sum_{j+1\le i_1,\ldots,i_m\le n\text{ distinct}}~\p{\prod_{l=1}^{m}a_{\sigma(i_l)}}b_{\sigma(j)}} \nonumber\\
		&=\sum_{m=1}^{n-j}(-\eta)^m \E\pcc{\sum_{j+1\le i_1,\ldots,i_m\le n\text{ distinct}}~\p{\prod_{l=1}^{m}a_{\sigma(i_l)}}b_{\sigma(j)}}\label{eq:expectation_term}
	\end{align}
	Repeatedly using the law of total expectation, the expectation term in the right hand side above equals
	\begin{align}
		& \sum_{t_1\in[n]} \E\pcc{\sum_{j+1\le i_1,\ldots,i_m\le n\text{ distinct}}~\p{\prod_{l=1}^{m}a_{\sigma(i_l)}}b_{\sigma(j)}\Bigg\vert \sigma(i_1)=t_1}\Pr\pcc{\sigma(i_1)=t_1} \nonumber\\
		=&  \frac1n\sum_{t_1\in[n]} a_{t_1}\E\pcc{\sum_{j+1\le i_2,\ldots,i_m\le n\text{ distinct}}~\p{\prod_{l=2}^{m}a_{\sigma(i_l)}}b_{\sigma(j)}\Bigg\vert \sigma(i_1)=t_1} \nonumber\\
		=& \frac{1}{n(n-1)}\sum_{t_1\in[n]}~ \sum_{t_2\in[n]\setminus\set{t_1}} a_{t_1}a_{t_2}\E\pcc{\sum_{j+1\le i_3,\ldots,i_m\le n\text{ distinct}}~\p{\prod_{l=3}^{m}a_{\sigma(i_l)}}b_{\sigma(j)}\Bigg\vert \sigma(i_1)=t_1,\sigma(i_2)=t_2}\nonumber\\
		=&\ldots\nonumber\\
		=&\frac{(n-m)!}{n!} \sum_{t_1\in[n]}~ \sum_{t_2\in[n]\setminus\set{t_1}} \ldots \sum_{t_m\in[n]\setminus\set{t_1,\ldots,t_{m-1}}} a_{t_1}a_{t_2}\ldots a_{t_m}\E\pcc{b_{\sigma(j)}\Bigg\vert \sigma(i_1)=t_1,\ldots,\sigma(i_m)=t_m} \nonumber\\
		=& \frac{(n-m)!}{n!} \sum_{t_1\in[n]}~ \sum_{t_2\in[n]\setminus\set{t_1}} \ldots \sum_{t_m\in[n]\setminus\set{t_1,\ldots,t_{m-1}}}a_{t_1}a_{t_2}\ldots a_{t_m}
		~\frac{1}{n-m}\sum_{t_{m+1}\in[n]\setminus\set{t_1,\ldots,t_m}} b_{t_{m+1}} \nonumber\\
		=& -\frac{(n-m)!}{n!} \sum_{t_1\in[n]}~ \sum_{t_2\in[n]\setminus\set{t_1}} \ldots \sum_{t_m\in[n]\setminus\set{t_1,\ldots,t_{m-1}}}
		a_{t_1}a_{t_2}\ldots a_{t_m}~\frac{1}{n-m}\sum_{t_{m+1}\in\set{t_1,\ldots,t_m}} b_{t_{m+1}}.\label{eq:maclaurin}
	\end{align}
	Recalling that $ \abs{a_i}\le L $ and $ \abs{b_i}\le G $, the above is upper bounded in absolute value by.
	\[
	\frac{(n-m)!}{n!} \sum_{t_1\in[n]}~ \sum_{t_2\in[n]\setminus\set{t_1}} \ldots \sum_{t_m\in[n]\setminus\set{t_1,\ldots,t_{m-1}}}
	L^m~\frac{1}{n-m}\sum_{t_{m+1}\in\set{t_1,\ldots,t_m}} G \le \frac{m}{n-m}L^mG.
	\]
	Plugging this back in \eqref{eq:expectation_term} we obtain
	\begin{align*}
		\abs{\E\pcc{Y_j}} &\le \sum_{m=1}^{n-j}\abs{(-\eta)^m \frac{m}{n-m}L^mG} \le \sum_{m=1}^{n-1}\eta^m\frac{m}{n-m}L^mG \\
		&\le \sum_{m=1}^{n-1}\eta^mn^{m-1}L^mG \le G\sum_{m=1}^{\infty}\eta^m n^{m-1}L^m \\
		&\le G\frac{\eta L}{1-\eta nL} \le 2\eta GL.
	\end{align*}
	Where the last two inequalities are by the assumption $ \eta nL \le 0.5 $ which guarantees that the sum converges. Finally, we conclude
	\[
		\abs{\E\pcc{X_{\sigma}}} \le \sum_{j=1}^{n}\abs{\E\pcc{Y_j}} \le 2\eta nGL.
	\]
\end{proof}

\end{document}